\def\1{\bm{1}}
\def\eps{{\epsilon}}
\def\va{{\bm{a}}}
\def\vb{{\bm{b}}}
\def\ve{{\bm{e}}}
\def\vg{{\bm{g}}}
\def\vn{{\bm{n}}}
\def\vv{{\bm{v}}}
\def\vw{{\bm{w}}}
\def\vx{{\bm{x}}}
\def\vy{{\bm{y}}}
\def\mU{{\bm{U}}}
\def\mW{{\bm{W}}}
\DeclareMathAlphabet{\mathsfit}{\encodingdefault}{\sfdefault}{m}{sl}
\SetMathAlphabet{\mathsfit}{bold}{\encodingdefault}{\sfdefault}{bx}{n}
\def\sS{{\mathbb{S}}}
\newcommand{\R}{\mathbb{R}}
\DeclareMathOperator{\sign}{sign}
\newcommand{\Rd}{\mathbb{R}^d}
\renewcommand{\S}{\mathbb{S}}
\renewcommand{\P}{\mathbb{P}}
\newcommand{\Pd}{\P^d}
\newcommand{\Sc}{\mathcal{S}}
\newcommand{\T}{\top}
\newcommand{\C}{\mathcal{C}}
\newcommand{\cR}{\mathcal{R}}
\newcommand{\M}{\mathcal{M}}
\newcommand{\Rdn}[1]{\cR\{#1\}}
\newcommand{\Rdns}[1]{\cR^*\{#1\}}
\newcommand{\F}{\mathcal{F}}
\newcommand{\SR}{\S^{d-1}\times \R}
\newcommand{\Rbar}[1]{\overline{R}(#1)}
\newcommand{\Rbarone}[1]{\overline{R}_1(#1)}
\newcommand{\Rnorm}[1]{\left\|#1\right\|_\cR}
\renewcommand{\eps}{\varepsilon}
\DeclareMathOperator{\Lip}{Lip}
\newcommand{\grad}{\nabla}
\newcommand{\htil}{\widetilde{h}}
\newcommand{\ie}{\emph{i.e.},~}
\newcommand{\eg}{\emph{e.g.},~}
\newtheorem{myProp}{Proposition}
\newtheorem{myDef}{Definition}
\newtheorem{myLem}{Lemma}
\newtheorem{myThm}{Theorem}
\newtheorem{myCor}{Corollary}
\newtheorem{myRem}{Remark}
\newtheorem{myEx}{Example}
\newtheorem{myConj}{Conjecture}
\title{A Function Space View of\\ Bounded Norm Infinite Width ReLU Nets:\\ The Multivariate Case}
\author{Greg Ongie\\
Department of Statistics\\
University of Chicago\\
Chicago, IL 60637, USA \\
\texttt{gongie@uchicago.edu} \\
\And
Rebecca Willett\\
Department of Statistics \& Computer Science\\
University of Chicago\\
Chicago, IL 60637, USA \\
\texttt{willett@uchicago.edu} \\
\And
Daniel Soudry\\
Electrical Engineering Department\\
Technion, Israel Institute of Technology\\
Haifa, Israel \\
\texttt{daniel.soudry@technion.ac.il} \\
\And
Nathan Srebro\\
Toyota Technological Institute at Chicago\\
Chicago, IL 60637, USA\\
\texttt{nati@ttic.edu}
}
\begin{document}

\maketitle

\renewcommand{\headrulewidth}{0pt} 
\fancyhead{}

\begin{abstract}
A key element of understanding the efficacy of overparameterized neural networks is characterizing how they represent functions as the number of weights in the network approaches infinity. In this paper, we characterize the norm required to realize a function $f:\Rd\rightarrow\R$ as a single hidden-layer ReLU network with an unbounded number of units (infinite width), but where the Euclidean norm of the weights is bounded, including precisely characterizing which functions can be realized with finite norm.  This was settled for univariate functions $f:\R\rightarrow\R$ in \cite{savarese2019infinite}, where it was shown that the required norm is determined by the $L^1$-norm of the second derivative of the function.   We extend the characterization to multi-variate functions ($d\geq 2$, \ie multiple input units), relating the required norm to the $L^1$-norm of the Radon transform of a $(d+1)/2$-power Laplacian of the function.  This characterization allows us to show that all functions in Sobolev spaces $W^{s,1}(\Rd)$, $s\geq d+1$, can be represented with bounded norm, to calculate the required norm for several specific functions, and to obtain a depth separation result. These results have important implications for understanding generalization performance and the distinction between neural networks and more traditional kernel learning. 
\end{abstract}

\section{Introduction}

It has been argued for a while, and is becoming increasingly apparent
in recent years, that in terms of complexity control and generalization in neural
network training, ``the size [magnitude] of the weights is more
important then the size [number of weights or parameters] of the
network''  \citep{Bartlett1997,Neyshabur14,Zhang16}.  
That is, inductive
bias and generalization 
are not achieved by limiting the size of
the network, but rather by explicitly \citep{wei2018margin} or implicitly \citep{nacson2019lexicographic,lyu2019gradient} controlling the
magnitude of the weights.  

In fact, since networks used in practice are often so large that they
can fit any function (any labels) over the training data, it is
reasonable to think of the network as virtually infinite-sized, and
thus able to represent essentially all functions.  Training and generalization ability then rests
on fitting the training data while controlling, either explicitly or
implicitly, the magnitude of the weights.  That is, training searches
over all functions, but seeks functions with small {\em
  representational cost}, given by the minimal weight norm required to
represent the function.  This ``representational cost of a function''
is the actual inductive bias of learning---the quantity that defines
our true model class, and the functional we are actually minimizing
in order to learn.  Understanding learning with overparameterized
(virtually infinite) networks thus rests on understanding this
``representational cost'', which is the subject of our paper. Representational cost appears to play an important role in generalization performance; indeed \cite{mei2019generalization} show that minimum norm solutions are optimal for generalization in certain simple cases, and recent work on ``double descent'' curves is an example of this phenomenon \citep{belkin2019two,hastie2019surprises}.

We can also think of understanding the representational cost as asking
an approximation theory question: what functions can we represent, or
approximate, with our de facto model class, namely the class of
functions representable with small magnitude weights?  There has been
much celebrated work studying approximation in terms of the network
{\em size}, \ie asking how many units are necessary in order to
approximate a target function
\citep{Hornik89,Cybenko89,Barron93,Pinkus99}.  But if complexity is
actually controlled by the norm of the weights, and thus our true model class is
defined by the magnitude of the weights, we should instead ask how large a norm is
necessary in order to capture a target function.  This revised view of
approximation theory should also change how we view issues such as
depth separation: rather then asking how increasing depth can reduce
the {\em number} of units required to fit a function, we
should instead ask how increasing depth can reduce the {\em norm}
required, \ie how the representational cost we study changes with
depth.

Our discussion above directly follows that of \cite{savarese2019infinite}, who
initiated the study of the representational cost in term of weight
magnitude.  \citeauthor{savarese2019infinite} considered two-layer (\ie single hidden layer) ReLU networks,
with an unbounded (essentially infinite) number of units, and where
the overall Euclidean norm (sum of squares of all the weights) is
controlled. (Infinite width networks of this sort have been studied
from various perspectives by
\eg \cite{Bengio06,Neyshabur15,Bach17,MontanariMeanField}). For univariate functions $f:\R \rightarrow \R$, corresponding to
networks with a single one-dimensional input and a single output,
\citeauthor{savarese2019infinite} obtained a crisp and precise characterization of
the representational cost, showing that minimizing overall Euclidean
norm of the weights is equivalent to fitting a function by
controlling:
\begin{equation}\label{eq:pedro}
  \max\left( \int |f''(x)| dx,
    |f'(-\infty)+f'(+\infty)| \right).
\end{equation}
While this is an important first step, we are of course interested
also in more than a single one-dimensional input.
In this paper we derive the representational cost for any function $f:\R^d
\rightarrow\R$ in any dimension $d$.  Roughly speaking, the cost is captured by:
\begin{equation}\label{eq:Rapprox}
\Rnorm{f}\dot{\approx}
\|\Rdn{\Delta^{(d+1)/2}f}\|_1 \approx \|\partial_b^{d+1}\Rdn{f}\|_1
\end{equation}
where $\cR$ is the Radon transform, $\Delta$ is the Laplacian, and
$\partial_b$ is a partial derivative w.r.t.~the offset in the Radon
transform (see Section \ref{sec:radon} for an explanation of the Radon
transform).  This characterization is rigorous for odd dimensions $d$ and for functions where
the above expressions are classically well-defined (\ie  smooth enough
such that all derivatives are finite, and the integrand in the Radon
transform is integrable).  But for many functions of interest these
quantities are not well-defined classically.  Instead, in Definition
\ref{def:Rnorm}, we use duality to rigorously define a semi-norm $\Rnorm{f}$ that
captures the essence of the above quantities and is well-defined
(though possibly infinite) for any $f$ in any dimension.  We show that $\Rnorm{f}$
precisely captures the representational cost of $f$, and in
particular is finite if and only if $f$ can be approximated
arbitrarily well by a bounded norm, but possibly unbounded width, ReLU
network.  Our precise characterization applies to an architecture with
unregularized bias terms (as in \citet{savarese2019infinite}) and a 
single unregularized linear unit---otherwise a correction
accounting for a linear component is necessary, similar but more
complex than the term $|f'(-\infty)+f'(+\infty)|$ in the
univariate case, \ie \eqref{eq:pedro}.

As we uncover, the characterization of the representational cost for
multivariate functions is unfortunately not as simple as the
characterization \eqref{eq:pedro} in the univariate case, where the
Radon transform degenerates.  Nevertheless, it is often easy to
evaluate, and is a powerful tool for studying the representational
power of bounded norm ReLU networks.
Furthermore, as detailed in Section~\ref{sec:rkhs}, there is no kernel function for which the associated RKHS norm is the same as \eqref{eq:Rapprox}; \ie  training bounded norm neural networks is fundamentally different from kernel learning.
In particular, using our
characterization we show the following:
\begin{itemize}
\item All sufficiently smooth functions have finite representational
  cost, but the necessary degree of smoothness depends on the dimension.  In
  particular, all functions in the Sobolev space $W^{d+1,1}(\R^d)$,
  \ie when all derivatives up to order $d+1$ are $L^1$-bounded, have
  finite representational cost, and this cost can be bounded using the Sobolev norm. (Section \ref{subsec:sobolev})
\item We calculate the representational cost of radial ``bumps'', and
  show there are bumps with finite support that have finite representational
  cost in all dimensions.  The representational cost increases as
  $1/\eps$ for ``sharp'' bumps of radius $\eps$ (and fixed
  height). (Section \ref{subsec:radial})
\item In dimensions greater than one, we show a general piecewise linear function with
  bounded support has infinite representational cost (\ie cannot be
  represented with a bounded norm, even with infinite
  networks). (Section \ref{subsec:pwl})
\item We obtain a depth separation in terms of norm: we
  demonstrate a function in two dimensions that is representable using
  a depth three ReLU network (\ie with two hidden layers) with small
  finite norm, but cannot be represented by any bounded-norm depth
  two (single hidden layer) ReLU network.  As far as we are aware, this is
  the first depth separation result in terms of the norm required for
  representation. (Section \ref{subsec:depth})
\end{itemize}

\subsection{Related Work}

Although the focus of most previous work on approximation theory for
neural networks was on the number of units, the norm of the weights
was often used as an intermediate step.  However, this use does not
provide an exact characterization of the representational cost, only a
(often very loose) upper bound, and in particular does not allow for
depth separation results where a {\em lower bound} is needed. See
\citet{savarese2019infinite} for a detailed discussion, \eg contrasting with the
work of \cite{Barron93,Barron94}.

The connection between the Radon transform and two-layer
neural networks was previously made by \cite{carrolldickinson} and
\cite{ito1991representation}, who used it to obtain constructive
approximations when studying approximation theory in terms of network
size (number of units) for threshold and sigmoidal networks. This connection also forms the foundation of ridgelet transform analysis of functions \cite{candes1999ridgelets,candes1999harmonic}. More recently, \cite{sonoda2017neural} used ridgelet transform analysis to study the approximation properties of two-layer neural networks with unbounded activation functions, including the ReLU.

While working on this manuscript, we learned through discussions with Matus Telgarsky of his related parallel work. In particular, Telgarsky obtained a calculation formula for the norm required to represent a radial function, paralleling our calculations in Section \ref{subsec:radial}, and used it to show that sufficiently smooth radial functions have finite norm in any dimension, and studied how this norm changes with dimension.

\section{Infinite Width ReLU Networks}\label{sec:infnets}
We repeat here the discussion of \cite{savarese2019infinite} defining the
representational cost of infinite-width ReLU networks, with some
corrections and changes that we highlight. 

Consider the collection of all two-layer networks having an unbounded number of rectified linear units (ReLUs), \ie all $g_\theta: \Rd \rightarrow \R$ defined by
\begin{equation}\label{eq:finitenet}
 	g_\theta(\vx) = \sum_{i=1}^k a_i[\vw_i^\T\vx - b_i]_+ + c,~~\text{for all}~~\vx\in\Rd
 \end{equation}
with parameters $\theta = (k,\mW = [\vw_1,...,\vw_k],\vb = [b_1,...,b_k]^\T,\va = [a_1,...,a_k]^\T,c)$, where the width $k \in \mathbb{N}$ is unbounded. Let $\Theta$ be the collection of all such parameter vectors $\theta$. 
For any $\theta\in\Theta$ we let $C(\theta)$ be the sum of the squared Euclidean norm of the weights in the network excluding the bias terms, \ie
\begin{equation}
	C(\theta) = \frac{1}{2}\left( \|\mW\|_F^2 + \|\va\|^2\right) = \frac{1}{2}\sum_{i=1}^k\left( \|\vw_i\|_2^2  + |a_i|^2 \right),
	\label{eq:c}
\end{equation}
and consider the minimal representation cost necessary to exactly represent a function $f\in \R^d\rightarrow \R$
\begin{equation}
	R(f) := \inf_{\theta \in \Theta} C(\theta)~~s.t.~~f = g_\theta.
\end{equation}
By the 1-homogeneity of the ReLU, it is shown in \cite{Neyshabur14} (see also Appendix A of \cite{savarese2019infinite}) that minimizing $C(\theta)$ is the same as constraining the inner layer weight vectors $\{\vw_i\}_{i=1}^k$ to be unit norm while minimizing the $\ell^1$-norm of the outer layer weights $\va$. Therefore, letting $\Theta'$ be the collection of all $\theta \in \Theta$ with each $\vw_i$ constrained to the unit sphere $\S^{d-1} := \{\vw\in \Rd : \|\vw\| = 1\}$, we have
\begin{equation}\label{eq:plainr}
  R(f) = \inf_{\theta \in \Theta'} \|\va\|_1~~s.t.~~f = g_\theta.
\end{equation}
However, we see $R(f)$ is finite only if $f$ is exactly realizable as a finite-width two layer ReLU network, \ie $f$ must be a continuous piecewise linear function with finitely many pieces. Yet, we know that any continuous function can be approximated uniformly on compact sets by allowing the number of ReLU units to grow to infinity. Since we are not concerned with the number of units, only their norm, we modify our definition of representation cost to capture this larger space of functions, and define\footnote{Our definition of $\Rbar{f}$ differs from the one given in \cite{savarese2019infinite}. We require $|g_\theta(\vx) - f(\vx)|\leq \eps$ on the ball of radius $1/\eps$ rather than all of $\Rd$, and we additionally require $g_\theta(\bm 0) = f(\bm 0)$. These modifications are needed to ensure \eqref{eq:rbar0} and \eqref{eq:opt0} are equivalent. Also, we note the choice of zero in the condition $g_\theta(\bm 0) = f(\bm 0)$ is arbitrary and can be replaced with any point $\vx_0 \in \Rd$.}
\begin{equation}\label{eq:rbar0}
  \Rbar{f} := \lim_{\eps\rightarrow 0}\left(\inf_{\theta \in \Theta'} C(\theta)~~s.t.~~|g_\theta(\vx) - f(\vx)|\leq \eps~~\forall~\|\vx\| \leq 1/\eps~\text{and}~g_\theta(\bm 0) = f(\bm 0)\right)
\end{equation}
In words, $\Rbar{f}$ is the minimal limiting representational cost among all sequences of networks converging to $f$ uniformly (while agreeing with $f$ at zero).

Intuitively, if $\Rbar{f}$ is finite this means $f$ is expressible as an ``infinite-width'' two layer ReLU network whose outer-most weights are described by a density $\alpha(\vw,b)$ over all weight and bias pairs $(\vw,b) \in \SR$.
To make this intuition precise, let $M(\SR)$ denote the space of signed measures $\alpha$ defined on $(\vw,b)\in\SR$ with finite total variation norm $\|\alpha\|_1 = \int_{\SR} d|\alpha|$ (\ie the analog of the $L^1$-norm for measures), and let $c\in\R$. Then we define the infinite-width two-layer ReLU network $h_{\alpha,c}$ (or ``infinite-width net'' for short)  
by\footnote{Our definition of $h_{\alpha,c}$ also differs from the one given in \cite{savarese2019infinite}. To ensure the integral is well-defined, we include the additional $-[-b]_+$ term in the integrand. See Remark~\ref{rem:1} in Appendix \ref{appsec:infnets} for more discussion on this point.}
\begin{equation}\label{eq:infinitenet0}
h_{\alpha,c}(\vx) := \int_{\S^{d-1}\times \R}\left([\vw^\T\vx-b]_+-[-b]_+\right) d\alpha(\vw,b) + c
\end{equation}
We prove in Appendix \ref{appsec:opt} that $\Rbar{f}$ is equivalent to
\begin{equation}\label{eq:opt0}
	\Rbar{f} = \min_{\alpha\in M(\SR),c\in \R} \|\alpha\|_1 ~~s.t.~~f = h_{\alpha,c}.
\end{equation}
Hence, learning an unbounded width ReLU network $g_\theta$ by fitting some loss functional $L(\cdot)$ while controlling the Euclidean norm of the weights $C(\theta)$ by minimizing
\begin{equation}
  \min_{\theta \in \Theta} L(g_\theta) + \lambda C(\theta)
\end{equation}
is effectively the same as learning a function $f$ by controlling $\Rbar{f}$:
\begin{equation}
  \min_{f:\Rd\rightarrow \R} L(f)+\lambda \Rbar{f}.
\end{equation}
In other words, $\Rbar{f}$ captures the true inductive bias of learning with unbounded width ReLU networks having regularized weights. Our goal is then to calculate $\Rbar{f}$ for any function $f : \Rd \rightarrow \R$, and in particular characterize when it is finite in order to understand what functions can be approximated arbitrarily well with bounded norm but unbounded width ReLU networks.

\subsection{Simplification via unregularized linear unit}
Every two-layer ReLU network decomposes into the sum of a network with absolute value units plus a linear part\footnote{Such a decomposition follows immediately from the identity $[t]_+ = \frac{1}{2}(|t|+t)$}. As demonstrated by \cite{savarese2019infinite} in the 1-D setting, the weights on the absolute value units typically determine the representational cost, with a correction term needed if the linear part has large weight. To allow for a cleaner formulation of the representation cost without this correction term, we consider adding in \emph{one additional unregularized linear unit} $\vv^\T\vx$ (similar to a ``skip connection'') to ``absorb'' any representational cost due to the linear part. 

Namely, for any $\theta\in\Theta$ and $\vv\in\Rd$ we define the class of unbounded with two-layer ReLU networks $g_{\theta,\vv}$ with a linear unit by
$g_{\theta,\vv}(\vx) = g_\theta(\vx) + \vv^\T\vx$
where $g_\theta$ is as defined in \eqref{eq:finitenet}, and associate $g_{\theta,\vv}$ with the same weight norm $C(\theta)$ as defined in \eqref{eq:c} (\ie we exclude the norm of the weight $\vv$ on the additional linear unit from the cost). We then define the representational cost $\Rbarone{f}$ for this class of networks by
\begin{equation}\label{eq:rbar}
  \Rbarone{f} :=  \lim_{\eps \rightarrow 0} \left(\inf_{\theta \in \Theta', \vv \in \Rd} C(\theta)~~s.t.~~|g_{\theta,\vv}(\vx) - f(\vx)|\leq \eps~~\forall~\|\vx\| \leq 1/\eps~\text{and}~g_\theta(\bm 0) = f(\bm 0)\right).
\end{equation}
Likewise, for all $\alpha\in M(\SR)$, $\vv\in\Rd$, $c\in\R$, we define an infinite width net with a linear unit by
$h_{\alpha,\vv,c}(\vx) := h_{\alpha,c}(\vx) + \vv^\T\vx$. We prove in Appendix \ref{appsec:opt} that $\Rbarone{f}$ is equivalent to:
\begin{align}\label{eq:opt1}
  \Rbarone{f} & = \min_{\alpha \in M(\SR),\vv\in\Rd,c\in R} \|\alpha\|_1~~s.t.~~f = h_{\alpha,\vv,c}.
\end{align}
In fact, we show the minimizer of \eqref{eq:opt1} is unique and is characterized as follows:
\begin{myLem}\label{lem:uniqueness}
 $\Rbarone{f} = \|\alpha^+\|_1$ where $\alpha^+\in M(\SR)$ is the unique \emph{even measure}\footnote{Roughly speaking, a measure $\alpha$ is even if $\alpha(\vw,b) = \alpha(-\vw,-b)$ for all $(\vw,b)\in\SR$; see Appendix \ref{appsec:infnets} for a precise definition.} such that $f = h_{\alpha^+,\vv,c}$ for some $\vv\in\Rd$, $c\in\R$.
\end{myLem}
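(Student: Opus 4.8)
The plan is to reduce the entire statement to a single injectivity fact for even measures, and the key enabler is the pair of pointwise ReLU identities $[t]_+ + [-t]_+ = |t|$ and $[t]_+ - [-t]_+ = t$. For $\alpha \in M(\SR)$ let $\tilde\alpha$ denote its pushforward under the reflection $(\vw,b)\mapsto(-\vw,-b)$, and write $\alpha^+ = \tfrac12(\alpha+\tilde\alpha)$, $\alpha^- = \tfrac12(\alpha-\tilde\alpha)$. Decomposing the integrand $[\vw^\T\vx - b]_+ - [-b]_+$ into its even and odd parts in $(\vw,b)$, the identities above give that the odd part equals $\tfrac12\vw^\T\vx$ while the even part equals $\tfrac12(|\vw^\T\vx - b| - |b|)$. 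Hence the odd component of $\alpha$ contributes only the linear term $\tfrac12\big(\int \vw\,d\alpha^-\big)^\T\vx$, so $h_{\alpha,\vv,c} = h_{\alpha^+,\vv',c}$ with $\vv' = \vv + \tfrac12\int \vw\,d\alpha^-$, and for even $\alpha$ one obtains the clean form $h_{\alpha,c}(\vx) = \tfrac12\int(|\vw^\T\vx - b| - |b|)\,d\alpha(\vw,b) + c$.

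Since the reflection is a measure-preserving involution, $\|\tilde\alpha\|_1 = \|\alpha\|_1$, so symmetrization is a contraction, $\|\alpha^+\|_1 \le \|\alpha\|_1$. Therefore any representation $f = h_{\alpha,\vv,c}$ yields an \emph{even} representation $f = h_{\alpha^+,\vv',c}$ of no larger total variation norm. It then remains only to prove \textbf{uniqueness}: there is at most one even measure representing $f$ (for some $\vv,c$). Granting this, the even measure produced by symmetrizing any representation must be one and the same $\alpha^+$; consequently $\|\alpha\|_1 \ge \|\alpha^+\|_1$ for every representation, so the infimum in \eqref{eq:opt1} is attained exactly at $\alpha^+$ with value $\|\alpha^+\|_1$. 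This single injectivity statement therefore delivers existence of the even representation, its uniqueness, and the norm formula at once.

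The crux, and the hard part, is the injectivity claim: if $\alpha$ is even and $h_{\alpha,\vv,c}\equiv 0$, then $\alpha = 0$ (after which reading off the affine part forces $\vv=0$, $c=0$). My approach is to strip the affine and $|b|$ terms by differentiation. Applying the distributional Laplacian $\Delta_\vx$ annihilates $\vv^\T\vx$, $c$, and the term $\int|b|\,d\alpha$, and since $\Delta_\vx|\vw^\T\vx - b| = 2\,\delta(\vw^\T\vx - b)$ on $\{\|\vw\|=1\}$, the equation collapses to the back-projection identity $\int_{\SR}\delta(\vw^\T\vx - b)\,d\alpha(\vw,b) = \cR^*\alpha = 0$ in the sense of distributions. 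It then suffices to show that the dual Radon transform $\cR^*$ is injective on even finite signed measures. I would argue this via the Fourier slice theorem: the Fourier transform in $\vx$ of $\delta(\vw^\T\vx - b)$ is supported on the line $\R\vw$, equalling $e^{-ibs}$ at the frequency $s\vw$, so $\widehat{\cR^*\alpha}$ along a ray in direction $\vtheta$ reduces to the one-dimensional Fourier transforms of the two ``slice'' measures $\alpha(\pm\vtheta,\cdot)$; its vanishing forces these transforms, and hence the slices, to vanish for almost every direction, and evenness ties the two hemispheres together to give $\alpha = 0$.

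I expect the main obstacle to be making this last step rigorous for a general finite signed measure rather than a smooth density: the slices $\alpha(\pm\vtheta,\cdot)$ must be produced by a disintegration of $\alpha$ over $\S^{d-1}$, and the Fourier-slice manipulation is a distributional identity that needs justification against the non-compact variable $b\in\R$ and the weak decay of $\cR^*\alpha$. The cleanest rigorous route is likely dual rather than direct: pair $\cR^*\alpha = 0$ with Schwartz test functions $\psi$ having enough vanishing moments (so the affine and $|b|$ contributions are discarded legitimately from the outset), turning the identity into $\int_{\SR}\cR\psi\,d\alpha = 0$, and then invoke the range/density theorem for the Radon transform to conclude that $\alpha$, as an element of the dual of the even part of $C_0(\SR)$, must vanish. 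Establishing the required density of $\{\cR\psi\}$ with the correct decay and moment constraints is where the real work lies; everything preceding it is bookkeeping with the two ReLU identities and the symmetrization.
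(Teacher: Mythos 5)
Your reduction is exactly the paper's: the identities $[t]_+ + [-t]_+ = |t|$ and $[t]_+ - [-t]_+ = t$, the symmetrization contraction $\|\alpha^+\|_1 \leq \|\alpha\|_1$ (Proposition~\ref{prop:evenodd}), and the absorption of the odd part into the unregularized linear unit (Lemma~\ref{lem:roneopt}) correctly reduce everything to uniqueness of the even representing measure, which in the paper is the content of Lemma~\ref{lem:main}. But that crux --- injectivity of $\alpha \mapsto h_{\alpha,\vv,c}$ on even measures --- is precisely what you do not prove, and both of your sketches for it have a concrete hole. The direct Fourier-slice argument breaks on measures whose directional marginal has atoms: take $\alpha$ the symmetrized Dirac at $(\vw_0,b_0)$. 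Then $\Rdns{\alpha}$ is surface measure on the hyperplane $\vw_0^\T\vx = b_0$, whose Fourier transform is a distribution concentrated on the single line $\R\vw_0$ --- a null set of directions --- so ``the slices vanish for almost every direction'' concludes nothing, and no disintegration of a general finite signed measure produces slices $\alpha(\pm\vtheta,\cdot)$ for a.e.\ $\vtheta$ against Lebesgue measure on the sphere. A correct version must instead work with the partial Fourier transform $\F_b\alpha(\vw,\sigma)$ as a weak-$*$ continuous family of measures in $\vw$, handling the Jacobian factor $|\sigma|^{d-1}$ and the point $\sigma=0$; this is substantially more than the sketch provides.

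Your fallback dual route is the right idea but hinges on an unproven density claim, and the paper's resolution is instructive because it inverts the quantifiers. You propose pairing $\alpha$ with $\Rdn{\psi}$ for $\psi \in \Sc(\Rd)$ (with vanishing-moment constraints) and then need $\{\Rdn{\psi}\}$ to be sup-norm dense in the even part of $C_0(\SR)$; this is the hard step you flag but leave open, since the Helgason--Ludwig range of $\cR$ on $\Sc(\Rd)$ is a proper subspace of $\Sc(\Pd)$ cut out by moment conditions. The paper instead fixes an \emph{arbitrary} even Schwartz function $\psi$ on $\SR$ and manufactures a preimage $\varphi = \gamma_d(-\Delta)^{(d-1)/2}\Rdns{\psi}$ via Solmon's theorem (Lemma~\ref{lem:Solmon}): $\varphi$ is not Schwartz, but it is $C^\infty$ with $\varphi(\vx) = O(\|\vx\|^{-d})$, $\Delta\varphi(\vx) = O(\|\vx\|^{-d-2})$, and $\Rdn{\varphi} = \psi$ exactly. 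The decay of $\Delta\varphi$ is what justifies, via Fubini (Lemma~\ref{lem:Lap2Rdn}), the pairing $\langle h_{\alpha,\vv,c}, \Delta\varphi\rangle = \langle \alpha, \Rdn{\varphi}\rangle = \langle \alpha, \psi\rangle$ against a function of linear growth --- no moment bookkeeping is needed because the affine part and the $[-b]_+$ shift die under $\Delta$. Since even Schwartz functions are trivially dense in $C_0(\Pd)$ and $M(\Pd) = C_0(\Pd)^*$, this determines $\alpha$ uniquely and yields $\|\alpha\|_1 = \Rnorm{f}$, after which your symmetrization bookkeeping finishes the lemma exactly as in the paper. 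In short: correct skeleton and correctly identified crux, but the load-bearing step is missing --- one of your two branches fails as stated, and the other requires a density theorem you would still have to prove, whereas the paper needs only the trivial density of $\Sc(\Pd)$ in $C_0(\Pd)$ at the price of invoking Solmon's inversion formula with its decay estimates.
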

The proof of Lemma \ref{lem:uniqueness} is given in Appendix \ref{appsec:thm1}. The uniqueness in Lemma \ref{lem:uniqueness} allows for a more explicit characterization $\Rbarone{f}$ in function space relative to $\Rbar{f}$, as we show in Section \ref{sec:rnorm}.

\section{The Radon transform and its dual}\label{sec:radon}
Our characterization of the representational cost $\Rbarone{f}$ in Section \ref{sec:rnorm} is posed in terms of \emph{the Radon transform} --- a transform that is fundamental to computational imaging, and whose inverse is the basis of image reconstruction in computed tomography. For an investigation of its properties and applications, see \cite{helgason1999radon}. Here we give a brief review of the Radon transform and its dual as needed for subsequent derivations; readers familiar with these topics can skip to Section \ref{sec:rnorm}. 

The Radon transform $\cR$ represents a function $f:\Rd\rightarrow\R$ in terms of its integrals over all possible hyperplanes in $\Rd$, as parameterized by the unit normal direction to the hyperplane $\vw \in \S^{d-1}$ and the signed distance of the hyperplane from the origin $b\in\R$:
\begin{equation}
	\Rdn{f}(\vw,b) := \int_{\vw^\T \vx = b} f(\vx)\,ds(\vx)~~\text{for all}~~(\vw,b)\in\SR,
	\label{eq:radon}
\end{equation}
where $ds(\vx)$ represents integration with respect to $(d-1)$-dimensional surface measure on the hyperplane $\vw^\T \vx = b$. Note the Radon transform is an \emph{even} function, \ie $\Rdn{f}(\vw,b) = \Rdn{f}(-\vw,-b)$ for all $(\vw,b)\in\SR$, since the equations $\vw^\T\vx =b$ and $-\vw^\T\vx = -b$ determine the same hyperplane. See Figure \ref{fig:radon} for an illustration of the Radon transform in dimension $d=2$.

\begin{figure}
    \centering
    \subfloat[Radon transform]{
\includegraphics[height=1.7in]{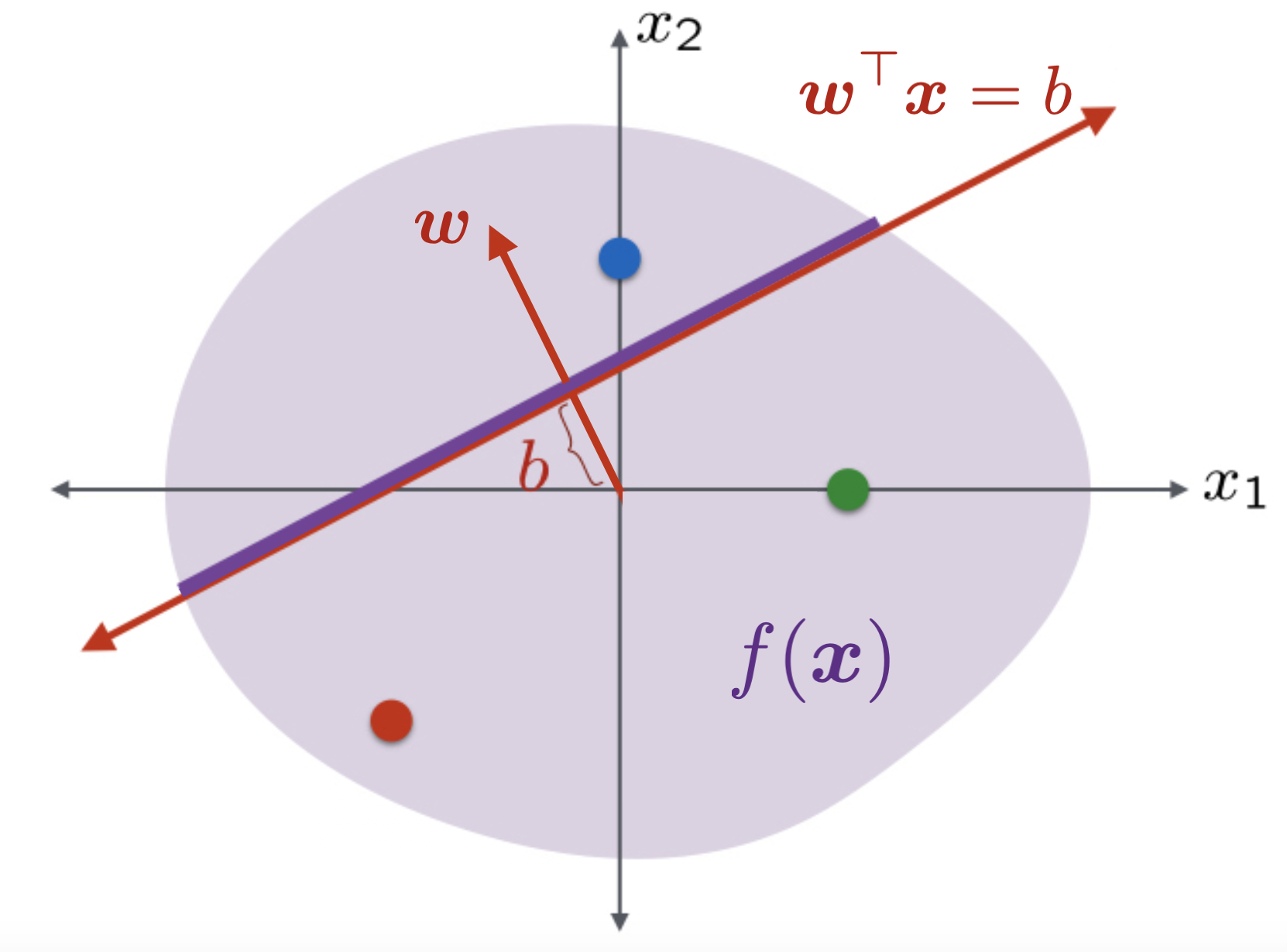}}~~~
    \subfloat[Dual Radon transform]{
\includegraphics[height=1.5in]{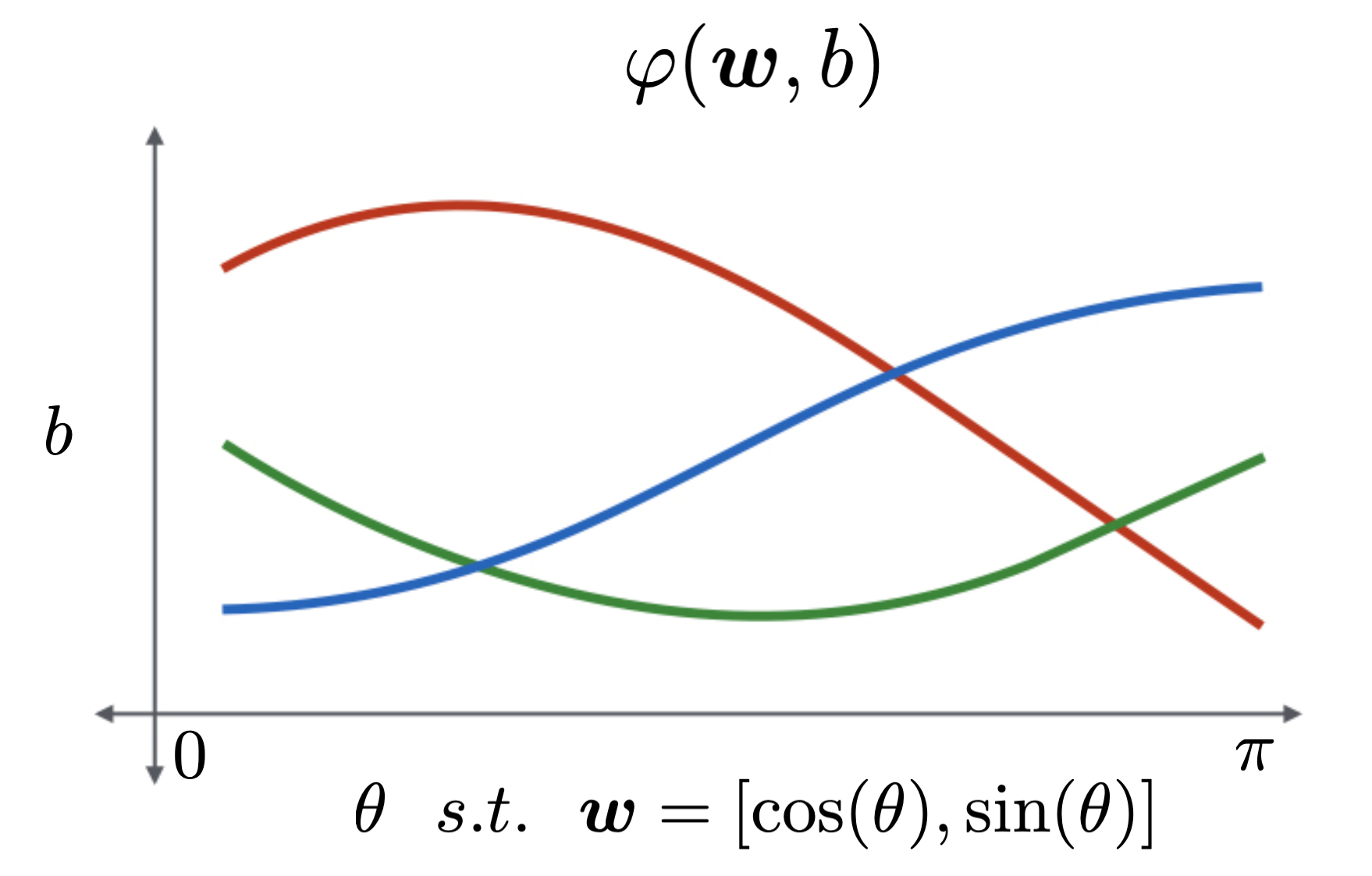}}

    \caption{\small Radon transform. (a) Illustration of the Radon transform in equation \eqref{eq:radon} in dimension $d=2$. The red line of points $\vx$ satisfying $\vw^\T\vx = b$ defines the domain of the integral over $f(\vx)$, where $\vw$ determines the line orientation (angle relative to the coordinate axes) and $b$ determines its offset from the origin. (b) Illustration of the support of the Radon transform for $f(\vx) = \delta(\vx - (-1,-1))$ (red), $f(\vx) = \delta(\vx - (1,0))$ (green), and $f(\vx) = \delta(\vx - (0,1))$ (blue). If a function $f$ is a superposition of such $\delta$ functions, then $\Rdn{f}$ is the sum of the curves in (b); this is typically referred to as a ``sinogram''. Furthermore, the dual Radon transform in equation \eqref{eq:adjoint} integrates any function $\varphi(\vw,b)$ over all curves like one of the three in (b).} 
    \label{fig:radon}
\end{figure}

The Radon transform is invertible for many common spaces of functions, and its inverse is a composition of the \emph{dual Radon transform} $\cR^*$ (\ie the adjoint of $\cR$) followed by a filtering step in Fourier domain. 
The dual Radon transform $\cR^*$ maps a function $\varphi: \SR \rightarrow \R$ to a function over $\vx\in\Rd$ by integrating over the subset of coordinates $(\vw,b)\in\SR$ corresponding to all hyperplanes passing through $\vx$:
\begin{equation}
\Rdns{\varphi}( \vx ) := \int_{ \S^{d-1} } \varphi ( \vw , \vw^\T \vx )\,d \vw~~\text{for all}~~\vx \in \Rd
\label{eq:adjoint}
\end{equation}
where $d\vw$ represents integration with respect to the surface measure of the unit sphere $\S^{d-1}$. 
The filtering step is given by a $(d-1)/2$-power of the (negative) Laplacian $(-\Delta)^{(d-1)/2}$, where for any $s>0$ the operator $(-\Delta)^{s/2}$ is defined in Fourier domain by
\begin{equation}
\widehat{(-\Delta)^{ s/2 } f}(\bm \xi)  = \|\bm\xi\| ^ { s } \widehat{f}(\bm\xi),
\end{equation} 
using $\widehat{g}(\bm \xi) := (2\pi)^{-d/2}\int g(\vx) e^{-i\bm \xi^\T\vx} d\vx$ to denote the $d$-dimensional Fourier transform at the Fourier domain (frequency) variable $\bm\xi \in \R^d$. When $d$ is odd, $(-\Delta)^{(d-1)/2}$ is the same as applying the usual Laplacian $(d-1)/2$ times, \ie $(-\Delta)^{(d-1)/2} = (-1)^{(d-1)/2}\Delta^{(d-1)/2}$, while if $d$ is even it is a pseudo-differential operator given by convolution with a singular kernel. Combining these two operators gives the inversion formula $f = \gamma_d (-\Delta)^{(d-1)/2}\Rdns{\Rdn{f}}$, where $\gamma_d$ is a constant depending on dimension $d$, which holds for $f$ belonging to many common function spaces (see, \eg \cite{helgason1999radon}). 

The dual Radon transform is also invertible by a similar formula, albeit under more restrictive conditions on the function space. We use the following formula due to \cite{solmon1987asymptotic} that holds for all Schwartz class functions\footnote{\ie functions $\varphi:\SR\rightarrow\R$ that are $C^\infty$-smooth such that $\varphi(\vw,b)$ and all its partial derivatives decrease faster than $O(|b|^{-N})$ as $|b|\rightarrow \infty$ for any $N\geq 0$} on $\SR$, which we denote by $\Sc(\SR)$:
\begin{myLem}[\cite{solmon1987asymptotic}]
 If $\varphi$ is an even function\footnote{The assumption that $\varphi$ is even is necessary since odd functions are annihilated by $\mathcal{R}^*$.}, \ie $\varphi(-\vw,-b) = \varphi(\vw,b)$ for all $(\vw,b)\in\SR$, belonging to the Schwartz class $\Sc(\SR)$, then
\begin{equation}\label{eq:invform}
\gamma_d \Rdn{(-\Delta)^{(d-1)/2} \Rdns{\varphi}} = \varphi,
\end{equation}
where $\gamma_d = \frac{1}{2(2\pi)^{d-1}}$.
\end{myLem}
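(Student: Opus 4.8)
The plan is to pass to the Fourier domain, where $\Rdn{}$, $\Rdns{}$, and $(-\Delta)^{(d-1)/2}$ all become elementary multipliers, and then compose them. Two ingredients drive the argument. The first is the \emph{Fourier slice theorem}: integrating the definition \eqref{eq:radon} against $e^{-i\omega b}$ in the offset variable and applying Fubini gives $\int_\R \Rdn{f}(\vw,b)\,e^{-i\omega b}\,db = (2\pi)^{d/2}\,\widehat{f}(\omega\vw)$, so the one-dimensional Fourier transform of $\Rdn{f}$ in $b$ is, up to a constant, the $d$-dimensional transform of $f$ restricted to the ray $\{\omega\vw : \omega \in \R\}$. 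Write $\Phi(\vw,\omega) := (2\pi)^{-1/2}\int_\R \varphi(\vw,b)\,e^{-i\omega b}\,db$ for the corresponding $1$-D transform of $\varphi$, and note that $\varphi$ being even is equivalent to $\Phi(-\vw,-\omega) = \Phi(\vw,\omega)$. Every interchange of integration below is justified because $\varphi \in \Sc(\SR)$ forces all integrands to decay rapidly.

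The second, and main, ingredient is the Fourier transform of the dual Radon transform. Substituting the definition \eqref{eq:adjoint} into $\widehat{\Rdns{\varphi}}(\bm\xi)$, interchanging the $\vx$- and $\vw$-integrals, and for each fixed $\vw$ decomposing $\vx = t\vw + \vy$ with $\vy \in \vw^\perp$, the integral over $\vy$ yields a $(d-1)$-dimensional delta on $\vw^\perp$ forcing $\bm\xi$ parallel to $\vw$, while the remaining integral in $t$ reproduces $\Phi$. The upshot is
\[
  \widehat{\Rdns{\varphi}}(\bm\xi) = 2\,(2\pi)^{(d-1)/2}\,\|\bm\xi\|^{-(d-1)}\,\Phi\!\left(\frac{\bm\xi}{\|\bm\xi\|},\,\|\bm\xi\|\right).
\]
I expect this to be the crux. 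One must localize the spherical integral at the two antipodal points $\vw = \pm\bm\xi/\|\bm\xi\|$ solving $\bm\xi \parallel \vw$, compute the Jacobian of $\vw \mapsto \bm\xi - (\bm\xi^\T\vw)\vw$ near each (which supplies the factor $\|\bm\xi\|^{-(d-1)}$), and sum the two contributions. It is exactly here that evenness of $\varphi$ enters: it makes the two antipodal contributions coincide, producing the factor $2$ and hence, after the final normalization, the $\tfrac12$ in $\gamma_d$. This also explains the footnote's remark that the hypothesis is necessary, since the odd part of $\varphi$ is annihilated.

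With this in hand the remainder is bookkeeping. Applying $(-\Delta)^{(d-1)/2}$ multiplies $\widehat{\Rdns{\varphi}}$ by $\|\bm\xi\|^{d-1}$, cancelling the singular factor and leaving $\widehat{(-\Delta)^{(d-1)/2}\Rdns{\varphi}}(\bm\xi) = 2\,(2\pi)^{(d-1)/2}\,\Phi(\bm\xi/\|\bm\xi\|,\,\|\bm\xi\|)$. Feeding this into the slice theorem of the first step with $\bm\xi = \omega\vw$ — and invoking $\Phi(-\vw,-\omega) = \Phi(\vw,\omega)$ to handle $\omega < 0$, where $\bm\xi/\|\bm\xi\| = -\vw$ and $\|\bm\xi\| = -\omega$ — shows the $1$-D Fourier transform of $\Rdn{(-\Delta)^{(d-1)/2}\Rdns{\varphi}}$ in $b$ equals $2\,(2\pi)^{d-1}\,\Phi(\vw,\omega)$. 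Inverting the $1$-D transform gives $\Rdn{(-\Delta)^{(d-1)/2}\Rdns{\varphi}} = 2\,(2\pi)^{d-1}\,\varphi$, and multiplying by $\gamma_d = \tfrac{1}{2(2\pi)^{d-1}}$ yields \eqref{eq:invform}.

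A word on rigor: the delta-function manipulations in the crucial step are most safely read as identities between tempered distributions paired against Schwartz test functions, the hypothesis $\varphi \in \Sc(\SR)$ legitimizing both the pairing and the Fubini interchanges. For odd $d$ the operator $(-\Delta)^{(d-1)/2}$ is an honest iterated Laplacian, so one could instead argue with classical derivatives; the Fourier-multiplier viewpoint has the advantage of treating odd and even $d$ uniformly. I would not attempt to deduce this dual inversion from the forward inversion formula $f = \gamma_d(-\Delta)^{(d-1)/2}\Rdns{\Rdn{f}}$ stated earlier, since that would require the Helgason--Ludwig range conditions to express an arbitrary even $\varphi$ as some $\Rdn{f}$; the direct computation above is cleaner.
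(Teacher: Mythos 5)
Your proof is correct in substance, and it necessarily differs from the paper's treatment for a simple reason: the paper does not prove this lemma at all, but imports it wholesale from \cite{solmon1987asymptotic}, so what you have written is a self-contained replacement rather than a variant of an in-paper argument. I checked your constants against the paper's conventions ($\widehat{g}(\bm\xi) = (2\pi)^{-d/2}\int g(\vx)e^{-i\bm\xi^\T\vx}\,d\vx$): your multiplier formula
\begin{equation*}
\widehat{\Rdns{\varphi}}(\bm\xi) \;=\; 2\,(2\pi)^{(d-1)/2}\,\|\bm\xi\|^{-(d-1)}\,\Phi\!\left(\frac{\bm\xi}{\|\bm\xi\|},\,\|\bm\xi\|\right)
\end{equation*}
is right --- near $\vw_0 = \pm\bm\xi/\|\bm\xi\|$ the map $\vw \mapsto \bm\xi - (\bm\xi^\T\vw)\vw$ is, to first order, multiplication by $\mp\|\bm\xi\|$ on the tangent space, so each antipodal point contributes $\|\bm\xi\|^{-(d-1)}$ and evenness merges the two into your factor $2$ --- and composing with the multiplier $\|\bm\xi\|^{d-1}$ and your (internally consistent) slice identity gives $\Rdn{(-\Delta)^{(d-1)/2}\Rdns{\varphi}} = 2(2\pi)^{d-1}\varphi$, exactly matching $\gamma_d = \frac{1}{2(2\pi)^{d-1}}$. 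Your refusal to deduce the statement from the forward inversion formula is also well judged, for precisely the reason you give (it would require the Helgason--Ludwig range characterization). What the citation to Solmon buys, and what your write-up underplays, is the analytic care needed at your final step.

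Concretely: the last application of the slice theorem is to $g := (-\Delta)^{(d-1)/2}\Rdns{\varphi}$, which is \emph{not} in $L^1(\Rd)$. By Lemma \ref{lem:Solmon} it decays only like $O(\|\vx\|^{-d})$, which is logarithmically divergent, and correspondingly your formula shows $\widehat{g}$ is in general discontinuous at $\bm\xi = \bm 0$, since $\Phi(\vw,0)$ depends on $\vw$ unless $\int \varphi(\vw,b)\,db$ is constant in $\vw$ (the zeroth range condition). So the slice theorem as stated in Section \ref{sec:radon} for $L^1$ functions does not literally apply, and ``restrict $\widehat{g}$ to a ray'' is exactly the point a referee would press. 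The fix stays inside your own framework: $\Rdn{g}$ is defined pointwise (a function that is $O(\|\vx\|^{-d})$ is integrable over each hyperplane), and for any even test function $\chi \in \Sc(\Pd)$ the pairing $\langle \Rdn{g},\chi\rangle = \langle g, \Rdns{\chi}\rangle$ converges absolutely because $\Rdns{\chi}(\vx) = O(\|\vx\|^{-1})$ makes the product $O(\|\vx\|^{-d-1})$. Evaluating the right side by Parseval, using your multiplier formula once for $\widehat{\Rdns{\chi}}$, the singular factor $\|\bm\xi\|^{-(d-1)}$ is cancelled by $r^{d-1}\,dr$ in polar coordinates; folding the half-line integral into $\frac{1}{2}\int_{\R}$ by evenness and applying one-dimensional Parseval in $b$ yields $\langle \Rdn{g},\chi\rangle = 2(2\pi)^{d-1}\langle \varphi,\chi\rangle$, and since both sides are even and continuous this gives \eqref{eq:invform}. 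With that paragraph added, your proof is a complete and legitimate substitute for the citation.
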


Finally, we recall the \emph{Fourier slice theorem} for Radon transform (see, e.g., \cite{helgason1999radon}): Let $f \in L^1(\Rd)$, then for all $\sigma\in\R$ and $\vw\in\S^{d-1}$ we have
\begin{equation}
    \F_b\Rdn{f}(\vw,\sigma) = \widehat{f}(\sigma \cdot \vw)
\end{equation}
where $\F_b$ indicates the 1-D Fourier transform in the offset variable $b$. From this it is easy to establish the following \emph{intertwining property} of the Laplacian and the Radon transform: assuming $f$ and $\Delta f$ are in $L^1(\Rd)$, we have
\begin{equation}
  \Rdn{\Delta f} = \partial_b^2 \Rdn{f}
\end{equation}
where $\partial_b$ is the partial derivative in the offset variable $b$. More generally for any positive integer $s$, assuming $f$ and $(-\Delta)^{s/2}f$ are in $L^1(\Rd)$ we have
\begin{equation}
\Rdn{(-\Delta)^{s/2} f} = {(-\partial^2_b)}^{s/2} \Rdn{f}
\end{equation}
where fractional powers of $-\partial^2_b$ can be defined in Fourier domain, same as fractional powers of the Laplacian. In particular, if $d$ is odd, $(-\partial^2_b)^{(d-1)/2} = (-1)^{(d-1)/2}\partial_b^{d-1}$, while if $d$ is even, $(-\partial^2_b)^{(d-1)/2} = (\mathcal{H}\partial_b)^{d-1}$ where $\mathcal{H}$ is the Hilbert transform in the offset variable $b$.

\section{Representational cost in function space: the $\cR$-norm}\label{sec:rnorm}
Our starting point is to relate the Laplacian of an infinite-width net to the dual Radon transform of its defining measure. In particular, consider an infinite width net $f$ defined in terms of a smooth density $\alpha(\vw,b)$ over $\SR$ that decreases rapidly in $b$, so that we can write
\begin{equation}
	f(\vx) = \int_{\S^{d-1}\times \R}\left([\vw^\T\vx-b]_+-[-b]_+\right)\alpha(\vw,b)\,d\vw\,db + \vv^\T\vx + c.
\end{equation}
Differentiating twice inside the integral, the Laplacian $\Delta f(\vx) = \sum_{i=1}^d \partial_{x_i}^2 f(\vx)$ is given by
\begin{equation}\label{eq:netlaplacian}
    \Delta f(\vx) = \int_{\SR} \delta(\vw^\T\vx -b)\alpha(\vw,b)\,d\vw\,db 
                =  \int_{\S^{d-1}} \alpha(\vw,\vw^\T \vx)\,d\vw.
\end{equation}
where $\delta(\cdot)$ denotes a Dirac delta. We see that the right-hand side of \eqref{eq:netlaplacian} is precisely the dual Radon transform of $\alpha$, \ie we have shown $\Delta f = \mathcal{R}^*\{\alpha\}$. Applying the inversion formula for the dual Radon transform given in \eqref{eq:invform} to this identity, and using the characterization of $\Rbarone{f}$ given in Lemma \ref{lem:uniqueness}, immediately gives the following result.
\begin{myLem}\label{lem:smooth} Suppose $f= h_{\alpha,\vv,c}$ for some $\alpha \in \mathcal { S } \left( \S^{d-1}\times\R \right)$ with $\alpha$ even, and $\vv\in\Rd$, $c\in\R$. Then $\alpha = -\gamma_d \Rdn{(-\Delta)^{(d+1)/2}f}$, and $\Rbarone{f} = \gamma_d\|\Rdn{(-\Delta)^{(d+1)/2}f}\|_1$ where $\gamma_d = \frac{1}{2(2\pi)^{d-1}}$.
\end{myLem}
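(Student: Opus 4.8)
The plan is to read $\alpha$ (and hence $\Rbarone{f}$) directly off the identity $\Delta f = \Rdns{\alpha}$ already established in \eqref{eq:netlaplacian}, by inverting the dual Radon transform. The hypotheses have been arranged precisely so that the inversion formula \eqref{eq:invform} applies: $\alpha$ is assumed \emph{even} and in $\Sc(\SR)$, which are exactly the conditions under which $\gamma_d\Rdn{(-\Delta)^{(d-1)/2}\Rdns{\varphi}} = \varphi$ holds.

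First I would substitute $\varphi = \alpha$ into \eqref{eq:invform}, giving $\alpha = \gamma_d\Rdn{(-\Delta)^{(d-1)/2}\Rdns{\alpha}}$, and then plug in $\Rdns{\alpha} = \Delta f$ from \eqref{eq:netlaplacian} to obtain $\alpha = \gamma_d\Rdn{(-\Delta)^{(d-1)/2}\Delta f}$. The only remaining algebra is to combine the two differential operators. Working in the Fourier domain, where $(-\Delta)^{s/2}$ acts by multiplication by $\|\bm\xi\|^s$ and $\Delta$ by $-\|\bm\xi\|^2$, the composite $(-\Delta)^{(d-1)/2}\Delta$ has symbol $-\|\bm\xi\|^{d+1}$ and hence equals $-(-\Delta)^{(d+1)/2}$. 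Substituting this yields the first claim, $\alpha = -\gamma_d\Rdn{(-\Delta)^{(d+1)/2}f}$.

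The norm statement then follows immediately from Lemma \ref{lem:uniqueness}: since $\alpha$ is even and represents $f$ (as $f = h_{\alpha,\vv,c}$), uniqueness forces $\alpha = \alpha^+$, so $\Rbarone{f} = \|\alpha\|_1$. Because $\alpha$ is a Schwartz density, its total variation norm coincides with the $L^1$-norm of the density, and pulling the positive constant $\gamma_d$ out of the norm gives $\Rbarone{f} = \gamma_d\|\Rdn{(-\Delta)^{(d+1)/2}f}\|_1$.

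The computation is a short chain of substitutions, so the only real content is checking that each operator is legitimately applied; this is the point I would be most careful about, namely that $(-\Delta)^{(d+1)/2}f$ is well-defined and its Radon transform integrable. Here the even-plus-Schwartz hypothesis on $\alpha$ does all the work: rather than verifying smoothness and decay of $f$ directly, I would interpret $(-\Delta)^{(d+1)/2}f$ through $\Delta f = \Rdns{\alpha}$, \ie as $-(-\Delta)^{(d-1)/2}\Rdns{\alpha}$, so that its Radon transform is, up to the constant $-\gamma_d^{-1}$, the Schwartz function $\alpha$ itself and integrability is automatic. The even-dimensional case, where $(-\Delta)^{(d-1)/2}$ is a nonlocal pseudo-differential operator, is handled uniformly by this Fourier-domain reasoning and needs no separate argument.
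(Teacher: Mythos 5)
Your proof is correct and takes essentially the same route as the paper, which obtains Lemma~\ref{lem:smooth} in exactly this way: combining the identity $\Delta f = \Rdns{\alpha}$ from \eqref{eq:netlaplacian} with the inversion formula \eqref{eq:invform} applied to the even Schwartz density $\alpha$, then invoking the uniqueness characterization in Lemma~\ref{lem:uniqueness} to conclude $\Rbarone{f} = \|\alpha\|_1$. Your explicit Fourier-symbol computation $(-\Delta)^{(d-1)/2}\Delta = -(-\Delta)^{(d+1)/2}$ and the identification of the total variation norm with the $L^1$-norm of the density merely spell out steps the paper treats as immediate.
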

See Figure \ref{fig:rnorm} for an illustration of Lemma \ref{lem:smooth} in the case $d=2$. This result suggests that more generally if we are given a function $f$, we ought to be able to compute $\Rbarone{f}$ using the formula in Lemma \ref{lem:smooth}. The following result, proved in Appendix \ref{appsec:thm1}, shows this is indeed the case assuming $f$ is integrable and sufficiently smooth, which for simplicity we state in the case of odd dimensions $d$.
\footnote{For $d$ even, Proposition \ref{prop:ell1} holds with the pseudo-differential operators $(-\Delta)^{(d+1)/2}$ and $(-\partial_b^2)^{(d+1)/2}$ in place of $\Delta^{(d+1)/2}$ and $\partial_b^{d+1}$; see Section \ref{sec:radon}.}.
\begin{myProp}\label{prop:ell1}
Suppose $d$ is odd. If both $f \in L^1(\Rd)$ and $\Delta^{(d+1)/2}f \in L^1(\Rd)$, then
\begin{equation}\label{eq:rnormclassical}
  \Rbarone{f} = \gamma_d\|\Rdn{\Delta^{(d+1)/2}f}\|_1 = \gamma_d\|\partial_b^{d+1}\Rdn{f}\|_1 < \infty.
\end{equation}
\end{myProp}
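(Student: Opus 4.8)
The plan is to construct explicitly the unique even measure promised by Lemma \ref{lem:uniqueness} and simply read off its total variation. Concretely, I would set $\alpha := -\gamma_d\Rdn{(-\Delta)^{(d+1)/2}f}$ and argue two things: (i) $\alpha$ is a finite even measure, so the claimed right-hand side is finite; and (ii) $f = h_{\alpha,\vv,c}$ for suitable $\vv\in\Rd$ and $c\in\R$. Granting (i) and (ii), the fact that the Radon transform is always even makes $\alpha$ the unique even representing measure, so Lemma \ref{lem:uniqueness} yields $\Rbarone{f} = \|\alpha\|_1 = \gamma_d\|\Rdn{(-\Delta)^{(d+1)/2}f}\|_1$. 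For (i), note that since $d$ is odd all exponents are integers and $(-\Delta)^{(d+1)/2}f = (-1)^{(d+1)/2}\Delta^{(d+1)/2}f\in L^1(\Rd)$ by hypothesis; a Fubini argument shows $\cR$ maps $L^1(\Rd)$ into $L^1(\SR)$ with $\|\Rdn{g}\|_1\le |\S^{d-1}|\,\|g\|_1$, whence $\alpha\in L^1(\SR)$ defines a finite signed measure and $\|\alpha\|_1 = \gamma_d\|\Rdn{(-\Delta)^{(d+1)/2}f}\|_1 <\infty$.

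The substance is step (ii). First I would show $\Rdns{\alpha} = \Delta f$: feeding $g = (-\Delta)^{(d+1)/2}f$ into the inversion formula $g = \gamma_d(-\Delta)^{(d-1)/2}\Rdns{\Rdn{g}}$ and using $\gamma_d\Rdn{(-\Delta)^{(d+1)/2}f} = -\alpha$ gives $(-\Delta)^{(d+1)/2}f = -(-\Delta)^{(d-1)/2}\Rdns{\alpha}$; stripping the common factor $(-\Delta)^{(d-1)/2}$ in the Fourier domain leaves $(-\Delta)f = -\Rdns{\alpha}$, i.e. $\Rdns{\alpha} = \Delta f$. On the other hand, \eqref{eq:netlaplacian} gives $\Delta h_{\alpha,\vv,c} = \Rdns{\alpha}$ for the net built from $\alpha$, so $\Delta(h_{\alpha,\bm 0,0} - f) = 0$. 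The difference $h_{\alpha,\bm 0,0}-f$ is thus harmonic; since $|h_{\alpha,\bm 0,0}(\vx)|\le \|\vx\|\,\|\alpha\|_1$ grows at most linearly while $f$ is bounded (Sobolev embedding, as $f,\Delta^{(d+1)/2}f\in L^1$), a Liouville-type theorem forces the difference to be affine, say $\vu^\T\vx + c_0$. Taking $\vv = -\vu$ and $c = -c_0$ then gives $f = h_{\alpha,\vv,c}$ (and one checks $h_{\alpha,\vv,c}(\bm 0)=f(\bm 0)$ automatically), completing (ii).

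The two displayed $L^1$ identities then follow from sign-invariance of the $L^1$ norm together with the intertwining relation from Section \ref{sec:radon}: $(-\Delta)^{(d+1)/2} = (-1)^{(d+1)/2}\Delta^{(d+1)/2}$ gives $\|\Rdn{(-\Delta)^{(d+1)/2}f}\|_1 = \|\Rdn{\Delta^{(d+1)/2}f}\|_1$, while $\Rdn{(-\Delta)^{(d+1)/2}f} = (-\partial_b^2)^{(d+1)/2}\Rdn{f} = (-1)^{(d+1)/2}\partial_b^{d+1}\Rdn{f}$ (valid for $d$ odd under the standing integrability) gives $\|\Rdn{(-\Delta)^{(d+1)/2}f}\|_1 = \|\partial_b^{d+1}\Rdn{f}\|_1$.

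The main obstacle is that Lemma \ref{lem:smooth}, the natural source for both $\Rdns{\alpha}=\Delta f$ and the representation $f = h_{\alpha,\vv,c}$, is stated only for Schwartz densities $\alpha\in\mathcal{S}(\SR)$, whereas here $\alpha$ is merely $L^1$; the inversion formula \eqref{eq:invform}, the differentiation under the integral in \eqref{eq:netlaplacian}, and the Fourier-domain cancellation above were all justified classically. I expect to discharge this by approximation: mollify and smoothly truncate $f$ to obtain Schwartz $f_n$ with $f_n\to f$ and $\Delta^{(d+1)/2}f_n\to\Delta^{(d+1)/2}f$ in $L^1(\Rd)$, apply Lemma \ref{lem:smooth} to each $f_n$, and pass to the limit using $\|\Rdn{g}\|_1\le|\S^{d-1}|\,\|g\|_1$ to get $L^1$-convergence $\alpha_n\to\alpha$ and uniform-on-compacta convergence of the nets. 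The delicate point is controlling the linear parts $\vv_n$ and constants $c_n$ so that the limit is again a net with a single linear unit rather than an uncontrolled correction; the harmonicity-plus-growth argument above is precisely what forces the limiting discrepancy to be affine and hence absorbable into one pair $(\vv,c)$.
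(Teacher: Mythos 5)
You have proved the right statement, but by a genuinely different route from the paper. The paper's proof of Proposition \ref{prop:ell1} never constructs the representing measure: it works directly with the dual Definition \ref{def:Rnorm}, moving the operators across the pairing, $\langle f,\Delta^{(d+1)/2}\Rdns{\psi}\rangle=\langle \Rdn{\Delta^{(d+1)/2}f},\psi\rangle$ (legitimate because $\Delta^{(d+1)/2}f\in L^1(\Rd)$ and $\cR$ is bounded from $L^1(\Rd)$ to $L^1(\SR)$ --- the same boundedness fact you invoke), then identifies the supremum over $\psi\in\Sc(\Pd)$ with $\|\psi\|_\infty\le 1$ as the total variation norm of the even measure with density $\gamma_d\Rdn{\Delta^{(d+1)/2}f}$, using density of $\Sc(\Pd)$ in $C_0(\Pd)$ and the dual characterization \eqref{eq:tvnorm}; Theorem \ref{thm:main} then converts $\Rnorm{f}$ into $\Rbarone{f}$, and the slice theorem gives the second identity exactly as in your last paragraph. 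Your route instead exhibits $f=h_{\alpha,\vv,c}$ explicitly and appeals only to Lemma \ref{lem:uniqueness}, in effect re-deriving constructively, under these hypotheses, the representation that the proof of Lemma \ref{lem:main} obtains abstractly via the Riesz representation theorem. Both are sound: the paper's duality computation is a few lines and entirely sidesteps inversion-formula issues for merely-$L^1$ data, at the price of leaning on the full strength of Theorem \ref{thm:main}; yours is more constructive --- you actually see the measure and the net --- and needs only the uniqueness statement, at the price of the mollification bookkeeping you describe.

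Two soft spots in your sketch, both repairable. First, ``stripping the common factor $(-\Delta)^{(d-1)/2}$ in the Fourier domain'' is not innocent: the multiplier $\|\bm\xi\|^{d-1}$ vanishes at the origin, so cancellation only yields $\Delta f-\Rdns{\alpha}=q$ for some polynomial $q$ annihilated by $\Delta^{(d-1)/2}$, and the claimed harmonicity of $h_{\alpha}-f$ is unjustified as written. Your own growth argument absorbs this if you say it: $u=h_{\alpha}-f$ is tempered with $\widehat{\Delta u}$ supported at the origin, hence $u$ is a polynomial, and linear growth ($h_\alpha$ is Lipschitz by Proposition \ref{prop:lip}, and $f$ is bounded and Lipschitz since $|\widehat{f}(\bm\xi)|\lesssim\min(1,\|\bm\xi\|^{-(d+1)})$ makes $\widehat{f}$ and $\|\bm\xi\|\widehat{f}$ integrable) forces $u$ to be affine, which retroactively kills $q$. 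Second, in the approximation step you cannot literally ``apply Lemma \ref{lem:smooth} to each $f_n$'': that lemma \emph{assumes} $f_n=h_{\alpha_n,\vv_n,c_n}$ with Schwartz $\alpha_n$, which is exactly what must be produced. For Schwartz $f_n$ you must run the construction yourself: set $\alpha_n=-\gamma_d\Rdn{(-\Delta)^{(d+1)/2}f_n}$ (Schwartz on $\SR$, since $\cR$ maps $\Sc(\Rd)$ into $\Sc(\Pd)$), verify $\Delta h_{\alpha_n}=\Rdns{\alpha_n}=\Delta f_n$ via \eqref{eq:netlaplacian} and the classical inversion formula for Schwartz functions, and conclude by the Liouville step. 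On the other hand, your worry about controlling the linear parts in the limit is unfounded: $\|\alpha_n-\alpha\|_1\to 0$ together with the bound $\left|[\vw^\T\vx-b]_+-[-b]_+\right|\le\|\vx\|$ gives $h_{\alpha_n}\to h_{\alpha}$ pointwise, so the affine remainders $f_n-h_{\alpha_n}$ converge pointwise and their limit is automatically affine; no separate control of $(\vv_n,c_n)$ is needed.
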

Here we used the intertwining property of the Radon transform and the Laplacian to write $\Rdn{\Delta^{(d+1)/2}f} = \partial_b^{d+1}\Rdn{f}$ (see Section \ref{sec:radon} for more details).

Given these results, one might expect for an \emph{arbitrary} function $f$ we should have $\Rbarone{f}$ equal to one of the expressions in \eqref{eq:rnormclassical}.
However, for many functions of interest these quantities are not classically well-defined. For example, the finite-width ReLU net $f(\vx) = \sum_{i=1}^n a_i[\vw_i^\T \vx - b_i]_+$ is a piecewise linear function that is non-smooth along each hyperplane $\vw_i^\T \vx = b_i$, so its derivatives can only be understood in the sense of generalized functions or distributions. Similarly, in this case the Radon transform of $f$ is not well-defined since $f$ is unbounded and not integrable along hyperplanes.

Instead, we use duality to define a functional (the ``$\cR$-norm'') that extends to the more general case where $f$ is possibly non-smooth or not integrable along hyperplanes. In particular, we define a functional on the space of all Lipschitz continuous functions\footnote{Recall that $f$ is Lipschitz continuous if there exists a constant $L$ (depending on $f$) such that ${|f(\vx)-f(\vy)|\leq L\|\vx-\vy\|}$ for all $\vx,\vy\in\Rd$.}. The main idea is to re-express the $L^1$-norm in \eqref{eq:rnormclassical} as a supremum of the inner product over a space of dual functions $\psi:\SR\rightarrow \R$, \ie using the fact $\cR^*$ is the adjoint of $\cR$ and the Laplacian $\Delta$ is self-adjoint we write
\begin{equation}
  \|\Rdn{\Delta^{(d+1)/2}f}\|_1 = 
  \sup_{\|\psi\|_\infty \leq 1} \langle \Rdn{\Delta^{(d+1)/2}f}, \psi \rangle 
= \sup_{\|\psi\|_\infty \leq 1} \langle f , \Delta^{(d+1)/2}\Rdns{\psi} \rangle
\end{equation}
then restrict $\psi$ to a space where $\Delta^{(d+1)/2}\Rdns{\psi}$ is always well-defined. More formally, we have:
\begin{myDef}\label{def:Rnorm}
For any Lipschitz continuous function $f:\Rd\rightarrow \R$ define its \emph{$\cR$-norm}\footnote{Strictly speaking, the functional $\Rnorm{\cdot}$ is not a norm, but it is a semi-norm on the space of functions for which it is finite; see Appendix \ref{appsec:moreproofs}.} $\Rnorm{f}$ by
\begin{equation}\label{eq:dualdef}
        \Rnorm{f} := \sup\left\{-\gamma_d\langle f, (-\Delta)^{(d+1)/2}\Rdns{\psi}\rangle : \psi \in \Sc(\SR), \psi \text{~even~}, \|\psi\|_\infty \leq 1 \right\}.
\end{equation}
where $\gamma_d = \frac{1}{2(2\pi)^{d-1}}$, $\Sc(\SR)$ is the space of Schwartz functions on $\SR$, and $\langle f,g\rangle := \int_{\Rd} f(\vx)g(\vx)d\vx$. If $f$ is not Lipschitz we define $\|f\|_{\cR} = +\infty$.
\end{myDef}
We prove in Appendix \ref{appsec:thm1} that the $\cR$-norm is well-defined, though not always finite, for all Lipschitz functions and, whether finite or infinite, is always equal to the representational cost $\Rbarone{\cdot}$:
\begin{myThm}\label{thm:main}
$\Rbarone{f} = \Rnorm{f}$ for all functions $f$. In particular, $\Rbarone{f}$ is finite if and only if $f$ is Lipschitz and $\Rnorm{f}$ is finite.
\end{myThm}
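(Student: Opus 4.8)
The plan is to read Theorem \ref{thm:main} as a strong-duality statement. The cost $\Rbarone{f}$ is the value of the primal problem \eqref{eq:opt1}, a total-variation minimization over measures $\alpha\in M(\SR)$ subject to the linear constraint $f=h_{\alpha,\vv,c}$, whereas $\Rnorm{f}$ in Definition \ref{def:Rnorm} is manifestly a supremum over dual variables $\psi$. I would establish the two matching inequalities separately, interpreting them as weak and strong duality.

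For the easy (weak-duality) direction $\Rnorm{f}\le\Rbarone{f}$, the key is the adjoint identity that, for any feasible $\alpha$ with $f=h_{\alpha,\vv,c}$ and any even $\psi\in\Sc(\SR)$,
\begin{equation}
-\gamma_d\langle f,(-\Delta)^{(d+1)/2}\Rdns{\psi}\rangle=\langle\alpha,\psi\rangle.\nonumber
\end{equation}
This follows by extending the computation of Lemma \ref{lem:smooth} to measures, namely $\Delta h_{\alpha,c}=\Rdns{\alpha}$ distributionally (the linear unit $\vv^\T\vx$ and the constant $c$ being annihilated by $\Delta$), then moving the self-adjoint $\Delta$ and the adjoint pair $(\cR,\cR^*)$ across the pairing and invoking the dual-Radon inversion formula \eqref{eq:invform} — which is exactly why $\psi$ is restricted to the even Schwartz class in Definition \ref{def:Rnorm}. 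Since $\|\psi\|_\infty\le 1$ gives $\langle\alpha,\psi\rangle\le\|\alpha\|_1$, taking the supremum over admissible $\psi$ and the infimum over feasible $\alpha$ yields $\Rnorm{f}\le\Rbarone{f}$. That the pairing is well defined for a linearly growing Lipschitz $f$ — i.e.\ that $(-\Delta)^{(d+1)/2}\Rdns{\psi}$ decays fast enough — is part of the well-definedness of $\Rnorm{\cdot}$, which I would cite. (As a byproduct, if $\Rbarone{f}<\infty$ then $f$ is Lipschitz, since the ReLU part of $h_{\alpha,\vv,c}$ is $\|\alpha\|_1$-Lipschitz; this gives the finiteness equivalence once the two quantities are shown equal.)

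The harder (strong-duality, attainment) direction $\Rbarone{f}\le\Rnorm{f}$ carries the real work; assume $f$ Lipschitz with $\Rnorm{f}<\infty$. Consider the linear functional $\psi\mapsto-\gamma_d\langle f,(-\Delta)^{(d+1)/2}\Rdns{\psi}\rangle$ on the subspace of even Schwartz functions inside $C_0(\SR)$ under the sup norm; by definition of $\Rnorm{f}$ it is bounded with operator norm exactly $\Rnorm{f}$ (using that the class is closed under negation). I would extend it to all of $C_0(\SR)$ by Hahn--Banach without increasing the norm, and represent the extension by a finite signed measure $\alpha\in M(\SR)$ with $\|\alpha\|_1=\Rnorm{f}$ via the Riesz representation theorem; replacing $\alpha$ by its even part preserves the pairing against even $\psi$ and does not increase $\|\alpha\|_1$. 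It then remains to show feasibility, $f=h_{\alpha,\vv,c}$ for suitable $\vv,c$. By construction $\langle\alpha,\psi\rangle=-\gamma_d\langle f,(-\Delta)^{(d+1)/2}\Rdns{\psi}\rangle$ for all even Schwartz $\psi$, while the weak-duality identity applied to $g:=h_{\alpha,\vv,c}$ gives the same pairing with $g$ in place of $f$; hence $\langle f-g,(-\Delta)^{(d+1)/2}\Rdns{\psi}\rangle=0$ for all such $\psi$.

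The main obstacle is this final reconstruction step: deducing from the vanishing family above that $f-g$ is affine, so it can be absorbed into the free weights $\vv,c$ of the linear unit to give $f=h_{\alpha,\vv',c'}$. Using self-adjointness of $\Delta$, the adjointness of $(\cR,\cR^*)$, and the intertwining property, I would rewrite the relation as $\langle(-\partial_b^2)^{(d+1)/2}\Rdn{(f-g)},\psi\rangle=0$ for every even $\psi\in\Sc(\SR)$, so that $(-\partial_b^2)^{(d+1)/2}\Rdn{(f-g)}=0$ as an even distribution. The delicate part is making $\Rdn{(f-g)}$ and these derivatives rigorous for a merely Lipschitz, hence non-integrable, function, and then showing that the only Lipschitz functions whose Radon transform lies in the kernel of $(-\partial_b^2)^{(d+1)/2}$ (polynomials of degree $\le d$ in the offset $b$) are affine — here the linear growth forced by the Lipschitz bound is essential to rule out the higher-degree polynomial solutions. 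Granting this, feasibility gives $\Rbarone{f}\le\|\alpha\|_1=\Rnorm{f}$, which together with weak duality proves $\Rbarone{f}=\Rnorm{f}$ and the claimed finiteness equivalence.
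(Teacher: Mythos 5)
Your overall architecture---weak duality via the pairing identity $-\gamma_d\langle f,(-\Delta)^{(d+1)/2}\Rdns{\psi}\rangle=\langle\alpha,\psi\rangle$, then a Hahn--Banach/Riesz argument producing a measure with $\|\alpha\|_1=\Rnorm{f}$, then an affine correction absorbed by the unregularized linear unit---is essentially the paper's proof (Lemma~\ref{lem:main} together with Lemma~\ref{lem:uniqueness} and the optimization characterization \eqref{eq:opt1}). Your weak-duality identity is the paper's Lemma~\ref{lem:Lap2Rdn}, whose justification (a Fubini exchange made legitimate by the decay $\Rdns{\psi}=O(\|\vx\|^{-d})$, $\Delta\Rdns{\psi}=O(\|\vx\|^{-d-2})$ from Lemma~\ref{lem:Solmon}) you correctly defer to the well-definedness of $\Rnorm{\cdot}$; discarding the odd part of $\alpha$ costs nothing by Proposition~\ref{prop:evenodd}. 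Using Hahn--Banach instead of the paper's extension-by-density of $\Sc(\Pd)$ in $C_0(\Pd)$ is a harmless variant for the theorem itself (the density route additionally gives the uniqueness used in Lemma~\ref{lem:uniqueness}, which you do not need since weak duality already pins down the value).

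The genuine gap is in your final reconstruction step. You propose to rewrite $\langle f-g,(-\Delta)^{(d+1)/2}\Rdns{\psi}\rangle=0$ as $\langle(-\partial_b^2)^{(d+1)/2}\Rdn{f-g},\psi\rangle=0$ and then analyze the kernel of $(-\partial_b^2)^{(d+1)/2}$. But $\Rdn{f-g}$ cannot be given meaning here: $f-g$ is Lipschitz with linear growth, hence not integrable over any hyperplane, so $\cR$ does not apply classically; and the duality extension $\langle\Rdn{u},\psi\rangle:=\langle u,\Rdns{\psi}\rangle$ also fails, since $\Rdns{\psi}$ decays only like $O(\|\vx\|^{-d})$ and so is not integrable against a linearly growing $u$. (This is precisely why the paper defines the $\cR$-norm dually in Definition~\ref{def:Rnorm} rather than through $\Rdn{f}$.) Moreover, even formally, the kernel of $\partial_b^{d+1}$ on $\SR$ contains $b$-polynomials of degree up to $d$ with arbitrary $\vw$-dependent coefficients, and ruling these out is not a pure growth argument but would require range/moment conditions for the Radon transform. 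The paper's fix stays on $\Rd$: for $\varphi\in\Sc(\Rd)$ one has $\Rdn{\varphi}\in\Sc(\Pd)$, so specialize your vanishing relation to $\psi=\Rdn{\varphi}$ and apply the Radon inversion formula $\varphi=\gamma_d(-\Delta)^{(d-1)/2}\Rdns{\Rdn{\varphi}}$, which turns it into
\begin{equation}
\langle f-g,\,\Delta\varphi\rangle=0\qquad\text{for all }\varphi\in\Sc(\Rd).\nonumber
\end{equation}
Thus $f-g$ is harmonic as a tempered distribution, hence a harmonic polynomial, and since $f$ is Lipschitz and $g=h_{\alpha,\vv,c}$ is Lipschitz by Proposition~\ref{prop:lip}, the linear growth forces degree at most one, i.e., $f-g$ is affine and is absorbed into $\vv,c$ exactly as you intended. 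With that single replacement your argument closes and coincides with the paper's proof of Lemma~\ref{lem:main}.
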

We give the proof of Theorem~\ref{thm:main} in Appendix \ref{appsec:thm1}, but the following example illustrates many key elements of the proof.
\begin{myEx}\label{ex:finitenet}
We compute $\Rbarone{f} = \Rnorm{f}$ in the case where $f$ is a finite-width two-layer ReLU network. First, consider the case where $f$ consists of a single ReLU unit: $f(\vx) = a_1[\vw_1^\T\vx -b_1]_+$ for some $a_1\in \R$ and $(\vw_1,b_1)\in \S^{d-1}$.
Note that $\Delta f(\vx) = a\, \delta(\vw_1^\T\vx -b_1)$ in a distributional sense, \ie for any smooth test function $\varphi$ we have $\langle \Delta f, \varphi\rangle = \langle f , \Delta \varphi \rangle = a_1\int \varphi(\vx)\delta(\vw_1^\T\vx-b_1)d\vx = a_1\Rdn{\varphi}(\vw_1,b_1)$. So for any even $\psi \in \Sc(\SR)$ we have
\begin{align}
  -\gamma_d\langle f, (-\Delta)^{(d+1)/2}\Rdns{\psi}\rangle & = \gamma_d\langle \Delta f, (-\Delta)^{(d-1)/2}\Rdns{\psi}\rangle\\
  & = a_1\gamma_d \Rdn{(-\Delta)^{(d-1)/2}\Rdns{\psi}}(\vw_1,b_1)\\
  & = a_1\psi(\vw_1,b_1)
\end{align}
where in the last step we used the inversion formula \eqref{eq:invform}.
Since the supremum defining $\Rnorm{f}$ is over all even $\psi \in \Sc(\SR)$ such that $\|\psi\|_\infty \leq 1$, taking any $\psi^*$ such that $\psi^*(\vw_1,b_1) = \text{sign}(a_1)$ and $|\psi^*(\vw_1,b_1)| \leq 1$ otherwise, we see that $\Rnorm{f} = |a_1|$. The general case now follows by linearity: let $f(\vx) = \sum_{i=1}^k a_i[\vw_i^\T\vx -b_i]_+$ such that all the pairs $\{(\vw_i,b_i)\}_{i=1}^k \cup \{(-\vw_i,-b_i)\}_{i=1}^k$ are distinct. Then for any $\psi \in \Sc(\SR)$ we have
\begin{equation}
  -\gamma_d\langle f, (-\Delta)^{(d+1)/2}\Rdns{\psi}\rangle = \sum_{i=1}^k a_i\psi(\vw_i,b_i).
\end{equation}
Letting $\psi^*$ be any even Schwartz function such that $\psi^*(\vw_i,b_i) = \psi^*(-\vw_i,-b_i) = \text{sign}(a_i)$ for all $i=1,...,k$ and $|\psi^*(\vw,b)| \leq 1$ otherwise, we see that $\Rbarone{f} = \Rnorm{f} = \sum_{i=1}^k |a_i|$.
\end{myEx}

\begin{figure}
\centering
\begin{tabular}{ccccc}
$f(\vx)$ & $g(\vx) = (-\Delta)^{3/2}f(\vx)$ & $\Rdn{g}(\vw(\theta),b)$
\\[0.2em]
\includegraphics[height=1.45in]{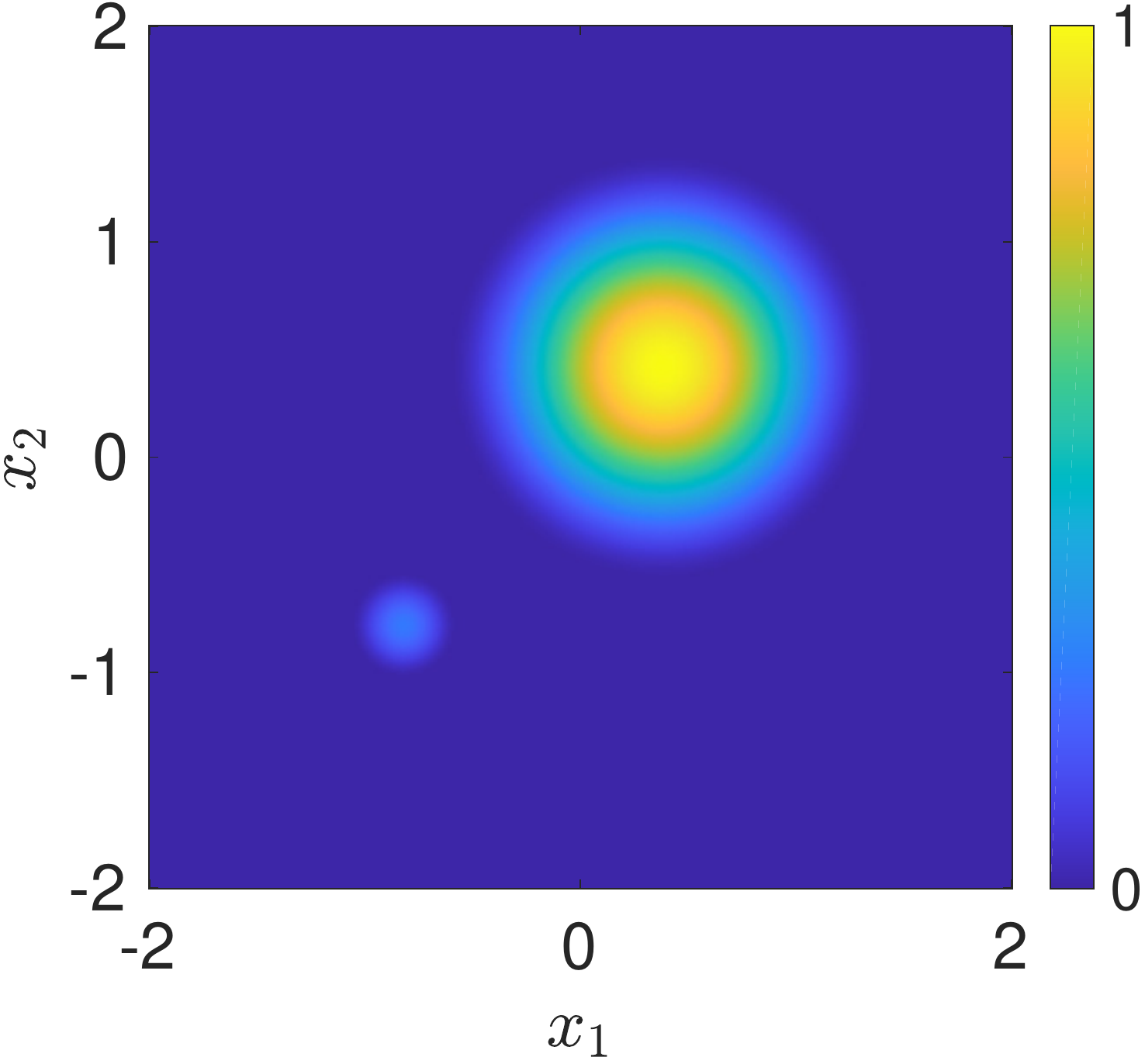} &
\includegraphics[height=1.45in]{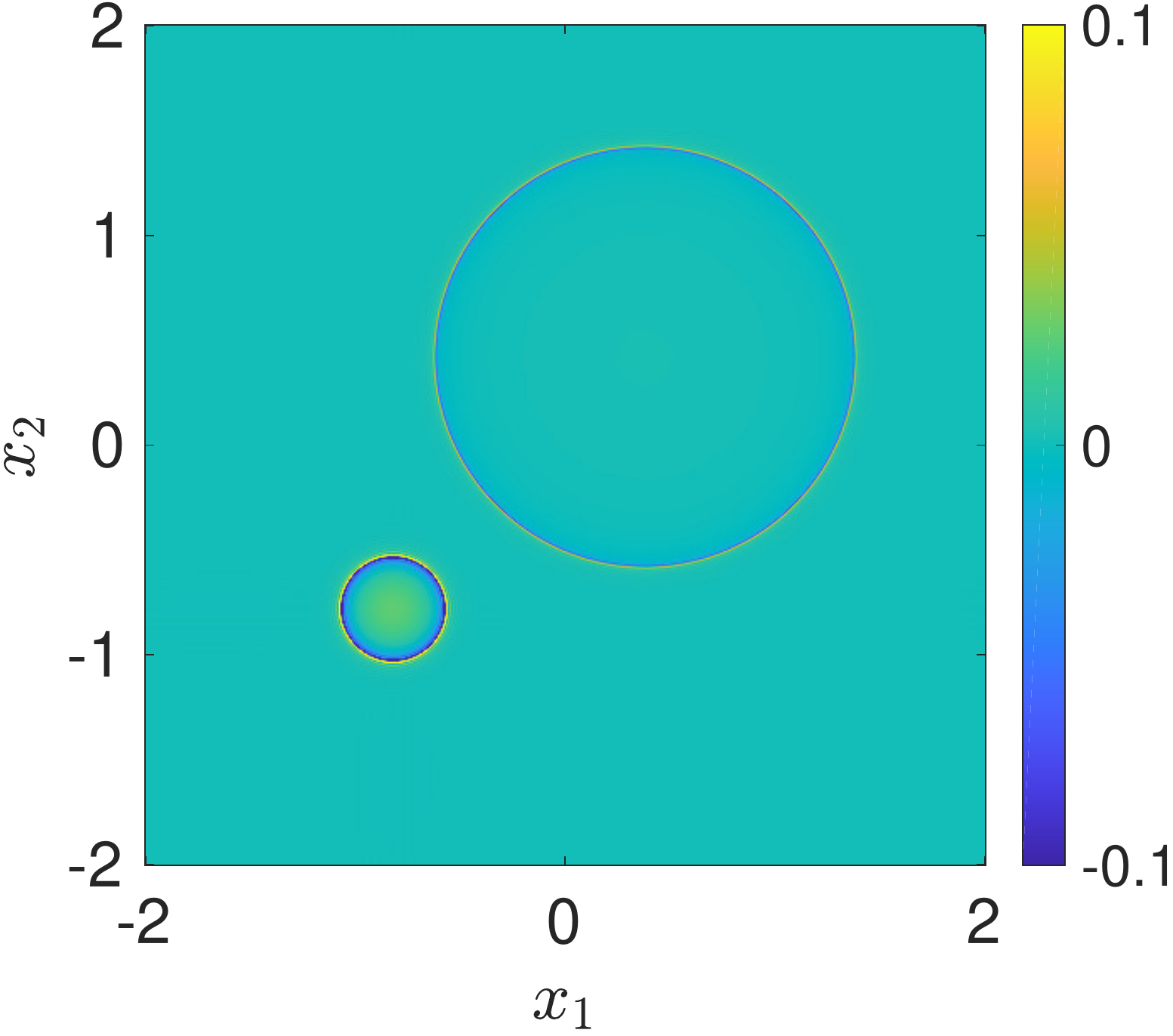} &
\includegraphics[height=1.45in]{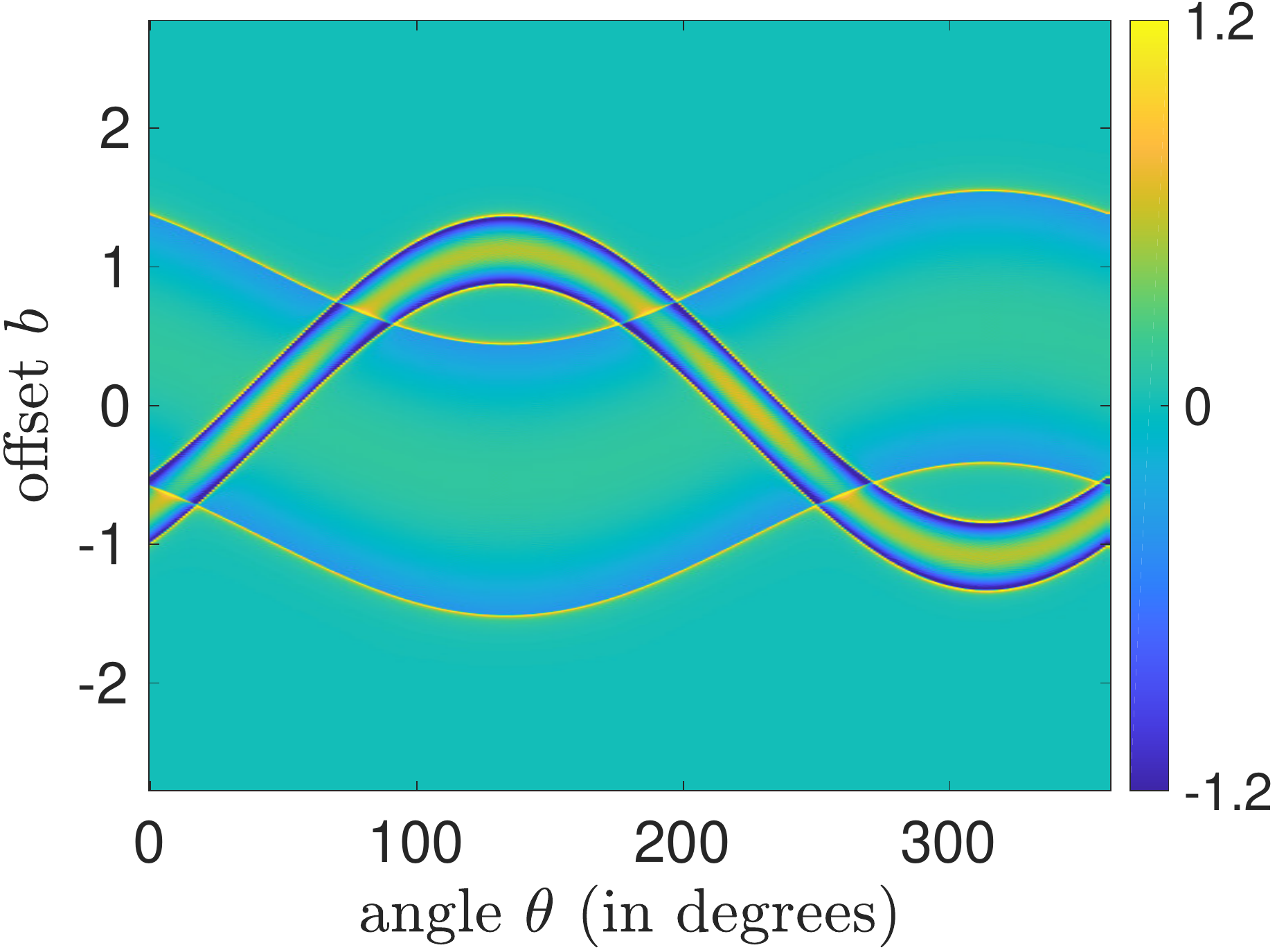} 
\end{tabular}
\caption{\small We illustrate the steps in computing the $\cR$-norm of the 2-D function $f(\vx)$ shown in the left-most figure using the formula for $\Rbarone{f}$ in Lemma \ref{lem:smooth}. First, we apply the $3/2$-power negative Laplacian $(-\Delta)^{3/2}$ (roughly speaking, a third-order derivative of the function), which gives the function $g(\vx)$ shown in the middle figure. Following this, we apply the Radon transform $\Rdn{g}$, which gives ``sinogram'' shown in the right-most figure, plotted as a function of angle $\theta$ of the unit direction $\vw(\theta) = [\cos(\theta),\sin(\theta)]$ and offset parameter $b$. Up to a scaling, $\Rdn{g}$ are the weights used to represent $f$ as an infinite-width two-layer ReLU network, and the $\cR$-norm is its scaled $L^1$-norm: $\Rbarone{f} = \Rnorm{f} = \frac{1}{4\pi}\|\Rdn{g}\|_1$.}
\label{fig:rnorm}
\end{figure}

The representational cost $\Rbar{f}$ defined without the unregularized linear unit is more difficult to characterize explicitly. However, we prove that $\Rbar{f}$ is finite if and only if $\Rnorm{f}$ is finite, and give bounds for $\Rbar{f}$ in terms of $\Rnorm{f}$ and the norm of the gradient of the function ``at infinity'', similar to the expressions derived in \cite{savarese2019infinite} in the 1-D setting.
\newpage
\begin{myThm}\label{thm:2}
$\Rbar{f}$ is finite if and only if $\Rnorm{f}$ is finite, in which case we have the bounds
\begin{equation}\label{eq:rbarbounds}
   \max\{\Rnorm{f},2\|\nabla f(\infty)\|\}\leq \Rbar{f} \leq \Rnorm{f} + 2\|\nabla f(\infty)\|,
\end{equation}
where $\grad f(\infty) := \lim_{r\rightarrow \infty}\frac{1}{c_d r^{d-1}}\oint_{\|\vx\|=r} \nabla f(\vx) ds(\vx) \in \Rd$ with $c_d := \int_{\S^{d-1}} d\vw = \frac{2\pi^{d/2}}{\Gamma(d/2)}$.\\ In particular, if $\grad f(\infty) = \bm 0$ then $\Rbar{f} = \Rbarone{f} = \Rnorm{f}$. 
\end{myThm}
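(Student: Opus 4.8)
The plan is to exploit the fact that the only difference between $\Rbar{f}$ and $\Rbarone{f}$ is the presence of the free linear unit $\vv^\T\vx$, so that passing from one cost to the other amounts to accounting for the norm needed to synthesize a linear function out of ReLU units. The central tool will be an explicit formula for the gradient at infinity in terms of the representing measure: if $f = h_{\alpha,c}$ then, differentiating under the integral, $\grad h_{\alpha,c}(\vx) = \int_{\SR}\vw\,\1\{\vw^\T\vx > b\}\,d\alpha(\vw,b)$ almost everywhere, and I claim
\begin{equation}
  \grad f(\infty) = \tfrac12\int_{\SR}\vw\,d\alpha(\vw,b).
\end{equation}
To see this I would substitute the gradient formula into the definition of $\grad f(\infty)$, interchange the surface integral over $\{\|\vx\|=r\}$ with the integral against $\alpha$ (justified by Fubini and $\|\alpha\|_1<\infty$), and observe that for each fixed $(\vw,b)$ the normalized surface measure of the spherical cap $\{\|\vx\|=r : \vw^\T\vx > b\}$ tends to $\tfrac12$ as $r\to\infty$; dominated convergence (with dominating constant $\|\vw\|=1$) then passes the limit inside. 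This computation is the main obstacle, since it requires care that $f$ is only Lipschitz (so $\grad f$ exists only a.e.) and that $\alpha$ may be a general measure rather than a smooth density; the offset term $-[-b]_+$ is harmless as it is independent of $\vx$.

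Granting the formula, both lower bounds follow immediately. For $\Rbar{f}\ge\Rnorm{f}$: every feasible $(\alpha,c)$ for $\Rbar{f}$ yields a feasible $(\alpha,\vv,c)$ with $\vv=\vzero$ for $\Rbarone{f}$, so $\Rbarone{f}\le\Rbar{f}$, and $\Rbarone{f}=\Rnorm{f}$ by Theorem~\ref{thm:main}. For $\Rbar{f}\ge 2\|\grad f(\infty)\|$: given any representation $f = h_{\alpha,c}$, the formula together with $\|\vw\|=1$ gives $\|\grad f(\infty)\| = \tfrac12\|\int\vw\,d\alpha\|\le\tfrac12\int d|\alpha| = \tfrac12\|\alpha\|_1$, and minimizing over representations yields the bound. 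I would also record the consequence of Lemma~\ref{lem:uniqueness} that in the even representation $f = h_{\alpha^+,\vv,c}$ the integral $\int\vw\,d\alpha^+$ vanishes under the symmetry $\alpha^+(\vw,b)=\alpha^+(-\vw,-b)$, so the formula forces $\vv = \grad f(\infty)$; this identification is what ties the free linear unit to the gradient at infinity.

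For the upper bound I would take the optimal even measure $\alpha^+$ from Lemma~\ref{lem:uniqueness}, with $\|\alpha^+\|_1 = \Rbarone{f} = \Rnorm{f}$ and, by the previous paragraph, $\vv = \grad f(\infty)$. Writing $\hat\vv = \vv/\|\vv\|$, the linear function $\vv^\T\vx$ is exactly $h_{\mu,0}$ for the measure $\mu = \|\vv\|\,(\delta_{(\hat\vv,0)} - \delta_{(-\hat\vv,0)})$, using $[\hat\vv^\T\vx]_+ - [-\hat\vv^\T\vx]_+ = \hat\vv^\T\vx$ and $\|\mu\|_1 = 2\|\vv\|$. Since $f = h_{\alpha^+,c} + \vv^\T\vx = h_{\alpha^+ + \mu,\,c}$ is a valid representation without a linear unit, we get $\Rbar{f}\le\|\alpha^+ + \mu\|_1\le\|\alpha^+\|_1 + \|\mu\|_1 = \Rnorm{f} + 2\|\grad f(\infty)\|$.

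Finally, the equivalence of finiteness and the equality case both drop out of the two bounds. If $\Rnorm{f}$ is finite then $f$ is Lipschitz (by Definition~\ref{def:Rnorm} and Theorem~\ref{thm:main}), so $\|\grad f(\infty)\|<\infty$ and the upper bound gives $\Rbar{f}<\infty$; conversely $\Rbar{f}\ge\Rnorm{f}$ gives the reverse implication. When $\grad f(\infty)=\vzero$, both bounds coincide at $\Rnorm{f}$, forcing $\Rbar{f}=\Rnorm{f}$, and since $\Rbarone{f}=\Rnorm{f}$ by Theorem~\ref{thm:main} we conclude $\Rbar{f}=\Rbarone{f}=\Rnorm{f}$.
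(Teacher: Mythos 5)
Your proof is correct, and it rests on the same two structural pillars as the paper's argument---the identity $\grad f(\infty) = \frac{1}{2}\int_{\SR}\vw\,d\alpha(\vw,b)$ for $f=h_{\alpha,c}$ (the paper's operator $\mathcal{V}(\alpha)$, proved by exactly your spherical-cap plus dominated-convergence computation) and the two-Dirac synthesis $\mu = \|\vv\|(\delta_{(\hat\vv,0)}-\delta_{(-\hat\vv,0)})$ of the linear part for the upper bound---but your treatment of the two lower bounds is genuinely different and more elementary. The paper obtains both by convex duality: it recasts $\Rbar{f}$ as $\min_\alpha\|\alpha\|_1$ subject to $\mathcal{V}(\alpha)=\vv_0$ and $\mathcal{E}(\alpha)=\alpha^+$, then exhibits two dual-feasible certificates, namely $(\varphi,\vzero)$ with $\varphi$ even and $\|\varphi\|_\infty\le 1$, giving $\Rbar{f}\ge\|\alpha^+\|_1=\Rnorm{f}$, and $(0,\,2\vv_0/\|\vv_0\|)$, giving $\Rbar{f}\ge 2\|\vv_0\|$; it also proves a separate duality lemma showing $\Rbar{f}=2\|\vv_0\|$ \emph{exactly} for affine $f$. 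You instead get $\Rbar{f}\ge\Rnorm{f}$ from the trivial feasibility inclusion $\Rbarone{f}\le\Rbar{f}$ (set $\vv=\vzero$) combined with Theorem~\ref{thm:main}, and $\Rbar{f}\ge 2\|\grad f(\infty)\|$ directly from $\|\mathcal{V}(\alpha)\|\le\frac{1}{2}\|\alpha\|_1$ (since $\|\vw\|=1$), minimized over representations; both shortcuts are valid and bypass the dual programs entirely. What the paper's duality buys beyond the theorem itself is the exact value for affine functions and machinery it reuses to show the upper bound in \eqref{eq:rbarbounds} is sometimes attained (\eg $f(x,y)=|x|+y$), which your route does not produce but the theorem does not require. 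Two small points to tighten: for a general Lipschitz $f$ the limit defining $\grad f(\infty)$ need not exist a priori, so when $\Rnorm{f}<\infty$ its existence should be deduced---as you in effect do---from the representation $f=h_{\alpha^+,\vv,c}$ of Lemma~\ref{lem:uniqueness}, which yields $\grad f(\infty)=\mathcal{V}(\alpha^+)+\vv=\vv$ because the integrand $\vw$ is odd under $(\vw,b)\mapsto(-\vw,-b)$; and the degenerate case $\vv=\vzero$ should be split off before writing $\hat\vv=\vv/\|\vv\|$ (there one simply takes $\mu=0$).
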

We give the proof of Theorem \ref{thm:2} in Appendix \ref{appsec:thm2}.
The lower bound $\max\{\Rnorm{f},2\|\nabla f(\infty)\|\}$ is analogous to the expression for the 1D representational cost \eqref{eq:pedro} obtained in \cite{savarese2019infinite}. From this, one might speculate that $\Rbar{f}$ is equal to $\max\{\Rnorm{f},2\|\nabla f(\infty)\|_2\}$. However, in Appendix \ref{appsec:thm2} we show this is not the case: there are examples of functions $f$ in all dimensions such that $\Rbar{f}$ attains the upper bound in a non-trivial way (\eg $f(x,y) = |x| + y$ in $d=2$).

\subsection{Properties of the $\cR$-norm}\label{subsec:properties}
In Appendix \ref{appsec:moreproofs} we prove several useful properties for the $\cR$-norm. In particular, we show the $\cR$-norm is in fact a \emph{semi-norm}, \ie it is absolutely homogeneous and satisfies the triangle inequality, while $\Rnorm{f} = 0$ if and only if $f$ is affine. We also show $\cR$-norm is invariant to coordinate translation and rotations, and prove the following scaling law under contractions/dilation: 
\begin{myProp}\label{prop:dilations}
If $f_\eps(\vx) := f(\vx/\eps)$ for any $\eps > 0$, then $\Rnorm{f_\eps} = \eps^{-1}\Rnorm{f}$
\end{myProp}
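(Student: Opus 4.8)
The plan is to work directly from the dual definition \eqref{eq:dualdef} and track how each ingredient of the functional $\psi \mapsto -\gamma_d\langle f, (-\Delta)^{(d+1)/2}\Rdns{\psi}\rangle$ transforms under the spatial dilation $\vx \mapsto \vx/\eps$. Since $f_\eps(\vx) = f(\vx/\eps)$ is Lipschitz if and only if $f$ is (with Lipschitz constant scaled by $\eps^{-1}$), the identity is immediate when either side equals $+\infty$, so I may assume $f$ is Lipschitz and focus on the case where all the pairings in the supremum are well-defined.

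First I would record two elementary scaling identities. Writing $\psi^{(\eps)}(\vw,b) := \psi(\vw,\eps b)$, the definition \eqref{eq:adjoint} of the dual Radon transform gives $\Rdns{\psi}(\eps\vx) = \int_{\S^{d-1}}\psi(\vw,\eps\vw^\T\vx)\,d\vw = \Rdns{\psi^{(\eps)}}(\vx)$. Separately, from the Fourier-domain definition of $(-\Delta)^{s/2}$ one checks that for any $h$ and any $s>0$ one has $(-\Delta)^{s/2}\big(h(\eps\,\cdot)\big)(\vx) = \eps^{s}\big((-\Delta)^{s/2}h\big)(\eps\vx)$, since $\widehat{h(\eps\,\cdot)}(\bm\xi) = \eps^{-d}\widehat{h}(\bm\xi/\eps)$ and multiplication by $\|\bm\xi\|^s$ contributes the factor $\eps^{s}$. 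Applying the latter with $s = d+1$ and $h = \Rdns{\psi}$, then substituting the former, yields the key relation
\[
\big((-\Delta)^{(d+1)/2}\Rdns{\psi}\big)(\eps\vx) = \eps^{-(d+1)}\,\big((-\Delta)^{(d+1)/2}\Rdns{\psi^{(\eps)}}\big)(\vx).
\]

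Next I would substitute $\vx = \eps\vy$ in the inner product. The change of variables gives $\langle f_\eps, g\rangle = \eps^{d}\langle f, g(\eps\,\cdot)\rangle$ for any admissible $g$; taking $g = (-\Delta)^{(d+1)/2}\Rdns{\psi}$ and inserting the key relation above, the Jacobian factor $\eps^{d}$ combines with the $\eps^{-(d+1)}$ to leave
\[
-\gamma_d\langle f_\eps,(-\Delta)^{(d+1)/2}\Rdns{\psi}\rangle = \eps^{-1}\Big(-\gamma_d\langle f,(-\Delta)^{(d+1)/2}\Rdns{\psi^{(\eps)}}\rangle\Big).
\]

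Finally I would take the supremum over the constraint set. The map $\psi \mapsto \psi^{(\eps)}$ is a bijection of $\{\psi \in \Sc(\SR): \psi \text{ even}, \|\psi\|_\infty\leq 1\}$ onto itself: it preserves the Schwartz class and evenness (as $\psi^{(\eps)}(-\vw,-b) = \psi(\vw,\eps b) = \psi^{(\eps)}(\vw,b)$) and the sup-norm, with inverse $\phi \mapsto \phi^{(1/\eps)}$. Hence reindexing the supremum by $\phi = \psi^{(\eps)}$ leaves its value unchanged, and pulling the constant $\eps^{-1}$ outside yields $\Rnorm{f_\eps} = \eps^{-1}\Rnorm{f}$. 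I expect the only delicate point to be the scaling law for the fractional Laplacian—the exponent bookkeeping producing $\eps^{-(d+1)}$—since the remaining steps are a change of variables and a relabeling of the supremum; once that identity is in hand, the cancellation of $\eps^{d}$ against $\eps^{-(d+1)}$ to leave $\eps^{-1}$ is forced.
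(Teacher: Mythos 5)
Your proof is correct, and it takes a genuinely different route from the paper's. The paper verifies the scaling only under the hypotheses of Proposition \ref{prop:ell1} ($d$ odd, with $f$ and $\Delta^{(d+1)/2}f$ in $L^1(\Rd)$, so that $\Rnorm{f} = \gamma_d\|\partial_b^{d+1}\Rdn{f}\|_1$): it computes $\Rdn{f_\eps}(\vw,b) = \eps^{d-1}\Rdn{f}(\vw,b/\eps)$ on the primal side, picks up $\eps^{-(d+1)}$ from $\partial_b^{d+1}$ and a final $\eps$ from the change of variables in $b$, and then dismisses the general case with the remark that it ``follows from standard duality arguments.'' You instead work entirely on the dual side of Definition \ref{def:Rnorm}, transferring the dilation onto the test function via $\psi^{(\eps)}(\vw,b) = \psi(\vw,\eps b)$, the identity $\Rdns{\psi^{(\eps)}} = (\Rdns{\psi})(\eps\,\cdot)$, and the Fourier-domain scaling $(-\Delta)^{s/2}\bigl(h(\eps\,\cdot)\bigr) = \eps^{s}\bigl((-\Delta)^{s/2}h\bigr)(\eps\,\cdot)$, after which the Jacobian $\eps^{d}$ cancels against $\eps^{-(d+1)}$ and the reindexing $\psi \mapsto \psi^{(\eps)}$ is a bijection of the constraint set preserving evenness, the Schwartz class, and the sup-norm. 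Your exponent bookkeeping checks out at every step, and the integrals are legitimate Lebesgue integrals since $f$ grows at most linearly while $(-\Delta)^{(d+1)/2}\Rdns{\psi}$ decays like $O(\|\vx\|^{-d-2})$ by Lemma \ref{lem:Solmon}. What your argument buys is significant: it is exactly the ``standard duality argument'' the paper leaves implicit, it requires nothing beyond $f$ Lipschitz (no $L^1$ or smoothness assumptions, and it covers both sides being $+\infty$), and it handles even $d$ uniformly since the fractional Laplacian is defined spectrally. What the paper's computation buys in exchange is concreteness --- it exhibits the scaling directly on the sinogram $\Rdn{f}$, which is instructive --- but as a complete proof of the proposition as stated, yours is the stronger one.
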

Proposition \ref{prop:dilations} shows that ``spikey'' functions will necessarily have large $\cR$-norm. For example, let $f$ be any non-negative function supported on the ball of radius 1 with maximum height 1 such that $\Rnorm{f}$ is finite. Then the contraction $f_\eps$ is supported on the ball of radius $\eps$ with maximum height 1, but $\Rnorm{f_\eps} = \eps^{-1}\Rnorm{f}$ blows up as $\eps \rightarrow 0$.

From a generalization perspective, the fact that the $\cR$-norm blows up with contractions is a desirable property, since otherwise the minimum norm fit to data would be spikes on data points. In particular, this is what would happen if the representational cost involved derivatives lower than $d+1$, and so in this sense it is not a coincidence that $\Rnorm{f}$ involves derivatives of order $d+1$.

Finally, we show the smoothness requirements of the $\cR$-norm are also reflected in Fourier domain. In particular, we show that for a broad class of functions in order $\cR$-norm to be finite the Fourier transform of $f$ must decay rapidly along every ray. A precise statement is given in Proposition \ref{prop:fourier} in Appendix \ref{appsec:moreproofs}.

\section{Consequences, Applications and Discussion}\label{sec:apps}
Our characterization of the representational cost for
multivariate functions in terms of the $\cR$-norm is unfortunately not as simple as the
characterization in the univariate case.  Nevertheless, it is often easy to
evaluate, and is a powerful tool for studying the representational
power of bounded norm ReLU networks.

\subsection{Sobolev spaces}\label{subsec:sobolev}
Here we relate Sobolev spaces and the $\cR$-norm. The key result is the following upper bound, which is proved in Appendix \ref{appsec:bounds}.
\begin{myProp}\label{prop:upper} 
If $f:\Rd\rightarrow \R$ is Lipschitz and $(-\Delta)^{(d+1)/2}f$ exists in a weak sense\footnote{\ie for all compactly supported smooth functions $\varphi$ there exists a locally integrable function $g \in L^1_{\text{loc}}(\Rd)$ such that $\int f\, (-\Delta)^{(d+1)/2}\varphi\,d\vx = \int g \varphi\, d\vx$. } then
\begin{equation}
	\Rnorm{f} \leq c_d\gamma_d\|(-\Delta)^{(d+1)/2}f\|_1.
\end{equation}
where $c_d = \int_{\S^{d-1}} d\vw = \frac{2\pi^{d/2}}{\Gamma(d/2)}$, and $\gamma_d = \frac{1}{2(2\pi)^{d-1}}$.
\end{myProp}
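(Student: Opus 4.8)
The plan is to bound the supremum in the dual definition \eqref{eq:dualdef} of $\Rnorm{f}$ by controlling the objective uniformly over all admissible test functions $\psi$. We may assume $g := (-\Delta)^{(d+1)/2}f \in L^1(\Rd)$, since otherwise the right-hand side is infinite and there is nothing to prove. Fix an even $\psi \in \Sc(\SR)$ with $\|\psi\|_\infty \le 1$. The first step is to transfer the operator off of $\Rdns{\psi}$ and onto $f$, using that $(-\Delta)^{(d+1)/2}$ is self-adjoint together with the weak existence of $g$, giving
\begin{equation}
  -\gamma_d\langle f, (-\Delta)^{(d+1)/2}\Rdns{\psi}\rangle = -\gamma_d \langle g, \Rdns{\psi}\rangle .
\end{equation}

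The second step is a Hölder estimate: since $g \in L^1(\Rd)$,
\begin{equation}
  \left| \langle g, \Rdns{\psi}\rangle \right| \le \|g\|_1 \, \|\Rdns{\psi}\|_\infty .
\end{equation}
It therefore suffices to bound $\|\Rdns{\psi}\|_\infty$, which follows directly from the definition \eqref{eq:adjoint} of the dual Radon transform: for every $\vx \in \Rd$,
\begin{equation}
  |\Rdns{\psi}(\vx)| = \left| \int_{\S^{d-1}} \psi(\vw, \vw^\T\vx)\, d\vw \right| \le \int_{\S^{d-1}} \|\psi\|_\infty \, d\vw = c_d \|\psi\|_\infty \le c_d ,
\end{equation}
which is exactly where the constant $c_d = \int_{\S^{d-1}} d\vw$ enters. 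Combining the three displays gives $-\gamma_d\langle f, (-\Delta)^{(d+1)/2}\Rdns{\psi}\rangle \le \gamma_d c_d \|g\|_1$ for every admissible $\psi$ (note evenness of $\psi$ plays no role here), and taking the supremum over $\psi$ yields the claimed bound $\Rnorm{f} \le c_d \gamma_d \|(-\Delta)^{(d+1)/2}f\|_1$.

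The only delicate point --- and the step I expect to require the most care --- is the self-adjointness transfer in the first display. The weak hypothesis on $f$ is stated for \emph{compactly supported} smooth test functions, whereas $\Rdns{\psi}$ is smooth and rapidly decaying (as $\psi$ is Schwartz in the offset variable) but not compactly supported. To justify the identity I would introduce a smooth cutoff $\eta_R$ equal to $1$ on the ball of radius $R$ and supported on the ball of radius $2R$, apply the weak definition to the compactly supported function $\eta_R \Rdns{\psi}$, and pass to the limit $R \to \infty$. The right-hand side converges to $\langle g, \Rdns{\psi}\rangle$ by dominated convergence, since $g \in L^1$ and $\Rdns{\psi}$ is bounded; the left-hand side requires controlling $(-\Delta)^{(d+1)/2}(\eta_R \Rdns{\psi})$ paired against the at-most-linearly-growing Lipschitz function $f$, using the rapid decay of $\Rdns{\psi}$ and its derivatives. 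In the even-$d$ case, where $(-\Delta)^{(d+1)/2}$ is a nonlocal pseudodifferential operator, this limiting argument additionally needs that the cutoff tails contribute negligibly; the remaining estimates are routine.
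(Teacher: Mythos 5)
Your argument is correct and is essentially the paper's own proof: the paper likewise starts from the dual definition, moves $(-\Delta)^{(d+1)/2}$ onto $f$ by self-adjointness, and bounds $\|\Rdns{\psi}\|_\infty \leq c_d$ directly from \eqref{eq:adjoint} (phrased there as enlarging the supremum to all $\varphi \in C_0(\Rd)$ with $\|\varphi\|_\infty \leq c_d$). Your cutoff discussion for the transfer step supplies rigor the paper elides (it says only ``straight from definitions''); just note that $\Rdns{\psi}$ itself is merely bounded, not rapidly decaying --- the decay your limiting argument actually needs is that of $(-\Delta)^{(d+1)/2}\Rdns{\psi}$, which is what Lemma~\ref{lem:Solmon} provides.
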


Recall that if the dimension $d$ is odd then $(-\Delta)^{(d+1)/2}$ is just an integer power of the negative Laplacian, which is a linear combination of partial derivatives of order $d+1$. Hence, we have $\|(-\Delta)^{(d+1)/2}f\|_1 \leq c_d\gamma_d\|f\|_{W^{d+1,1}}$, where $\|f\|_{W^{d+1,1}}$ is the Sobolev norm given by the sum of $L^1$-norm of $f$ and the $L^1$-norms of all its weak partial derivatives up to order $d+1$. This gives the following immediate corollary to Proposition \ref{prop:upper}:

\begin{myCor}\label{cor:sobolev}
Suppose $d$ is odd. If $f$ belongs to the Sobolev space $W^{d+1,1}(\Rd)$, \ie $f$ and all its weak derivatives up to order $d+1$ are in $L^1(\Rd)$, then $\Rnorm{f}$ is finite and $\Rnorm{f} \leq c_d\gamma_d\|f\|_{W^{d+1,1}}$.
\end{myCor}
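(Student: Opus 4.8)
The plan is to deduce the corollary directly from Proposition~\ref{prop:upper}. That proposition has two hypotheses on $f$: that it is Lipschitz continuous, and that $(-\Delta)^{(d+1)/2}f$ exists in the weak sense. Once both are verified, all that remains is to bound $\|(-\Delta)^{(d+1)/2}f\|_1$ by the Sobolev norm $\|f\|_{W^{d+1,1}}$ and compose the two estimates. Since $d$ is odd, $(d+1)/2$ is a positive integer, so $(-\Delta)^{(d+1)/2} = (-1)^{(d+1)/2}\Delta^{(d+1)/2}$ is an honest differential operator of order $d+1$; hence the weak existence of $(-\Delta)^{(d+1)/2}f$ as an element of $L^1(\Rd)\subset L^1_{\mathrm{loc}}(\Rd)$ is immediate from the assumption $f\in W^{d+1,1}(\Rd)$, and this part of the hypothesis needs no real work.

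The crux — and the step I expect to be the main obstacle — is verifying that $f$ is Lipschitz, since membership in $W^{d+1,1}(\Rd)$ does not obviously give control of $\nabla f$ in $L^\infty$. The route I would take is to observe that each partial derivative $\partial_{x_j}f$ lies in $W^{d,1}(\Rd)$ (because $f\in W^{d+1,1}$) and then invoke the borderline Sobolev embedding $W^{d,1}(\Rd)\hookrightarrow L^\infty(\Rd)$. This embedding is special to the exponent $p=1$ — the analogous borderline fails for $p>1$ — and admits a clean self-contained proof via the iterated fundamental theorem of calculus: for Schwartz $g$ one writes $g(\vx)=\int_{-\infty}^{x_1}\!\cdots\!\int_{-\infty}^{x_d}\partial_{y_1}\cdots\partial_{y_d}g(\vy)\,d\vy$, so that $\|g\|_\infty\leq\|\partial_{y_1}\cdots\partial_{y_d}g\|_1\leq\|g\|_{W^{d,1}}$, and the general case follows by density of Schwartz functions in $W^{d,1}$. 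Applying this with $g=\partial_{x_j}f$ gives $\|\partial_{x_j}f\|_\infty\leq\|f\|_{W^{d+1,1}}<\infty$ for each $j$, so $\nabla f\in L^\infty$ and $f$ is Lipschitz.

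With $f$ Lipschitz and $(-\Delta)^{(d+1)/2}f$ existing weakly, Proposition~\ref{prop:upper} applies and yields $\Rnorm{f}\leq c_d\gamma_d\|(-\Delta)^{(d+1)/2}f\|_1$. To finish I would expand $\Delta^{(d+1)/2}=\big(\sum_{i=1}^d\partial_{x_i}^2\big)^{(d+1)/2}$ by the multinomial theorem into a sum of pure partial derivatives $\partial^{2\alpha}$ of total order $d+1$, each of which has $L^1$-norm at most $\|f\|_{W^{d+1,1}}$ by definition of the Sobolev norm; collecting the multinomial coefficients then bounds $\|(-\Delta)^{(d+1)/2}f\|_1$ by $\|f\|_{W^{d+1,1}}$ (up to a dimensional/combinatorial constant absorbed into the convention for the Sobolev norm), which gives both finiteness of $\Rnorm{f}$ and the stated estimate. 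Only the borderline embedding used to establish the Lipschitz property is genuinely delicate; the weak-existence check and the final derivative bound are routine multi-index bookkeeping.
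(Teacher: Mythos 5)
Your proposal is correct, and its skeleton is the same as the paper's: Corollary~\ref{cor:sobolev} is deduced from Proposition~\ref{prop:upper} together with the observation that, for odd $d$, $(-\Delta)^{(d+1)/2}$ is an integer power of the Laplacian and hence a linear combination of partial derivatives of order $d+1$, each with $L^1$-norm controlled by $\|f\|_{W^{d+1,1}}$. Where you genuinely go beyond the paper is the Lipschitz hypothesis: the paper presents the corollary as ``immediate'' and never verifies that a $W^{d+1,1}(\Rd)$ function is Lipschitz, even though this is logically necessary --- Definition~\ref{def:Rnorm} sets $\Rnorm{f}=+\infty$ for non-Lipschitz $f$, so Proposition~\ref{prop:upper} cannot deliver finiteness without it. Your argument via the endpoint embedding $W^{d,1}(\Rd)\hookrightarrow L^\infty(\Rd)$ applied to each $\partial_{x_j}f$ (iterated fundamental theorem of calculus on Schwartz functions, $\|g\|_\infty\leq\|\partial_1\cdots\partial_d g\|_1$, then density of Schwartz functions in $W^{d,1}$) is exactly the right, and as you correctly note $p=1$-specific, way to fill this gap; it gives $\|\nabla f\|_\infty\leq\|f\|_{W^{d+1,1}}$ and hence a Lipschitz representative of $f$. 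One small blemish you share with the paper: expanding $\Delta^{(d+1)/2}=\bigl(\sum_{i=1}^d\partial_{x_i}^2\bigr)^{(d+1)/2}$ produces multinomial coefficients exceeding $1$, so with the unweighted Sobolev norm (the sum of the $L^1$-norms of the weak derivatives, each multi-index counted once) one actually obtains $\|(-\Delta)^{(d+1)/2}f\|_1\leq C_d\,\|f\|_{W^{d+1,1}}$ with $C_d=\max_{|\beta|=(d+1)/2}\binom{(d+1)/2}{\beta}$, not with constant $1$. The paper elides this identically (its inline inequality even misplaces the factor $c_d\gamma_d$), and you at least flag it explicitly, but strictly speaking both arguments prove the stated bound only up to this combinatorial constant, or under a convention in which the Sobolev norm carries multinomial weights.
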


Corollary \ref{cor:sobolev} shows that the space of functions with finite $\cR$-norm is ``dense'' in the space of all functions, in the sense that it contains a full Sobolev space.

\subsection{Radial bump functions}\label{subsec:radial}
Here we study the case where $f$ is a radially symmetric function, \ie $f(\vx) = g(\|\vx\|)$ for some function $g:[0,\infty)\rightarrow \R$. In this case, the $\cR$-norm is expressible entirely in terms of derivatives of the radial profile function $g$, as shown in the following result, which is proved in Appendix \ref{appsec:bumps}.
\begin{myProp}\label{prop:bumps}
Suppose $d \geq 3$ is odd. If $f\in L^1(\Rd)$ with $f(\vx) = g(\|\vx\|)$ then
\begin{equation}\label{eq:radialRnorm}
    \Rnorm{f} = \frac{2}{(d-2)!} \int_0^\infty \left|\partial^{(d+1)}\rho(b)\right| db.
\end{equation}
where $\rho(b) = \int_{b}^\infty g(t)(t^2-b^2)^{(d-3)/2}t\,dt$, 
\end{myProp}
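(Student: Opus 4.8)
The plan is to reduce the computation of the $\cR$-norm of a radial function to a one-dimensional integral by exploiting rotational symmetry, apply the classical formula of Proposition~\ref{prop:ell1}, and then collapse the resulting dimensional constants using the Legendre duplication formula for the Gamma function.

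First I would compute the Radon transform of a radial $f(\vx)=g(\|\vx\|)$. Fixing $\vw\in\S^{d-1}$, the hyperplane $\{\vx:\vw^\T\vx=b\}$ is an affine copy of $\R^{d-1}$ on which $\|\vx\|^2=b^2+\|\vy\|^2$ for the in-plane coordinate $\vy$. Passing to polar coordinates in $\R^{d-1}$ and substituting $t=\sqrt{b^2+r^2}$ (so that $r\,dr=t\,dt$ and $r^{d-2}\,dr=(t^2-b^2)^{(d-3)/2}t\,dt$) gives
\[
\Rdn{f}(\vw,b)=\omega_{d-2}\int_{|b|}^\infty g(t)(t^2-b^2)^{(d-3)/2}t\,dt=\omega_{d-2}\,\rho(|b|),
\]
where $\omega_{d-2}=\frac{2\pi^{(d-1)/2}}{\Gamma((d-1)/2)}$ is the surface area of $\S^{d-2}$. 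The crucial structural facts are that $\Rdn{f}$ is independent of the direction $\vw$, and that because $d$ is odd with $d\geq 3$ the exponent $(d-3)/2$ is a nonnegative integer, so $(t^2-b^2)^{(d-3)/2}$ is a polynomial in $b$; consequently $\rho$ extends to a genuinely smooth \emph{even} function of $b\in\R$, with no kink at the origin.

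Next I would invoke Proposition~\ref{prop:ell1} (applying it after first reducing to smooth, rapidly decaying $g$ by an approximation argument, or equivalently working directly from Definition~\ref{def:Rnorm} restricted to rotation-invariant test functions $\psi$) to write $\Rnorm{f}=\gamma_d\|\partial_b^{d+1}\Rdn{f}\|_1$. Because $\Rdn{f}(\vw,b)=\omega_{d-2}\rho(|b|)$ does not depend on $\vw$, the integration over $\S^{d-1}$ simply contributes the factor $c_d=\int_{\S^{d-1}}d\vw$, and since $\rho$ is even its $(d+1)$-st derivative is odd, so $|\partial_b^{d+1}\rho|$ is even and $\int_\R|\partial_b^{d+1}\rho(b)|\,db=2\int_0^\infty|\partial^{(d+1)}\rho(b)|\,db$. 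This yields
\[
\Rnorm{f}=2\,\gamma_d\,c_d\,\omega_{d-2}\int_0^\infty\left|\partial^{(d+1)}\rho(b)\right|\,db.
\]

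Finally I would verify the constant equals $2/(d-2)!$, i.e.\ $\gamma_d c_d\omega_{d-2}=1/(d-2)!$. Substituting $\gamma_d=\frac{1}{2(2\pi)^{d-1}}$, $c_d=\frac{2\pi^{d/2}}{\Gamma(d/2)}$, and $\omega_{d-2}=\frac{2\pi^{(d-1)/2}}{\Gamma((d-1)/2)}$, the powers of $\pi$ collapse to $\pi^{1/2}$ and the Legendre duplication formula $\Gamma(\tfrac{d-1}{2})\Gamma(\tfrac{d}{2})=2^{2-d}\sqrt{\pi}\,(d-2)!$ cancels the remaining factors, giving exactly $1/(d-2)!$. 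I expect the main obstacle to be the rigorous justification of the step $\Rnorm{f}=\gamma_d\|\partial_b^{d+1}\Rdn{f}\|_1$ for radial $f$ that are merely $L^1$ (so that $\Delta^{(d+1)/2}f$ need not lie in $L^1$ and $\partial^{(d+1)}\rho$ must be read as the total variation of a lower-order derivative): handling this either requires an approximation argument that commutes the $\cR$-norm with the radial reduction, or a direct symmetrization of the dual formulation of Definition~\ref{def:Rnorm} showing that radial $\psi$ suffice to compute the supremum. The Radon-transform computation and the Gamma-function bookkeeping are otherwise routine.
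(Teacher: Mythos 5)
Your proposal is correct and takes essentially the same route as the paper's proof: both observe that $\Rdn{f}(\vw,b)$ is independent of $\vw$, compute it as $c_{d-1}\rho(|b|)$ via polar coordinates in the hyperplane and the substitution $t^2=b^2+r^2$, invoke $\Rnorm{f}=\gamma_d\|\partial_b^{d+1}\Rdn{f}\|_1$ from Proposition~\ref{prop:ell1}, and collapse $\gamma_d\, c_d\, c_{d-1}=1/(d-2)!$ (the paper asserts this identity without writing out the duplication formula, and it handles the regularity caveat you flag exactly as loosely as you fear, by simply ``supposing $\partial^{(d+1)}\rho$ exists either as a function or a measure''). One trivial slip: since $d+1$ is even, $\partial^{d+1}\rho$ is even rather than odd, but $\left|\partial^{d+1}\rho\right|$ is even in either case, so your factor of $2$ from restricting to $[0,\infty)$ is unaffected.
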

For example, in the $d=3$ dimensional case, we have
\begin{equation}\label{eq:rnormbump3d}
    \Rnorm{f} = 2\int_0^\infty |b\,\partial^{3} g(b) + 3\partial^{2}g(b)|db,\quad(d=3)
\end{equation}

More generally, for any odd dimension $d\geq 3$ a simple induction shows \eqref{eq:radialRnorm} is equivalent to
\begin{equation}
   \Rnorm{f} = \frac{2}{(d-2)!}\int_0^\infty |Q_d\{g\}(b)|db
\end{equation}
where $Q_d$ is a differential operator of degree $(d+3)/2$ having the form $Q_d = \sum_{k=2}^{(d+3)/2} p_{k,d}(b)\partial^{k}$ where each $p_{k,d}(b)$ is a polynomial in $b$ of degree $k-2$. In particular, if the weak derivative $\partial^{(d+1)/2}g$ exists and has bounded variation, then $\Rnorm{f}$ is finite.

\begin{myEx}\label{ex:bump3d}
Consider the radial bump function $f(\vx) = g(\|\vx\|)$ with $\vx\in \R^3$ where
\begin{equation}\label{eq:bump}
    g(r) = 
    \begin{cases}
    (1-r^2)^2   & \text{if~} 0 \leq r < 1\\
    0            & \text{if~} r \geq 1.
    \end{cases}
\end{equation}
which is non-negative, supported on the unit ball, and has maximum height $f(\bm 0) = 1$, and let $f_\eps(\vx) = f(\vx/\eps)$ be the contraction of $f$ to a ball of radius $\eps$ with the same height. Then using formula \eqref{eq:rnormbump3d}, and the dilation property \eqref{prop:dilations}, we can compute
\begin{equation}
  \Rnorm{f_\eps} = \Rnorm{f}/\eps = 16(1 + \tfrac{1}{5}(5 + 2 \sqrt{5}))/\eps.
\end{equation}
\end{myEx}

Note that if we move up to dimension $d=5$, then the function defined by \eqref{eq:bump} no longer has finite norm since its derivatives of order $(d+3)/2 = 4$ do not exist; this phenomenon is explored in more detail in the next example.

\begin{myEx}\label{ex:bumpdk}
Suppose $d\geq 3$ is odd. Consider the radial bump function $f_{d,k}(\vx) = g_{d,k}(\|\vx\|)$ with $\vx\in \Rd$ where
\begin{equation}
    g_{d,k}(r) = 
    \begin{cases}
    (1-r^2)^k & \text{if~} 0 \leq r < 1\\
    0            & \text{if~} r \geq 1.
    \end{cases}
\end{equation}
for any $k>0$. We prove $\|f_{d,k}\|_{\cR}$ is finite if and only if $k \geq \frac{d+1}{2}$ (see Appendix \ref{appsec:bumps}). To illustrate the scaling with dimension $d$, in Appendix \ref{appsec:bumps} we prove that for the choice $k_d=(d+1)/2+2$ we have the bounds $(d+5)d \leq \Rnorm{f_{d,k_d}} \leq 2d(d+5)$, hence $\Rnorm{f_{d,k_d}} \sim d^2$. Similarly, by the dilation property \eqref{prop:dilations}, a contraction of $f_{d,k_d}$ to the ball of radius $\eps$ will have $\cR$-norm scaling as $\sim d^2/\eps$.
\end{myEx}

The next example\footnote{The existence of such a radial function was noted in parallel work by Matus Telgarsky. Discussions with Telgarsky motivated us to construct and analyze it using the $\cR$-norm.} 
shows there there is a universal choice of radial bump function in all (odd) dimensions with finite $\cR$-norm:
\begin{myEx}
Suppose $d\geq 3$ is odd. Consider the radial bump function $f(\vx) = g(\|\vx\|)$ with $\vx\in \Rd$ where
\begin{equation}
    g(r) = 
    \begin{cases}
    e^{-\frac{1}{1-r^2}} & \text{if~} 0 \leq r < 1\\
    0            & \text{if~} r \geq 1.
    \end{cases}
\end{equation}
Since $g$ is $C^\infty$-smooth and its derivatives of all orders are $L^1$-bounded, $f$ has finite $\cR$-norm by Proposition \ref{prop:bumps}.
\end{myEx}

\subsection{Piecewise Linear functions}\label{subsec:pwl}
Every finite-width two-layer ReLU network is a continuous piecewise linear function. However, the reverse is not true. For example, in dimensions two and above no compactly supported piecewise linear function is expressible as a finite-width two-layer ReLU network. A natural question then is: what piecewise linear functions are represented by bounded norm infinite-width nets, \ie have finite $\cR$-norm? In particular, can a compactly supported piecewise linear function have finite $\cR$-norm? Here we show this is generally not the case.

Before stating our result, we will need a few definitions relating to the geometry of piecewise linear functions. Recall that any piecewise linear function (with finitely many pieces) is divided into polyhedral regions separately by a finite number of boundaries. Each boundary is $(d-1)$-dimensional and contained in a unique hyperplane. Hence, with every boundary we associate the unique (up to sign) unit normal to the hyperplane containing it, which we call the \emph{boundary normal}. Additionally, in the case of compactly supported piecewise linear function, every boundary set that touches the complement of the support set we call an \emph{outer boundary}, otherwise we call it an \emph{inner boundary}.

The following result is proved in Appendix \ref{appsec:pwl}, and is a consequence of the Fourier decay estimates established in Appendix \ref{appsec:moreproofs}.
\begin{myProp}\label{prop:pwlcpt}
Suppose $f:\Rd\rightarrow \R$ is a continuous piecewise linear function with compact support such that one (or both) of the following conditions hold:
\begin{enumerate}
  \item[(a)] at least one of the boundary normals is not parallel with every other boundary normal, or
  \item[(b)] $f$ is everywhere convex (or everywhere concave) when restricted to its support, and at least one of the inner boundary normals is not parallel with all outer boundary normals. 
\end{enumerate}
Then $f$ has infinite $\cR$-norm.
\end{myProp}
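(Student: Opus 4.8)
\section*{Proof proposal for Proposition~\ref{prop:pwlcpt}}

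The plan is to argue by contraposition through the Fourier-domain necessary condition for finite $\cR$-norm (Proposition~\ref{prop:fourier}). Finiteness of $\Rnorm{f}$ forces $\widehat f$ to decay rapidly along rays, and I would make this quantitative: combining $\Rnorm{f}=\gamma_d\|\partial_b^{d+1}\Rdn{f}\|_1$ with the Fourier slice theorem $\F_b\Rdn{f}(\vw,\sigma)=\widehat f(\sigma\vw)$ and the elementary bound $\|\mu\|_{\mathrm{TV}}\geq c\,\|\widehat\mu\|_\infty$ for finite measures (applied to $\mu=\partial_b^{d+1}\Rdn{f}(\vw,\cdot)$, whose $1$-D transform is $(i\sigma)^{d+1}\widehat f(\sigma\vw)$) yields
\begin{equation}
  \Rnorm{f}\;\geq\; c\!\int_{\S^{d-1}}\sup_{\sigma>0}\sigma^{d+1}\,|\widehat f(\sigma\vw)|\,d\vw .
\end{equation}
So it suffices to exhibit a single boundary normal $\vn$ such that $\sup_\sigma\sigma^{d+1}|\widehat f(\sigma\vw)|$ grows fast enough as $\vw\to\vn$ to make this angular integral diverge.

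To extract the relevant asymptotics I would use the distributional structure of $f$. Since $f$ is piecewise linear its gradient is piecewise constant, and continuity forces the gradient jump across each boundary $S_j$ to be purely normal; hence $\Delta f=\sum_j c_j\,\mu_j$, where $\mu_j$ is surface measure on the flat piece $S_j$ (in a hyperplane with unit normal $\vn_j$ and offset $b_j$) and $c_j\neq0$ is the jump in normal derivative. Using $\widehat f(\xi)=-\|\xi\|^{-2}\widehat{\Delta f}(\xi)$, the problem reduces to $\widehat{\mu_j}(\sigma\vw)=(2\pi)^{-d/2}e^{-i\sigma(\vw^\T\vn_j)b_j}\int_{S_j}e^{-i\sigma\,\vw_\perp^\T\vx}ds$, where $\vw_\perp$ is the component of $\vw$ tangent to the hyperplane of $S_j$. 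For $\vw$ at small angle $\phi$ from $\vn$, any $S_j$ \emph{parallel} to $\vn$ has tangential frequency of order $\sigma\sin\phi$, so its transform stays of order $\mathrm{Area}(S_j)$ until $\sigma\sim1/\phi$, whereas every \emph{non-parallel} boundary has tangential frequency of order $\sigma$ and contributes a term that remains bounded after multiplication by $\sigma^{d-1}$. Thus the only possible blow-up of $\sigma^{d-1}|\widehat{\Delta f}(\sigma\vw)|$ as $\phi\to0$ comes from the boundaries parallel to $\vn$: for $\sigma$ up to order $1/\phi$ each such profile is still of order $\mathrm{Area}(S_j)$, so the parallel part is well approximated by the trigonometric sum $T(\sigma):=\sum_{j:\,\vn_j\parallel\vn}c_j\,\mathrm{Area}(S_j)\,e^{-i\sigma b_j}$, and a short computation gives $\sup_\sigma\sigma^{d-1}|\widehat{\Delta f}(\sigma\vw)|\geq c\,\phi^{-(d-1)}$ \emph{provided $T$ does not cancel}.

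The crux, and the step I expect to be the main obstacle, is ruling out this cancellation, which is exactly where hypotheses (a) and (b) enter. Under (a) I would take $\vn$ to be a boundary normal shared by no other boundary; then $T$ has a single nonzero term $c\,\mathrm{Area}(S)\,e^{-i\sigma b}$, and choosing $\sigma\sim1/\phi$ (so that the profile is still of order $\mathrm{Area}(S)$) immediately yields the bound $\phi^{-(d-1)}$. Under (b) several boundaries may be parallel to $\vn$, so I would instead exploit convexity: for a convex (resp.\ concave) $f$ the jumps $c_j$ all share a sign, so $\Delta f$ is a one-signed measure and the parallel contributions cannot cancel in the total-variation sense; choosing $\vn$ to be an inner boundary normal that is parallel to no outer boundary normal guarantees both that this inner ridge actually produces the order-$\phi^{-(d-1)}$ term and that no outer boundary interferes at leading order near $\vn$. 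I expect the delicate part to be making the facet asymptotics of $\int_{S_j}e^{-i\sigma\sin\phi\,\vw_\perp^\T\vx}ds$ uniform in $\phi$, and extracting the sign cleanly from convexity in the (b) case; this is precisely where the Fourier-decay machinery of Appendix~\ref{appsec:moreproofs} does the work.

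Finally, granting $\sup_\sigma\sigma^{d+1}|\widehat f(\sigma\vw)|\geq c\,\phi^{-(d-1)}$ for $\vw$ in a cone of half-angle $\phi_0$ about $\vn$, I would integrate in polar form, using that the directions at angle $\phi$ from $\vn$ carry surface measure of order $\phi^{d-2}\,d\phi$ on $\S^{d-1}$:
\begin{equation}
  \int_{\S^{d-1}}\sup_{\sigma>0}\sigma^{d+1}|\widehat f(\sigma\vw)|\,d\vw \;\geq\; c\int_0^{\phi_0}\phi^{-(d-1)}\,\phi^{d-2}\,d\phi \;=\; c\int_0^{\phi_0}\frac{d\phi}{\phi}\;=\;+\infty ,
\end{equation}
a logarithmic divergence in every dimension $d\geq2$. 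By the first displayed lower bound (equivalently, by the contrapositive of Proposition~\ref{prop:fourier}) this forces $\Rnorm{f}=\infty$, as claimed.
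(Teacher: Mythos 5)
Your proposal runs on the same engine as the paper's proof---the Fourier-decay necessary condition applied to the surface-measure decomposition $\Delta f=\sum_k c_k\,\mu_{B_k}$, the dichotomy between facets parallel and non-parallel to a chosen normal (divergence theorem giving $O(1/\sigma)$ for the latter), and hypotheses (a)/(b) to prevent cancellation of the parallel part---but your quantitative strengthening, the first displayed inequality, is not justified, and it is the load-bearing step of your argument. The identity $\Rnorm{f}=\gamma_d\|\partial_b^{d+1}\Rdn{f}\|_1$ (Proposition \ref{prop:ell1}) is only available when $\Delta^{(d+1)/2}f\in L^1(\Rd)$, which fails here: for piecewise linear $f$ the Laplacian is a singular surface measure. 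Assuming only $\Rnorm{f}<\infty$, all you know is $f=h_{\alpha,\vv,c}$ for a finite measure $\alpha$ on $\SR$ (Lemma \ref{lem:main}), and $\alpha$ need not disintegrate over $\vw\in\S^{d-1}$ with an absolutely continuous angular marginal; so applying $\|\mu\|_1\geq c\,\|\F_b\mu\|_\infty$ to the slice $\mu=\partial_b^{d+1}\Rdn{f}(\vw,\cdot)$ \emph{for each $\vw$ and then integrating $d\vw$} presupposes exactly the fiber structure you would have to prove. What the duality framework gives cheaply is only the supremum over both variables, $\sup_{\vw,\sigma}\gamma_d|\sigma|^{d-1}|\widehat{\Delta f}(\sigma\vw)|\leq\|\alpha\|_1$---which is precisely Proposition \ref{prop:fourier}. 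And that is all you need: along the \emph{single} ray $\vw=\vn$, the parallel facets contribute a non-vanishing almost-periodic term while every non-parallel facet decays like $O(1/\sigma)$, so $\sigma^{d-1}\widehat{\Delta f}(\sigma\vn)$ is unbounded for every $d\geq2$, contradiction. This single-ray version is the paper's actual proof; your angular integration of $\phi^{-(d-1)}$ against $\phi^{d-2}\,d\phi$ near $\vn$ is both unnecessary and the locus of the unproven step.

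Two further local errors. In case (b), your justification ``the jumps $c_j$ all share a sign, so $\Delta f$ is a one-signed measure'' is false: for compactly supported $f$ one has $\langle\Delta f,1\rangle=0$ (test against a function equal to $1$ on a neighborhood of the support), so $\Delta f$ always carries both signs---inner and outer kinks have opposite signs, as the tent function $[1-|x|]_+$ already shows in one dimension. What is true, and what hypothesis (b) actually buys, is that every boundary parallel to the chosen inner normal $\vn$ is itself an inner boundary (since $\vn$ is parallel to no outer normal), and the \emph{inner} jumps share a sign by convexity/concavity on the support, so the parallel trigonometric sum $T$ cannot cancel, while the outer facets are non-parallel and decay. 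Separately, your claim that every non-parallel boundary ``contributes a term that remains bounded after multiplication by $\sigma^{d-1}$'' is false for $d\geq3$: the divergence theorem yields only $O(1/\sigma)$ per facet, hence $\sigma^{d-1}\cdot O(1/\sigma)=O(\sigma^{d-2})$, which is unbounded; your comparison survives only because at the chosen scale $\sigma\sim1/\phi$ this term is still a factor of $\phi$ smaller than the parallel contribution $\phi^{-(d-1)}$ (and vanishes in the limit along the exact ray $\vw=\vn$, which is why the paper's simpler argument is clean), but as stated the claim is wrong.
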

Note that condition (a) holds for a ``generic'' piecewise linear function with compact support, \ie if a function fails to satisfy (a) we can always perturb it slightly such that (a) holds. In this sense no ``generic'' compactly supported piecewise linear function has finite $\cR$-norm. In fact, we are not aware of \emph{any} compactly supported piecewise linear function with finite $\cR$-norm, but our theory does not rule them out \emph{a priori}.

This result suggests that the space of piecewise linear functions expressible as a bounded norm infinite-width two-layer ReLU network is not qualitatively different than those captured by finite-width networks. We go further and make the following conjecture:
\begin{myConj}\label{conj:pwl}
A continuous piecewise linear function $f$ has finite $\cR$-norm if and only if it is exactly representable by a finite-width two-layer ReLU network.
\end{myConj}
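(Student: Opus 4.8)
The plan is to prove the two implications separately, the ``only if'' direction being the substantive one. The ``if'' direction is immediate from Example~\ref{ex:finitenet}: a finite-width net $\sum_i a_i[\vw_i^\T\vx-b_i]_+ + c$ has $\Rnorm{f}=\sum_i|a_i|<\infty$, since the constant contributes nothing and any affine part is itself a finite sum of ReLUs. For the ``only if'' direction I would first record a clean structural characterization and then argue the contrapositive. \textbf{Structural lemma:} a CPWL function $f$ (finitely many pieces) is a finite-width two-layer net (up to an affine term) if and only if its gradient-discontinuity set is a union of finitely many \emph{full} hyperplanes, across each of which the jump in $\nabla f$ is constant. The forward direction is clear; conversely, if this holds, subtract $g=\sum_j a_j[\vw_j^\T\vx-b_j]_+$ with $a_j$ chosen to match each constant jump, so that $f-g$ is $C^1$ and CPWL, hence affine. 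Since affine functions are themselves finite-width nets, this reduces the conjecture to the statement $(\star)$: \emph{if a CPWL function $f$ has a codimension-two ``edge'' $E$ --- a locus inside a boundary-carrying hyperplane $H_0$ across which the jump in $\nabla f$ is non-constant --- then $\Rnorm{f}=\infty$.} This generalizes Proposition~\ref{prop:pwlcpt} (compact support, conditions (a)/(b)) to arbitrary CPWL functions and arbitrary edge geometry.

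To prove $(\star)$ I would use the Fourier-decay lower bound underlying Proposition~\ref{prop:pwlcpt}. Combining the Fourier slice theorem $\F_b\Rdn{f}(\vw,\sigma)=\hat f(\sigma\vw)$ with the elementary inequality $\|\partial_b^{d+1}\Rdn{f}(\vw,\cdot)\|_{L^1(db)}\ge \sup_\sigma|\sigma|^{d+1}|\hat f(\sigma\vw)|$ yields
\[
  \Rnorm{f}=\gamma_d\|\partial_b^{d+1}\Rdn{f}\|_1 \;\ge\; \gamma_d\int_{\S^{d-1}}\sup_{\sigma\in\R}|\sigma|^{d+1}\,|\hat f(\sigma\vw)|\,d\vw,
\]
so it suffices to show the right-hand integral diverges. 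I would isolate the leading singular contribution of the edge: after rotating so that $E\subset\{x_1=x_2=0\}$, $f$ agrees to leading order with a function positively homogeneous of degree one in $(x_1,x_2)$ (the cone profile carrying the non-constant jump) and supported, in the $d-2$ directions along $E$, on a bounded set. The degree-one homogeneity contributes high-frequency decay $\sim|\xi_\perp|^{-3}$ in the two normal directions, while --- crucially --- the \emph{finite extent} of $E$ replaces the delta-like concentration of an infinite edge by genuinely decaying factors in the edge directions. The net effect is that $\hat f(\sigma\vw)$ decays like $|\sigma|^{-(d+1)}$ spread over a \emph{positive-measure} cone of directions $\vw$, so that $\sup_\sigma|\sigma|^{d+1}|\hat f(\sigma\vw)|$ is bounded below by a function of $\vw$ that is non-integrable on $\S^{d-1}$ (the divergence concentrating at directions tangent to $E$ and at those normal to $H_0$). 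This is exactly the obstruction absent for a genuine finite-width net, whose $\hat f$ lives on the measure-zero set of ray directions $\{\pm\vw_i\}$ and hence contributes nothing to the angular integral; this contrast is what makes the dichotomy sharp.

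The main obstacle --- and the reason this is stated as a conjecture rather than a theorem --- is \textbf{ruling out cancellation}. The bound above controls $\Rnorm{f}$ through the genuine (function-valued) decay of $\hat f$ in each direction, but $\hat f$ is the sum of the singular contributions of \emph{all} edges of $f$, and a priori the divergent leading terms of several edges could cancel precisely along the directions where each is individually non-integrable. The hypotheses (a)/(b) in Proposition~\ref{prop:pwlcpt} are exactly geometric guarantees that no such cancellation occurs (non-parallel normals separate the singular directions; global convexity fixes the signs of the jumps). Proving the full conjecture requires showing cancellation can \emph{never} rescue a non-finite-width CPWL function, including the degenerate cases left open by Proposition~\ref{prop:pwlcpt}, such as compactly supported functions all of whose boundary normals are parallel.

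A natural route around cancellation is localization: multiply $f$ by a smooth cutoff $\chi$ supported near a single edge, reduce to the single-edge model where the angular integral provably diverges, and transfer the conclusion back by showing $\Rnorm{f}$ dominates a constant multiple of $\Rnorm{\chi f}$. The difficulty --- which I expect to be the crux --- is that both $\Rnorm{\cdot}$ and the quantity $\sup_\sigma|\sigma|^{d+1}|\hat f(\sigma\vw)|$ are \emph{nonlocal}: passing to $\chi f$ convolves $\hat f$ with the rapidly decaying $\hat\chi$, and one must verify that this convolution preserves rather than smears away the critical $|\sigma|^{-(d+1)}$ tail in the relevant cone of directions, uniformly enough to keep the angular integral divergent. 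Establishing such a stable localization --- or, alternatively, a direct argument that the total-variation measure $\alpha$ with $\Rdns{\alpha}=\Delta f$ cannot have finite mass in the presence of an edge --- is where the remaining work lies.
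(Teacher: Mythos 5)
First, a point of status: the paper does \emph{not} prove this statement --- it is posed as Conjecture~\ref{conj:pwl}, and the authors explicitly note they cannot even rule out a compactly supported piecewise linear function with finite $\cR$-norm. So there is no paper proof to compare against; the paper's only rigorous evidence is Proposition~\ref{prop:pwlcpt} (compact support, conditions (a)/(b)), proved via the Fourier-decay necessary condition of Proposition~\ref{prop:fourier} by exhibiting a \emph{single} direction $\vw_1$ along which $\widehat{\Delta f}(\sigma\vw_1)$ is asymptotically a nonzero constant. Your proposal is, as you yourself say, a program rather than a proof. Within it, two pieces are genuinely correct and go beyond what the paper states: the ``if'' direction via Example~\ref{ex:finitenet}, and the structural lemma reducing the conjecture to the single-edge statement $(\star)$ (the gradient jump across a boundary is automatically normal to its hyperplane by continuity of $f$, so a constant jump is cancelled by one ReLU, and a $C^1$ piecewise linear function is affine). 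That reduction is sound and would be a real first step toward the conjecture.

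The gaps are concrete. (1) Your displayed inequality is not justified as written: the identity $\Rnorm{f}=\gamma_d\|\partial_b^{d+1}\Rdn{f}\|_1$ is Proposition~\ref{prop:ell1} and needs $f\in L^1(\Rd)$ and $\Delta^{(d+1)/2}f\in L^1(\Rd)$, which fails for every CPWL function of non-compact support (and a finite-width net is never $L^1$); the correct vehicle is the paper's measure-theoretic one, $\F_b\alpha(\vw,\sigma)=\gamma_d|\sigma|^{d-1}\widehat{\Delta f}(\sigma\vw)$ with $\|\F_b\alpha\|_\infty\le\|\alpha\|_1$, which itself requires $\Delta f$ to have \emph{finite} total variation --- again essentially compact support. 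Moreover, upgrading the paper's per-direction sup bound to your integrated angular bound requires disintegrating $\alpha$ over $\S^{d-1}$ and handling a possibly singular angular marginal (for finite-width nets it is purely atomic); this is plausible but is an unproven strengthening of Proposition~\ref{prop:fourier}. (2) The localization step is stated in a form that is false: $\Rnorm{f}\geq c\,\Rnorm{\chi f}$ cannot hold, since an affine $f$ has $\Rnorm{f}=0$ while $\Rnorm{\chi f}>0$; any workable version needs lower-order terms, e.g.\ $\Rnorm{\chi f}\leq C_\chi\bigl(\Rnorm{f}+\Lip(f)\bigr)$, and no such multiplication estimate is available in the paper. (3) Most fundamentally, the no-cancellation step --- which you correctly identify as the crux --- is exactly what conditions (a)/(b) of Proposition~\ref{prop:pwlcpt} were designed to assume away: two parallel boundary pieces lying in the same hyperplane with opposite-signed jumps of equal surface-measure weight kill the asymptotically-constant leading term, and neither the paper nor your sketch has a tool for that case. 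So: correct skeleton and a useful reduction, but the argument as it stands does not establish anything beyond Proposition~\ref{prop:pwlcpt}, and the conjecture remains open.
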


\subsection{Depth Separation}\label{subsec:depth}
In an effort to understand the power of deeper networks, there has been much work showing how some functions can be much more easily approximated {\em in terms of number of required units} by deeper networks compared to shallower ones, including results showing how functions that can be well-approximated by three-layer networks require a much larger number of units to approximate if using a two-layer network (e.g.~\citet{Pinkus99,Telgarsky2016,Liang2016,Safran2017,Yarotsky2017}).
The following example shows that, also in terms of the norm, such a depth separation exists for ReLU nets:

\begin{myEx}\label{ex:pyramid}
The pyramid function $f(\vx) = [1-\|\vx\|_1]_+$ is a compactly supported piecewise linear function that satisfies condition $(b)$ of Proposition \ref{prop:pwlcpt}, hence has infinite representational cost as a two-layer ReLU network ($\Rbar{f} = \Rbarone{f} = +\infty$), but can be exactly represented as a finite-width three-layer ReLU network.
\end{myEx}

Interestingly, this result shows that, in terms of the norm, we have a qualitative rather then quantitative depth separation: the required norm with three layers is finite, while with only two layers it is not merely very large, but {\em infinite}.  In contrast, in standard depth separation results, the separation is quantitative: we can compensate for a decrease in depth and use more neurons to achieve the same approximation quality.  
It would be interesting to further strengthen Example \ref{ex:pyramid} by obtaining a quantitative lower bound on the norm required to $\epsilon$-approximate the pyramid with an infinite-width two-layer ReLU network.

\subsection{The $\cR$-norm is not a RKHS norm}
\label{sec:rkhs}
There is an ongoing debate in the community on whether neural network learning can be simulated or replicated by kernel machines with the ``right'' kernel.  In this context, it is interesting to ask whether the inductive bias we uncover can be captured by a kernel, or in other words whether the $\cR$-norm is an RKHS (semi-)norm.  The answer is no:
\begin{myProp}\label{prop:rkhs}
The $\cR$-norm is not a RKHS (semi-)norm.
\end{myProp}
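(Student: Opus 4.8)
The plan is to exploit a structural property that every RKHS (semi-)norm must possess but the $\cR$-norm does not. Any RKHS (semi-)norm is induced by a positive semi-definite inner product, and hence satisfies the \emph{parallelogram law}
\[
\Rnorm{f+g}^2 + \Rnorm{f-g}^2 = 2\Rnorm{f}^2 + 2\Rnorm{g}^2 .
\]
By the Jordan--von Neumann theorem and its semi-norm analogue, this identity is equivalent to the (semi-)norm arising from a (semi-)inner product. Therefore, to prove the $\cR$-norm is not an RKHS (semi-)norm, it suffices to exhibit a single pair of functions for which the parallelogram law fails.

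The natural counterexample comes directly from Example~\ref{ex:finitenet}, which computes the $\cR$-norm of finite-width ReLU networks exactly. First I would fix two parameter pairs $(\vw_1,b_1),(\vw_2,b_2)\in\S^{d-1}\times\R$ whose four reflections $(\vw_1,b_1),(\vw_2,b_2),(-\vw_1,-b_1),(-\vw_2,-b_2)$ are all distinct; for $d\geq 2$ one may simply take $\vw_1,\vw_2$ to be two distinct, non-antipodal unit vectors. Setting $f(\vx)=[\vw_1^\T\vx-b_1]_+$ and $g(\vx)=[\vw_2^\T\vx-b_2]_+$, Example~\ref{ex:finitenet} gives $\Rnorm{f}=\Rnorm{g}=1$. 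Since $f+g$ and $f-g$ still have distinct parameter pairs, the same computation yields $\Rnorm{f+g}=|1|+|1|=2$ and $\Rnorm{f-g}=|1|+|-1|=2$.

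Plugging these values into the parallelogram law gives $\Rnorm{f+g}^2+\Rnorm{f-g}^2 = 8$ on the left, while $2\Rnorm{f}^2+2\Rnorm{g}^2 = 4$ on the right, so the identity fails. This is exactly the $\ell^1$-like behavior one expects: on sums of ReLU units with distinct parameters the $\cR$-norm reduces to the $\ell^1$-norm of the outer weights, and $\ell^1$ is not a Hilbert (semi-)norm. Moreover, $f$, $g$, and $f\pm g$ are not affine, so these are genuine nonzero semi-norm values that descend to any quotient by the kernel of the semi-norm (the affine functions); the failure of the parallelogram law is thus not an artifact of $\Rnorm{\cdot}$ being only a semi-norm.

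I do not anticipate a serious obstacle here. The only point requiring care is making the reduction airtight in the \emph{semi}-norm setting rather than the norm setting, \ie confirming that an RKHS semi-norm is induced by a positive semi-definite inner product and hence still obeys the parallelogram identity (the cross terms $\pm 2\langle f,g\rangle$ cancel regardless of degeneracy). Once this is in place, the explicit two-ReLU computation closes the argument immediately.
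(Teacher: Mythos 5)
Your proof is correct and takes essentially the same route as the paper: exhibit two single-unit ReLU functions, use Example~\ref{ex:finitenet} to compute $\Rnorm{f}=\Rnorm{g}=1$ and $\Rnorm{f\pm g}=2$, and observe that the parallelogram law fails. In fact your insistence that the four reflected parameter pairs be distinct (\ie $\vw_1,\vw_2$ non-antipodal) is slightly more careful than the paper's ``$\vw_1,\vw_2$ distinct,'' which as stated admits $\vw_2=-\vw_1$, where $f_1-f_2$ is linear, $\Rnorm{f_1-f_2}=0$, $\Rnorm{f_1+f_2}=2$, and the parallelogram identity $4+0=2(1+1)$ actually holds.
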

This is seen immediately by the failure of the parallelogram law to hold. For example, if $f_1(\vx) = [\vw_1^\T\vx]_+$, $f_2 = [\vw_2^\T\vx]_+$ with $\vw_1,\vw_2\in\S^{d-1}$ distinct, then by Example \ref{ex:finitenet} we have $\Rnorm{f_1} = \Rnorm{f_2} = 1$, while $\Rnorm{f_1+f_2} = \Rnorm{f_1-f_2} = 2$, and so ${2(\Rnorm{f_1}^2+\Rnorm{f_2}^2) \neq \Rnorm{f_1+f_2}^2 + \Rnorm{f_1-f_2}^2}$.

\subsection{Generalization implications}
\cite{Neyshabur15} shows that training an unbounded-width neural network while regularizing the $\ell_2$ norm of the weights results in a sample complexity proportional to a variant\footnote{Their analysis does not allow for unregularized bias, but can be extended to allow for it.} of $\Rbar{f}$. This paper gives an explicit characterization of $\Rbar{f}$ and thus of the sample complexity of learning a function using regularized unbounded-width neural networks.

\subsubsection*{Acknowledgments}
We are grateful to Matus Telgarsky (University of Illinois, Urbana-Champaign) for stimulating discussions, including discussing his yet unpublished work with us. In particular, Telgarsky helped us refine our view of radial bumps and realize a fixed radial function can have finite norm in all dimensions. We would also like to thank Guillaume Bal (University of Chicago) for helpful discussions regarding the Radon transform, and Jason Altschuler (MIT) for pointers regarding convergence of measures and Prokhorov's Theorem. Some of the work was done while DS and NS were visiting the Simons Institute for Theoretical Computer Science as participants in the Foundations of Deep Learning Program. NS was partially supported by NSF awards 1764032 and 1546500. DS was partially supported by the Israel Science Foundation (grant No.~31/1031), and by the Taub Foundation. RW and GO were partially supported by AFOSR FA9550‐18‐1‐0166, NSF IIS‐1447449, NSF DMS‐1930049, and DMS‐1925101.
\bibliography{refs}
\bibliographystyle{iclr2020_conference}

\newpage
\appendix
\section*{Appendices}
\addcontentsline{toc}{section}{Appendices}
\renewcommand{\thesubsection}{\Alph{subsection}}
\subsection{Infinite-width Nets}\label{appsec:infnets}
\paragraph{Measures and infinite-width nets}
Let $\alpha$ be a signed measure
\footnote{To be precise, we assume $\alpha$ is a signed \emph{Radon} measure; see, e.g., \cite{malliavin2012integration} for a formal definition. We omit the word ``Radon'' and simply call $\alpha$ a measure to avoid confusion with the Radon transform, which is central to this work.} 
defined on $\SR$, and let $\|\alpha\|_{1} = \int d|\alpha|$ denote its total variation norm. We let $M(\SR)$ denote the space of measures on $\SR$ with finite total variation norm. Since $\SR$ is a locally compact space, $M(\SR)$ is the Banach space dual of $C_0(\SR)$, the space of continuous functions on $\SR$ vanishing at infinity \cite[Chapter 2, Theorem 6.6]{malliavin2012integration}, and
\begin{equation}\label{eq:tvnorm}
    \|\alpha\|_1 = \sup \left\{\int \varphi\, d\alpha : \varphi\in C_0(\SR), \|\varphi\|_\infty \leq 1 \right\}.
\end{equation}
For any $\alpha\in M(\SR)$ and $\varphi\in C_0(\SR)$, we often use $\langle \alpha, \varphi \rangle$ to denote $\int \varphi d\alpha$. 

Any $\alpha \in M(\SR)$ can be extended uniquely to a continuous linear functional on $C_b(\SR)$, the space continuous and bounded functions on $\SR$. In particular, since the function $\varphi(\vw,b) =[\vw^\T\vx - b]_+-[-b]_+$ belongs to $C_b(\SR)$, we see that the infinite-width net
\begin{equation}
	h_\alpha(\vx) := \int_{\SR}([\vw^\T\vx - b]_+-[-b]_+) d\alpha(\vw,b)\\
\end{equation}
is well-defined for all $\vx\in\Rd$.

\begin{myRem}
\label{rem:1}
{\em Our definition of an infinite-width net in differs slightly from \cite{savarese2019infinite}: we integrate a constant shift of the ReLU $[\vw^\T\vx-b]_+-[-b]_+$ with respect to the measure $\alpha(\vw,b)$ rather than $[\vw^\T\vx-b]_+$ as in \cite{savarese2019infinite}. As shown above, this ensures the integral is always well-defined for any measure $\alpha$ with finite total variation. Alternatively, we could have restricted to measures that have finite first moment, \ie $\int_{\SR}|b|\,d|\alpha|(\vw,b) < \infty$, which ensures the definition $\htil_{\alpha}(\vx) := \int_{\S^{d-1}\times \R}[\vw^\T\vx-b]_+ d\alpha(\vw,b)$ proposed in \cite{savarese2019infinite} is always well-defined. However, restricting to measures with finite first moment complicates the function space description, and excludes from our analysis certain functions that are still naturally defined as limits of bounded norm finite-width networks, and so we opt for the definition above instead. In the case that $\alpha$ has a finite first moment the difference between definitions is immaterial since $h_{\alpha}$ and $\htil_{\alpha}$ are equal up to an additive constant, which implies they have the same representational cost under $\Rbar{\cdot}$ and $\Rbarone{\cdot}$.}
\end{myRem}

\paragraph{Even and odd measures} 
We say $\alpha \in M(\SR)$ is \emph{even} if 
\begin{equation}
  \int_{\SR} \varphi(\vw,b) d\alpha(\vw,b) =  \int_{\SR} \varphi(-\vw,-b) d\alpha(\vw,b)~~\text{for all}~~\varphi \in C_0(\SR)
\end{equation}
or $\alpha$ is \emph{odd} if
\begin{equation}
     \int_{\SR} \varphi(\vw,b) d\alpha(\vw,b) =  -\int_{\SR} \varphi(-\vw,-b) d\alpha(\vw,b)~~\text{for all}~~\varphi \in C_0(\SR).
 \end{equation}
It is easy to show every measure $\alpha \in M(\SR)$ is uniquely decomposable as $\alpha = \alpha^+ + \alpha^-$ where $\alpha^+$ is even and $\alpha^-$ is odd, which we call the even and odd decomposition of $\alpha$. For example, if $\alpha$ has a density $\mu(\vw,b)$ then $\alpha^+$ is the measure with density $\mu^+(\vw,b) = \frac{1}{2}(\mu(\vw,b) + \mu(-\vw,-b))$ and $\alpha^-$ is the measure with density $\mu^-(\vw,b) = \frac{1}{2}(\mu(\vw,b) - \mu(-\vw,-b))$.

We let $M(\Pd)$ denote the subspace of all even measures in $M(\SR)$, which is the Banach space dual of $C_0(\Pd)$, the subspace of all even functions $\varphi \in C_0(\SR)$.  
Even measures play an important role in our results because of the following observations.

Let $\alpha \in M(\SR)$ with even and odd decomposition $\alpha = \alpha^+ + \alpha^-$. Then we have $h_\alpha = h_{\alpha^+} + h_{\alpha^-}$. By the identity $[t]_+ + [-t]_+ = |t|$ we can show
\begin{align}\label{eq:evennet}
	h_{\alpha^+}(\vx) = \frac{1}{2}\int_{\SR}(|\vw^\T\vx + b|-|b|) d\alpha^+(\vw,b).
\end{align}
Likewise, by the identity $[t]_+ - [-t]_+ = t$ we have
\begin{equation}
	h_{\alpha^-}(\vx) = \vv_0^\T \vx.
\end{equation}
where $\vv_0 = \frac{1}{2}\int_{\SR} \vw d\alpha^-(\vw,b)$. Hence, $h_\alpha$ decomposes into a sum of a component with absolute value activations and a linear function. In particular, if $f = h_{\alpha,\vv,c}$ for some $\alpha\in M(\SR), \vv\in\Rd,c\in \R$, letting $\alpha^+$ be the even part of $\alpha$, we always have $f = h_{\alpha^+,\vv',c}$ for some $\vv'\in\Rd$. In other words, we lose no generality by restricting ourselves to infinite width nets of the form $f = h_{\alpha,\vv,c}$ where $\alpha$ is even (\ie $\alpha \in M(\Pd)$).

We will need the following fact about even and odd decompositions of measures under the total variation norm:
\begin{myProp}\label{prop:evenodd}
Let $\alpha \in M(\SR)$ with $\alpha = \alpha^+ + \alpha^-$ where $\alpha^+$ is even and $\alpha^-$ is odd. Then $\|\alpha^+\|_1 \leq \|\alpha\|_1$ and $\|\alpha^-\|_1 \leq \|\alpha\|_1$.
\end{myProp}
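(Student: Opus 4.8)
The plan is to realize the even--odd decomposition as a sum and difference of $\alpha$ with its reflection, and then exploit that the total variation norm is invariant under reflection. Concretely, I would introduce the antipodal involution $\tau:\SR\to\SR$, $\tau(\vw,b) = (-\vw,-b)$, and the pushforward measure $\tau_\#\alpha$ defined by $\int \varphi\,d(\tau_\#\alpha) = \int (\varphi\circ\tau)\,d\alpha$ for all $\varphi\in C_0(\SR)$. The definitions of even and odd measures in the text say exactly that $\alpha$ is even iff $\tau_\#\alpha = \alpha$ and odd iff $\tau_\#\alpha = -\alpha$. A one-line check using $\tau\circ\tau = \mathrm{id}$ shows that $\tfrac{1}{2}(\alpha+\tau_\#\alpha)$ is even and $\tfrac{1}{2}(\alpha-\tau_\#\alpha)$ is odd, so by the uniqueness of the even--odd decomposition I may identify $\alpha^+ = \tfrac{1}{2}(\alpha+\tau_\#\alpha)$ and $\alpha^- = \tfrac{1}{2}(\alpha-\tau_\#\alpha)$.

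The key step is the invariance $\|\tau_\#\alpha\|_1 = \|\alpha\|_1$. I would establish this directly from the dual characterization \eqref{eq:tvnorm}: since $\tau$ is a homeomorphism of $\SR$, the map $\varphi\mapsto \varphi\circ\tau$ is a bijection of $C_0(\SR)$ onto itself preserving the sup-norm, whence
\begin{equation*}
\|\tau_\#\alpha\|_1 = \sup_{\|\varphi\|_\infty\leq 1}\int \varphi\,d(\tau_\#\alpha) = \sup_{\|\varphi\|_\infty\leq 1}\int (\varphi\circ\tau)\,d\alpha = \sup_{\|\phi\|_\infty\leq 1}\int \phi\,d\alpha = \|\alpha\|_1 ,
\end{equation*}
substituting $\phi = \varphi\circ\tau$ in the penultimate supremum.

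With this in hand the conclusion follows immediately from the triangle inequality for the total variation norm:
\begin{equation*}
\|\alpha^+\|_1 = \tfrac{1}{2}\|\alpha+\tau_\#\alpha\|_1 \leq \tfrac{1}{2}\big(\|\alpha\|_1 + \|\tau_\#\alpha\|_1\big) = \|\alpha\|_1 ,
\end{equation*}
and identically $\|\alpha^-\|_1 = \tfrac{1}{2}\|\alpha-\tau_\#\alpha\|_1 \leq \|\alpha\|_1$. I do not anticipate a genuine obstacle here; the only point that merits care is the invariance $\|\tau_\#\alpha\|_1 = \|\alpha\|_1$, which I would argue through the duality with $C_0(\SR)$ exactly as above, rather than by manipulating the measures directly, so as to sidestep any discussion of reference measures or Jacobian factors.
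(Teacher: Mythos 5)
Your proof is correct, but it takes a genuinely different route from the paper's. The paper works entirely on the dual side: it decomposes an arbitrary test function $\varphi \in C_0(\SR)$ into even and odd parts, observes that $\int \varphi\, d\alpha^+ = \int \varphi_+\, d\alpha^+$ with $\|\varphi_+\|_\infty \leq \|\varphi\|_\infty$, and concludes that in the dual characterization \eqref{eq:tvnorm} of $\|\alpha^+\|_1$ one may restrict the supremum to \emph{even} test functions, against which $\alpha^-$ integrates to zero; the bound then falls out of a chain of suprema. You instead work on the primal side: you write the projections explicitly as $\alpha^\pm = \tfrac{1}{2}(\alpha \pm \tau_\#\alpha)$ for the antipodal involution $\tau$, prove the isometry $\|\tau_\#\alpha\|_1 = \|\alpha\|_1$ (your duality argument for this is sound, since $\varphi \mapsto \varphi \circ \tau$ is a sup-norm-preserving bijection of $C_0(\SR)$), and finish with the triangle inequality. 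Each approach buys something: yours is more conceptual and more general --- the even/odd projections are exhibited as averages of isometries, so the norm bound is automatic, and the same argument works verbatim for any isometric involution (or group averaging); moreover, applying $\tau_\#$ to $\alpha = \alpha^+ + \alpha^-$ gives $\tau_\#\alpha = \alpha^+ - \alpha^-$, so your explicit formulas actually \emph{prove} the uniqueness of the decomposition rather than merely invoking it, a small strengthening you could make explicit. The paper's argument, by contrast, produces a useful by-product: the fact that the total variation norm of an even measure is computed by supremizing over even test functions only, which is implicitly reused later (e.g., in the proof of Lemma \ref{lem:main}, where $\|\alpha\|_1$ for $\alpha \in M(\Pd)$ is evaluated against even functions in $\Sc(\Pd)$ and $C_0(\Pd)$). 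There is no gap in your argument; the one step that merits the care you gave it is exactly the invariance $\|\tau_\#\alpha\|_1 = \|\alpha\|_1$, and your decision to route it through the $C_0$ duality rather than through reference measures is the right call.
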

\begin{proof}
For any $\varphi \in C_0(\SR)$ we can write $\varphi = \varphi_+ + \varphi_-$ where $\varphi_+(\vw,b) = \frac{1}{2}(\varphi(\vw,b) + \varphi(-\vw,-b))$ is even and $\varphi_-(\vw,b) = \frac{1}{2}(\varphi(\vw,b) - \varphi(-\vw,-b))$ is odd. Note that $\int \varphi\, d\alpha^+ = \int \varphi_+\,d\alpha^+$ since $\int \varphi_- d\alpha^+ = 0$. Furthermore, if $|\varphi(\vw,b)| \leq 1$ for all $(\vw,b)\in\SR$ we see that $|\varphi_+(\vw,b)| \leq \frac{1}{2}(|\varphi(\vw,b)|+|\varphi(-\vw,-b)|)\leq 1$ for all $(\vw,b)\in\SR$. Therefore, in the dual definition of $\|\alpha^+\|_1$ given in \eqref{eq:tvnorm} it suffices to take the supremum over all even functions $\varphi \in C_0(\SR)$. Hence,
\begin{align}
      \|\alpha\|_1 & = \sup \left\{\int \varphi\, d\alpha : \varphi\in C_0(\SR), \|\varphi\|_\infty \leq 1 \right\}\\
      & = \sup \left\{\int \varphi\, d\alpha^+ + \int \varphi\, d\alpha^-: \varphi\in C_0(\SR), \|\varphi\|_\infty \leq 1 \right\}\\
      & \geq \sup \left\{\int \varphi\, d\alpha^+ + \int \varphi\, d\alpha^-: \varphi\in C_0(\SR), \|\varphi\|_\infty \leq 1, \varphi~\text{even}\right\}\\
      & = \sup \left\{\int \varphi\, d\alpha^+: \varphi\in C_0(\SR), \|\varphi\|_\infty \leq 1, \varphi~\text{even}\right\}\\
      & = \|\alpha^+\|_1
\end{align}
A similar argument shows $\|\alpha^-\|_1 \leq \|\alpha\|_1$.
\end{proof}

\paragraph{Lipschitz continuity of infinite-width nets}

Define $\Lip(\Rd)$ to be the space of all real-valued Lipschitz continuous functions on $\Rd$.
For any $f\in \Lip(\Rd)$, define $\|f\|_L := \sup_{\vx\neq\vy}\frac{|f(\vx)-f(\vy)|}{\|\vx-\vy\|}$, \ie the smallest possible Lipschitz constant. The following result shows that $\Lip(\Rd)$ is a natural space to work in when considering infinite-width nets:
\begin{myProp}[Infinite-width nets are Lipschitz]\label{prop:lip}
Let $f=h_{\alpha,\vv,c}$ for any $\alpha \in M(\SR), \vv\in\Rd, c\in\R$. Then $f\in \Lip(\Rd)$ with  $\|f\|_L \leq \|\alpha\|_1 + \|\vv\|$.
\end{myProp}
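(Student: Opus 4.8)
The plan is to estimate the difference $f(\vx) - f(\vy)$ directly and bound it by $(\|\alpha\|_1 + \|\vv\|)\|\vx - \vy\|$. First I would unfold the definition $f = h_{\alpha,\vv,c}$, writing
\begin{equation*}
f(\vx) = \int_{\SR}\bigl([\vw^\T\vx - b]_+ - [-b]_+\bigr)\,d\alpha(\vw,b) + \vv^\T\vx + c,
\end{equation*}
and then subtract the same expression evaluated at $\vy$. The key structural observation is that the input-independent offset $-[-b]_+$ and the constant $c$ cancel exactly in the difference, leaving
\begin{equation*}
f(\vx) - f(\vy) = \int_{\SR}\bigl([\vw^\T\vx - b]_+ - [\vw^\T\vy - b]_+\bigr)\,d\alpha(\vw,b) + \vv^\T(\vx - \vy).
\end{equation*}

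The heart of the argument is the $1$-Lipschitz continuity of the ReLU: for each fixed $(\vw,b)$,
\begin{equation*}
\bigl|[\vw^\T\vx - b]_+ - [\vw^\T\vy - b]_+\bigr| \leq |\vw^\T(\vx - \vy)| \leq \|\vw\|\,\|\vx - \vy\| = \|\vx - \vy\|,
\end{equation*}
where the final equality uses $\vw \in \S^{d-1}$. Since this bound is uniform in $(\vw,b)$, I can pull it through the integral using the standard total variation estimate $\bigl|\int \varphi\, d\alpha\bigr| \leq \int |\varphi|\, d|\alpha| \leq \|\varphi\|_\infty \|\alpha\|_1$, which bounds the integral term by $\|\alpha\|_1\|\vx-\vy\|$. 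The linear term is controlled by Cauchy--Schwarz, $|\vv^\T(\vx-\vy)| \leq \|\vv\|\,\|\vx-\vy\|$. Combining these two estimates with the triangle inequality gives $|f(\vx)-f(\vy)| \leq (\|\alpha\|_1 + \|\vv\|)\|\vx-\vy\|$, which is precisely the claim.

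There is essentially no deep obstacle here; the only point requiring a moment of care is justifying that the integrand $\varphi(\vw,b) = [\vw^\T\vx-b]_+ - [\vw^\T\vy-b]_+$ is $\alpha$-integrable, so that the total variation estimate applies. This follows because $\varphi$ is continuous and bounded (by $\|\vx-\vy\|$) on $\SR$, hence lies in $C_b(\SR)$, and every $\alpha \in M(\SR)$ extends to a continuous linear functional on $C_b(\SR)$ --- exactly the fact already invoked in the preceding discussion to guarantee that $h_\alpha$ is well-defined. No further regularity or moment assumption on $\alpha$ is needed.
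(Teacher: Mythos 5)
Your proof is correct, and it takes a more direct route than the paper's. The paper does not estimate the ReLU integrand directly: it first splits $\alpha$ into its even and odd parts $\alpha = \alpha^+ + \alpha^-$, rewrites the even component via the identity $[t]_+ + [-t]_+ = |t|$ as an integral of absolute-value ridge functions (equation \eqref{eq:evennet}), applies the reverse triangle inequality $\left||\vw^\T\vx - b| - |\vw^\T\vy - b|\right| \leq |\vw^\T(\vx - \vy)|$ to get $\|h_{\alpha^+}\|_L \leq \|\alpha^+\|_1/2$, observes that the odd component collapses to a linear function $\vv_0^\T\vx$ with $\|\vv_0\| \leq \|\alpha^-\|_1/2$, and then assembles the claim via the triangle inequality and Proposition \ref{prop:evenodd}. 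You instead exploit the $1$-Lipschitzness of the ReLU pointwise in $(\vw,b)$ and push the uniform bound $\|\vx-\vy\|$ through the total variation estimate, which reaches the stated constant $\|\alpha\|_1 + \|\vv\|$ in one step, with no decomposition and no appeal to Proposition \ref{prop:evenodd}; your closing remark correctly identifies the only measure-theoretic point (the integrand lies in $C_b(\SR)$, against which any $\alpha \in M(\SR)$ acts continuously, exactly as the paper sets up when defining $h_\alpha$). What the paper's longer detour buys is a sharper constant, $\|\alpha\|_1/2$, for even measures --- and, structurally, the even/odd splitting with the odd part being purely linear is machinery the paper reuses elsewhere (e.g., in Appendix B and in the analysis of $\Rbar{f}$) --- but for the proposition as stated your shorter argument fully suffices, including for the later application in Lemma \ref{lem:rbareq}, which only invokes the bound $\|f\|_L \leq \|\alpha\|_1$.
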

\begin{proof}
First we prove for all even $\alpha \in M(\Pd)$, $\|h_{\alpha}\|_L \leq \|\alpha\|_1/2$.

By the reverse triangle inequality we have $\left| |\vw^\T \vx - b| - |\vw^\T \vy-b|\right| \leq \left| \vw^\T(\vx-\vy)\right|$ for all $\vx,\vy \in \Rd$, $\vw \in \S^{d-1}$, $b\in\R$. Therefore, using identity \eqref{eq:evennet}, for all $\vx,\vy \in \Rd$ we see that
\begin{align}
    |h_\alpha(\vx)-h_\alpha(\vy)| &  
    = \frac{1}{2}\left|\int_{\SR} \left(|\vw^\T \vx - b| - |\vw^\T \vy - b|\right)d\alpha(\vw,b) \right|\\
    & \leq \frac{1}{2}\int_{\SR} \left| |\vw^\T \vx - b| - |\vw^\T \vy-b|\right| d|\alpha|(\vw,b)\\
    & \leq \frac{1}{2}\int_{\SR}  |\vw^\T(\vx-\vy)| d|\alpha|(\vw,b)\\
    & \leq \frac{1}{2}\|\vx-\vy\|\|\alpha\|_1
\end{align}
which shows $h_\alpha$ is Lipschitz with $\|h_\alpha\|_L \leq \|\alpha\|_1/2$.

More generally, for any infinite-width net $f = h_{\alpha,\vv,c}$ with $\alpha \in M(\SR)$, $\vv\in\Rd$ and $c\in\R$. From the even and odd decomposition $\alpha = \alpha^+ + \alpha^-$ we have $f = h_{\alpha^+,\vv_0+\vv,c}$, where $\vv_0 = \frac{1}{2}\int_{\SR} \vw d\alpha^-(\vw,b)$. Hence, $\|\vv_0\|_2 \leq \|\alpha^-\|_1/2$, Therefore, by the triangle inequality, $\|f\|_L \leq \|\alpha^+\|_1/2 + \|\alpha^-\|_1/2 + \|\vv\| \leq \|\alpha\|_1 + \|\vv\|$, which gives the claim.
\end{proof}

\subsection{Optimization characterization of representational cost}\label{appsec:opt}
Here we establish the optimization equivalents of the representational costs $\Rbar{f}$ and $\Rbarone{f}$ given in \eqref{eq:opt0} and $\eqref{eq:opt1}$. 

As an intermediate step, we first give equivalent expressions for $\Rbar{f}$ and $\Rbarone{f}$ in terms of sequences finite-width two-layer ReLU networks converging pointwise to $f$. For this we need to introduce some additional notation and definitions.

We let $\mathcal{A}(\SR)$ denote the space of all measures given by a finite linear combination of Diracs, \ie all $\alpha \in M(\SR)$ of the form $\alpha = \sum_{i=1}^k a_i \delta_{(\vw_i,b_i)}$ for some $a_i \in \R$, $(\vw_i,b_i)\in \SR$, $i=1,...,k$, where $\delta_{(\vw,b)}$ denotes a Dirac delta at location $(\vw,b)\in\SR$. We call any  $\alpha \in \mathcal{A}(\SR)$ a \emph{discrete} measure. 

Note there is a one-to-one correspondence between discrete measures and finite-width two layer ReLU nets (up to a bias term). Namely, for any $\theta \in \Theta'$ defining a finite-width net $g_\theta(\vx) = \sum_{i=1}^k a_i [\vw_i^\T \vx - b_i]_+ + c$, setting $\alpha = \sum_{i=1}^k a_i \delta_{(\vw_i,b_i)}$ we have $f = h_{\alpha,c'}$ with $c' = g_\theta(\bm 0)$. We write $\theta \in \Theta' \leftrightarrow \alpha \in \mathcal{A}(\SR)$ to indicate this correspondence. Furthermore, in this case $C(\theta) = \sum_{i=1}^k |a_i| = \|\alpha\|_1$.

We also recall some facts related to the convergence of sequences of measures. Let $C_b(\SR)$ denote the set of all continuous and bounded functions on $\SR$. A sequence of measures $\{\alpha_n\}$, with $\alpha_n \in M(\SR)$ is said to converge \emph{narrowly} to a measure $\alpha \in M(\SR)$ if  $\int \varphi\, d\alpha_n \rightarrow \int \varphi\, d\alpha$ for all $\varphi \in C_b(\SR)$. Also, a sequence $\{\alpha_n\}$ is called \emph{tight} if for all $\eps > 0$ there exists a compact set $K_\eps \subset \SR$ such that $|\alpha_n|(K^c_\eps) \leq \eps$ for all $n$ sufficiently large. Every narrowly convergent sequence of measures is tight \cite[Theorem 6.8]{malliavin2012integration}. Conversely, any sequence $\{\alpha_n\}$ that is tight and uniformly bounded in total variation norm has a narrowly convergent subsequence; this is due to a version of Prokhorov's Theorem for signed measures \cite[Theorem 8.6.2]{bogachev2007measure}.

Now we establish the following equivalent expressions for the representational costs $\Rbar{\cdot}$ and $\Rbarone{\cdot}$.

\begin{myLem}\label{lem:rbareq}
For any $f:\Rd\rightarrow \R$ let $f_0$ denote the function $f_0(\vx) = f(\vx)-f(\bm 0)$. For $\Rbar{f}$ as defined in \eqref{eq:rbar0} and $\Rbarone{f}$ as defined in \eqref{eq:rbar}, we have
\begin{equation}\label{eq:rbarseq0}
    \Rbar{f} = \inf \left\{\limsup_{n\rightarrow\infty}\|\alpha_n\|_1 : \alpha_n \in \mathcal{A}(\SR),~~h_{\alpha_n}\rightarrow f_0~\text{pointwise},~\{\alpha_n\}~\text{tight}\right\}. 
\end{equation}
and
\begin{equation}\label{eq:rbarseq}
    \Rbarone{f} = \inf \left\{\limsup_{n\rightarrow\infty}\|\alpha_n\|_1 : \alpha_n \in \mathcal{A}(\SR),\vv_n\in\Rd,~~h_{\alpha_n,\vv_n,0}\rightarrow f_0~\text{pointwise},~\{\alpha_n\}~\text{tight}\right\}. 
\end{equation}
\end{myLem}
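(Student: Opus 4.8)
The plan is to prove each identity by first rewriting the ball-uniform definitions \eqref{eq:rbar0} and \eqref{eq:rbar} as approximation statements about $f_0$ and the infinite-width nets $h_\alpha$, and then establishing the two inequalities between the ball-limit and the sequence infimum. For the reduction I would combine the correspondence $\theta\in\Theta'\leftrightarrow\alpha\in\mathcal{A}(\SR)$ with the $1$-homogeneity argument behind \eqref{eq:plainr}, under which the effective cost of a unit-norm network equals $\|\va\|_1=\|\alpha\|_1$. Since $h_\alpha(\bm 0)=0$ for every $\alpha$, the normalization constraint $g_\theta(\bm 0)=f(\bm 0)$ turns $g_\theta(\vx)-f(\vx)$ into $h_\alpha(\vx)-f_0(\vx)$ (and $g_{\theta,\vv}(\vx)-f(\vx)$ into $h_{\alpha,\vv,0}(\vx)-f_0(\vx)$ in the linear-unit case). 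Thus ball-uniform approximation of $f$ becomes ball-uniform approximation of $f_0$ by $h_\alpha$ at cost $\|\alpha\|_1$.

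To show $\Rbar{f}$ is at most the right-hand side of~\eqref{eq:rbarseq0}, I would take any admissible sequence, so $h_{\alpha_n}\to f_0$ pointwise, $\{\alpha_n\}$ is tight, and $\limsup_n\|\alpha_n\|_1=L$. For large $n$ the norms are bounded, so by Proposition \ref{prop:lip} the $h_{\alpha_n}$ are uniformly Lipschitz, hence equicontinuous; pointwise convergence of an equicontinuous family is uniform on compact sets (Arzel\`a--Ascoli). Consequently, for each $\eps>0$ some $h_{\alpha_n}$ approximates $f_0$ to within $\eps$ on $\{\|\vx\|\le 1/\eps\}$ with $\|\alpha_n\|_1\le L+\eps$, giving $\Rbar{f}\le L$; infimizing $L$ over admissible sequences yields the desired bound. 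The linear-unit case is identical, using the bound $\|h_{\alpha_n,\vv_n,0}\|_L\le\|\alpha_n\|_1+\|\vv\|$ of Proposition \ref{prop:lip} after a short argument (passing to a narrowly convergent subsequence of $\{\alpha_n\}$ via the recalled Prokhorov-type compactness) that $\{\vv_n\}$ stays bounded.

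For the reverse inequality, unfolding \eqref{eq:rbar0} produces discrete measures $\alpha^{(n)}$ with $\|\alpha^{(n)}\|_1\to\Rbar{f}$ and $h_{\alpha^{(n)}}\to f_0$ uniformly on compacta; the job is to replace them by a \emph{tight} sequence of no larger limiting norm. The key is the behavior of a single atom $a\,\delta_{(\vw,b)}$ on $\{\|\vx\|\le R\}$: when $b>R$ it contributes $[\vw^\T\vx-b]_+-[-b]_+\equiv 0$ and can be deleted (which only lowers the norm), while when $b<-R$ it contributes the linear term $a\,\vw^\T\vx$. In the linear-unit case these linear contributions are folded into $\vv_n$ at no cost, confining the atoms to $|b|\le R$ and producing a tight sequence, with Proposition \ref{prop:evenodd} keeping the norms controlled; tightness together with $\|\alpha^{(n)}\|_1\le\Rbar{f}+o(1)$ then makes the pruned sequence admissible for the right-hand side.

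I expect the main obstacle to be the tightening in this last step: arguing that the far-field atoms can be pruned or relocated so that the sequence becomes tight \emph{without} inflating $\limsup_n\|\alpha^{(n)}\|_1$. These large-$|b|$ atoms are precisely the ones encoding the asymptotically linear part of $f$, so their control is exactly where the gap between $\Rbar{f}$ and $\Rbarone{f}$—and, later, the $\|\grad f(\infty)\|$ correction in Theorem \ref{thm:2}—originates. The extra linear unit makes this bookkeeping transparent for $\Rbarone{f}$, whereas for $\Rbar{f}$ one must carefully track how the escaping mass is reabsorbed into bounded-offset atoms before invoking the compactness argument.
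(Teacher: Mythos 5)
Your setup and the inequality $\Rbar{f}\le L(f)$ (writing $L(f)$ for the right-hand side of \eqref{eq:rbarseq0}) follow the paper's proof exactly: the correspondence $\theta\in\Theta'\leftrightarrow\alpha\in\mathcal{A}(\SR)$ together with the normalization $g_\theta(\bm 0)=f(\bm 0)$ converts ball-uniform approximation of $f$ into approximation of $f_0$ by $h_{\alpha}$, and Proposition \ref{prop:lip} gives uniform Lipschitzness, so pointwise convergence upgrades to uniform convergence on compacta and can be fed into the $\eps$-ball definition \eqref{eq:rbar0}. (Your Prokhorov detour to bound $\{\vv_n\}$ is heavier than needed: since $h_{\alpha_n}(\bm 0)=0$ and $|h_{\alpha_n}(\vx)|\le\|\alpha_n\|_1\|\vx\|$, pointwise convergence of $h_{\alpha_n,\vv_n,0}$ at the $d$ standard basis vectors already bounds $\vv_n$; but it is not wrong.)

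The genuine gap is in the reverse inequality $L(f)\le\Rbar{f}$, precisely at the step you flag. Your pruning surgery --- delete atoms with $b>R$, fold atoms with $b<-R$ into a linear term --- must be performed at radius $R=R_n\to\infty$ if the modified nets are to keep converging pointwise to $f_0$ on all of $\Rd$, so it only confines the atoms of the $n$-th measure to the \emph{growing} window $|b|\le R_n$. Tightness, by contrast, demands a single fixed compact set $K_\eps\subset\SR$ with $|\alpha_n|(K_\eps^c)\le\eps$ for all large $n$, and nothing in your argument rules out $\Theta(1)$ mass sitting at offsets $|b|\asymp R_n/2$: such atoms genuinely shape the function inside the approximation ball, cannot be pruned, and survive your surgery. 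Moreover, in the $\Rbar{f}$ case there is no free linear unit to absorb the $b<-R_n$ atoms, so even the fold-in half of the surgery is unavailable; you acknowledge both difficulties but supply no mechanism to resolve them, so the hard direction is not established. The paper avoids modifying the measures altogether: it proves the near-optimal sequence itself has a tight subsequence by introducing $q_n(\vx)=\int|\vw^\T\vx-b|\,d|\alpha_n|(\vw,b)$, noting $\|q_n\|_L\le\|\alpha_n\|_1\le B$, extracting via Arzel\`a--Ascoli a subsequence converging uniformly on compacta, and reading off a uniform first-moment bound $q_{n_k}(\bm 0)=\int|b|\,d|\alpha_{n_k}|\le L<\infty$, which (the sphere factor $\S^{d-1}$ being compact) yields tightness by Markov's inequality; the approximation steps are then rerun along that subsequence. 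To salvage your route you would need to supplement the pruning with exactly such a uniform control on $\int|b|\,d|\alpha_n|$ or on the tail masses, which is the content your sketch is missing.
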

\begin{proof}
We prove the identity in \eqref{eq:rbarseq0} for $\Rbar{f}$; the identity in \eqref{eq:rbarseq} for $\Rbarone{f}$ follows by the same argument. Define
\begin{equation}
  R_\eps(f) := \inf_{\theta \in \Theta'} C(\theta)~~s.t.~~|g_\theta(\vx) - f(\vx)|\leq \eps~~\forall~\|\vx\| \leq 1/\eps~\text{and}~g_\theta(\bm 0) = f(\bm 0)
\end{equation}
so that $\Rbar{f} = \lim_{\eps \rightarrow 0}R_\eps(f)$. Also, let $L(f)$ denote the right-hand side of \eqref{eq:rbarseq0}.

First, suppose $\Rbar{f}$ is finite. Let $\eps_n = 1/n$. Then by definition of $\Rbar{f}$, for all $n$ there exists $\theta_n \in \Theta'$ such that $C(\theta_n) \leq R_{\eps_n}(f) + \eps_n$, while $|g_{\theta_n}(\vx)-f(\vx)| \leq \eps_n$ for $\|\vx\|\leq 1/\eps_n$ and $g_{\theta_n}(\bm 0) = f(\bm 0)$. Note that $\theta_n \in \Theta'$ $\leftrightarrow$ $\alpha_n \in M(\SR)$ with $g_{\theta_n} = h_{\alpha_n,c}$ where $c = g_{\theta_n}(\bm 0) = f(\bm 0)$ and $\|\alpha_n\|_1 = C(\theta_n)$. Hence, $h_{\alpha_n}(\vx) = g_{\theta_n}(\vx)-f(\bm 0)$, and we have $|h_{\alpha_n}(\vx) - f_0(\vx)| = |g_{\theta_n}(\vx) - f(\vx)| \leq \eps_n$ for $\|\vx\| \leq 1/\eps_n$. Therefore, $h_{\alpha_n}\rightarrow f_0$ pointwise, while
\begin{equation}
  \limsup_{n\rightarrow\infty}\|\alpha_n\|_1 \leq \limsup_{n\rightarrow\infty} (R_{\eps_n}(f) + \eps_n) = \Rbar{f},
\end{equation}
which shows $L(f) \leq \Rbar{f}$. Finally, it suffices to show $\{\alpha_n\}$ has a tight subsequence, since we can reproduce the steps above with respect to the subsequence. Towards this end, define $q_n(\vx) = \int |\vw^\T\vx-b|d|\alpha_n|(\vw,b)$, which is well-defined since $\alpha_n$ is discrete and has compact support. Then $q_n$ is Lipschitz with $\|q_n\|_L \leq \|\alpha_n\|_1 \leq B$ for some finite $B$, hence the sequence $\{q_n\}$ is uniformly Lipschitz. By the Arzela-Ascoli Theorem, $\{q_n\}$ has a subsequence $\{q_{n_k}\}$ that converges uniformly on compact subsets. In particular, $q_{n_k}(\bm 0) = \int|b|d|\alpha_{n_k}|(\vw,b) \leq L < \infty$ for some $L$, which implies the sequence $\{\alpha_{n_k}\}$ is tight.

Conversely, suppose $L(f)$ is finite. Fix any $\eps > 0$. Then by definition of $L(f)$ there exists a sequence $\alpha_n\in M(\SR)$ $\leftrightarrow$ $\theta_n \in \Theta'$ such that $\lim_{n\rightarrow \infty} \|\alpha_n\|_1$ exists with $\lim_{n\rightarrow \infty}\|\alpha_n\|_1 < L(f) + \eps$, while $f_n := h_{\alpha_n,c} = g_{\theta_n}$ with $c = f(\bm 0)$ converges to $f$ pointwise and satisfies $f_n(\bm 0) = f(\bm 0)$ for all $n$. Since, $\lim_{n\rightarrow \infty}\|\alpha_n\|_1 < L(f) + \eps$, there exists an $N_1$ such that for all $n\geq N_1$ we have $\|\alpha_n\|_1 \leq L(f) + \eps$. By Proposition \ref{prop:lip}, the Lipschitz constant of $f_n$ is bounded above by $\|\alpha_n\|_1$ for all $n$, hence the sequence $f_n$ is uniformly Lipschitz. This implies $f_n \rightarrow f$ uniformly on compact subsets, and so there exists an $N_2$ such that $\|f_n(\vx)-f(\vx)\| \leq \eps$ for all $\|\vx\|\leq 1/\eps$ and $f_n(\bm 0) = f(\bm 0)$ for all $n\geq N_2$. For all $n\geq N_2$, $f_n$ satisfies the constraints in the definition of $R_\eps(\cdot)$. Therefore, for all $n\geq \max\{N_1,N_2\}$ we have 
\begin{equation}
  R_\eps(f) \leq C(\theta_n) = \|\alpha_n\|_1 \leq L(f) + \eps.
\end{equation}
Taking the limit as $\eps \rightarrow 0$, we get $\Rbar{f} \leq L(f)$. Therefore, we have shown $\Rbar{f}$ is finite if and only if $L(f)$ is finite, in which case $\Rbar{f} = L(f)$, giving the claim.
\end{proof}

The following lemma shows every infinite-width net is the pointwise limit of a sequence of finite-width nets defined in terms of sequence of measures uniformly bounded in total variation norm.

\begin{myLem}\label{lem:narrow}
Let $f = h_{\alpha,\vv,c}$ for any $\alpha\in M(\SR)$,$\vv\in\Rd$, and $c\in\R$. Then there exists a sequence of discrete measures $\alpha_n \in \mathcal{A}(\SR)$ with $\|\alpha_n\|_1 \leq \|\alpha\|_1$ such that $f_n = h_{\alpha_n,\vv,c}$ converges to $f$ pointwise.
\end{myLem}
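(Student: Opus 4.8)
The plan is to build $\alpha_n$ by a quantization (binning) of $\alpha$ over large compact subsets of $\SR$. Since $\vv$ and $c$ are held fixed, it suffices to produce discrete $\alpha_n$ with $\|\alpha_n\|_1 \le \|\alpha\|_1$ and $h_{\alpha_n}(\vx) \to h_\alpha(\vx)$ for every $\vx$; then $f_n = h_{\alpha_n,\vv,c} = h_{\alpha_n}(\vx) + \vv^\T\vx + c$ converges pointwise to $f$. The two facts that make this work are that the integrand $\varphi_\vx(\vw,b) := [\vw^\T\vx-b]_+-[-b]_+$ is both bounded, with $\|\varphi_\vx\|_\infty \le \|\vx\|$ (by the reverse triangle inequality for $[\cdot]_+$), and uniformly continuous on $\SR$ (indeed Lipschitz, with a constant depending only on $\|\vx\|$).

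For the construction, first I would use that a finite signed Radon measure is inner regular: for each $n$ choose a compact $K_n \subset \SR$ with $|\alpha|(\SR\setminus K_n) < 1/n$. Partition $K_n$ into finitely many disjoint Borel blocks $E_1^n,\dots,E_{m_n}^n$, each of diameter less than $1/n$ (cover $K_n$ by finitely many balls of radius $1/(2n)$ and disjointify), choose a representative point $p_i^n=(\vw_i^n,b_i^n)\in E_i^n$, and set $\alpha_n := \sum_{i=1}^{m_n} \alpha(E_i^n)\,\delta_{p_i^n} \in \mathcal{A}(\SR)$, discarding the mass outside $K_n$. The total-variation bound is then immediate from the triangle inequality: $\|\alpha_n\|_1 = \sum_i |\alpha(E_i^n)| \le \sum_i |\alpha|(E_i^n) = |\alpha|(K_n) \le \|\alpha\|_1$, so the norm constraint holds for every $n$ without any limiting argument.

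Finally, for pointwise convergence I would fix $\vx$ and split the error $h_{\alpha_n}(\vx) - h_\alpha(\vx) = \langle \alpha_n, \varphi_\vx\rangle - \langle \alpha, \varphi_\vx\rangle$ into a tail term and a binning term. The tail term $\int_{\SR\setminus K_n}\varphi_\vx\,d\alpha$ is bounded in absolute value by $\|\varphi_\vx\|_\infty\,|\alpha|(\SR\setminus K_n) \le \|\vx\|/n$, using boundedness of $\varphi_\vx$. The binning term $\sum_i \int_{E_i^n}\!\left[\varphi_\vx(p_i^n)-\varphi_\vx(\vw,b)\right]d\alpha$ is bounded in absolute value by $\sum_i\int_{E_i^n}|\varphi_\vx(p_i^n)-\varphi_\vx(\vw,b)|\,d|\alpha| \le \omega_\vx(1/n)\,\|\alpha\|_1$, where $\omega_\vx$ is the modulus of continuity of $\varphi_\vx$, since each block has diameter below $1/n$. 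Both terms vanish as $n\to\infty$, giving $h_{\alpha_n}(\vx)\to h_\alpha(\vx)$. The main obstacle is precisely the non-compactness of $\SR$ in the offset variable $b$: a naive binning cannot cover all of $\SR$ with finitely many small blocks, and an unbounded integrand would make the discarded tail uncontrollable. The subtracted term $-[-b]_+$ in the definition of $h_\alpha$ (introduced in Remark \ref{rem:1} for exactly this reason) is what renders $\varphi_\vx$ bounded, so that the tail mass outside $K_n$ contributes negligibly and the scheme closes.
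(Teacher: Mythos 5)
Your proof is correct, but it takes a genuinely different route from the paper's. The paper's proof is essentially a two-line citation: it invokes \cite[Chapter 2, Theorem 6.9]{malliavin2012integration}, which asserts that any $\alpha\in M(\SR)$ is the narrow limit of discrete measures $\alpha_n$ with $\|\alpha_n\|_1\leq\|\alpha\|_1$, and then concludes pointwise convergence of $h_{\alpha_n,\vv,c}$ simply because the integrand $(\vw,b)\mapsto[\vw^\T\vx-b]_+-[-b]_+$ lies in $C_b(\SR)$, which is exactly the class of test functions for narrow convergence. You instead re-derive the needed special case from scratch: inner regularity (tightness of the single finite measure $|\alpha|$ on the Polish space $\SR$) to pick compacts $K_n$, a mesh-$1/n$ binning with representative points, the trivial triangle-inequality bound $\|\alpha_n\|_1\leq|\alpha|(K_n)\leq\|\alpha\|_1$, and a tail-plus-binning error estimate that exploits the two properties you correctly isolate --- $\|\varphi_\vx\|_\infty\leq\|\vx\|$ and global Lipschitz continuity of $\varphi_\vx$, the former being precisely what the $-[-b]_+$ shift of Remark \ref{rem:1} buys. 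All of these steps check out (in particular the Lipschitz bound $|\varphi_\vx(\vw,b)-\varphi_\vx(\vw',b')|\leq\|\vx\|\,\|\vw-\vw'\|+2|b-b'|$ and the tail bound $\|\vx\|/n$). What each approach buys: the paper's argument is shorter and delivers full narrow convergence --- hence tightness of $\{\alpha_n\}$ for free, which is what the downstream application (Lemma \ref{lem:opteq} via the tightness requirement in Lemma \ref{lem:rbareq}) silently relies on --- whereas your argument is self-contained, elementary, and makes visible \emph{why} the construction works, at the cost of proving convergence only against the globally uniformly continuous integrands $\varphi_\vx$ (your scheme need not converge narrowly against an arbitrary $\varphi\in C_b(\SR)$ whose modulus of continuity degrades on the growing $K_n$). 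That weaker conclusion is all the lemma as stated asks for; and in fact your sequence is also tight, since any $\alpha_n$-mass outside the closed unit enlargement of a compact $K_\eps$ with $|\alpha|(\SR\setminus K_\eps)<\eps$ comes from blocks of diameter below $1$ lying entirely outside $K_\eps$, so it is at most $\eps$ uniformly in $n$ --- worth one sentence if you want your version to plug directly into Lemma \ref{lem:rbareq}.
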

\begin{proof}
For any $\alpha \in M(\SR)$ there exists a sequence of discrete measures $\alpha_n$ converging narrowly to $\alpha$ such that $\|\alpha_n\|_1 \leq \|\alpha\|_1$ \cite[Chapter 2, Theorem 6.9]{malliavin2012integration}. Let $f_n = h_{\alpha_n,\vv,c}$. Since the function $(\vw,b) \mapsto [\vw^\T\vx - b]_+-[-b]_+$ is continuous and bounded, we have $f_n(\vx) \rightarrow f(\vx)$ for all $\vx\in\Rd$, \ie $f_n \rightarrow f$ pointwise.
\end{proof}

\begin{myLem}\label{lem:opteq}
We have the equivalences
\begin{equation}\label{eq:mineq}
  \Rbar{f} = \min_{\alpha\in M(\SR),c\in \R} \|\alpha\|_1~~s.t.~~f = h_{\alpha,c},
\end{equation}
and
\begin{equation}\label{eq:mineq2}
  \Rbarone{f} = \min_{\alpha\in M(\SR), \vv\in\Rd, c\in \R} \|\alpha\|_1~~s.t.~~f = h_{\alpha,\vv,c}.
\end{equation}
\end{myLem}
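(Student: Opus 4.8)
The plan is to prove \eqref{eq:mineq}; the proof of \eqref{eq:mineq2} is identical save for one extra step handling the linear unit, which I note at the end. Write $P(f) := \inf\{\|\alpha\|_1 : \alpha\in M(\SR),\, c\in\R,\, f = h_{\alpha,c}\}$ for the right-hand side, and assume $\Rbar{f} < \infty$ (the claim being trivial otherwise). The first observation is that $h_\alpha(\bm 0) = 0$ for every $\alpha$, so the constraint $f = h_{\alpha,c}$ is equivalent to $c = f(\bm 0)$ together with $h_\alpha = f_0$, where $f_0 = f - f(\bm 0)$. This puts feasibility into exactly the form appearing in the sequence characterization of Lemma \ref{lem:rbareq}, and the whole argument amounts to passing between sequences of discrete measures and their limiting measures in $M(\SR)$. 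I will establish the two inequalities $\Rbar{f} \le P(f)$ and $P(f)\le \Rbar{f}$, and then upgrade the infimum defining $P(f)$ to a minimum.

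For $\Rbar{f} \le P(f)$, fix any feasible pair $(\alpha, f(\bm 0))$, so $h_\alpha = f_0$. By Lemma \ref{lem:narrow} there is a sequence of discrete measures $\alpha_n\in\mathcal A(\SR)$ converging narrowly to $\alpha$ with $\|\alpha_n\|_1 \le \|\alpha\|_1$ and $h_{\alpha_n}\to f_0$ pointwise. Since every narrowly convergent sequence is tight, $\{\alpha_n\}$ is an admissible sequence in \eqref{eq:rbarseq0}, whence $\Rbar{f} \le \limsup_n \|\alpha_n\|_1 \le \|\alpha\|_1$; taking the infimum over feasible $\alpha$ gives $\Rbar{f} \le P(f)$.

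For the reverse inequality I would start from an admissible sequence supplied by Lemma \ref{lem:rbareq}: discrete measures $\alpha_n$ with $h_{\alpha_n}\to f_0$ pointwise, $\{\alpha_n\}$ tight, and $\limsup_n\|\alpha_n\|_1$ within $\eps$ of $\Rbar{f}$; passing to a subsequence we may assume $\|\alpha_n\|_1$ converges and is uniformly bounded. Tightness together with the uniform total-variation bound lets me invoke the signed-measure form of Prokhorov's theorem to extract a further subsequence converging narrowly to some $\alpha\in M(\SR)$. Two facts then close the argument: (i) the total variation norm is lower semicontinuous under narrow convergence (test against $\varphi\in C_0(\SR)$), so $\|\alpha\|_1 \le \liminf_n\|\alpha_n\|_1 \le \Rbar{f} + \eps$; and (ii) for each fixed $\vx$ the integrand $(\vw,b)\mapsto [\vw^\T\vx - b]_+ - [-b]_+$ lies in $C_b(\SR)$, so narrow convergence yields $h_{\alpha_n}(\vx)\to h_\alpha(\vx)$, and comparing with $h_{\alpha_n}\to f_0$ gives $h_\alpha = f_0$, i.e. $(\alpha, f(\bm0))$ is feasible. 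Hence $P(f)\le \|\alpha\|_1 \le \Rbar{f} + \eps$, and letting $\eps\to 0$ gives $P(f)\le \Rbar{f}$.

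The delicate point, which I expect to be the main obstacle, is producing a feasible measure that attains $P(f)$ exactly, justifying the $\min$ rather than merely an $\inf$. The difficulty is genuine: feasibility pairs $\alpha$ against the functions $\varphi_\vx = [\vw^\T\vx - b]_+ - [-b]_+$, which belong to $C_b(\SR)$ but not $C_0(\SR)$ (as $b\to -\infty$ one has $\varphi_\vx \to \vw^\T\vx \neq 0$), so feasibility is \emph{not} preserved under weak-$*$ limits, and mass escaping to $b\to -\infty$ would silently inject a spurious linear term. This is precisely why the tightness built into Lemma \ref{lem:rbareq} is essential: it forbids such escape and forces narrow convergence. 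To obtain the minimizer I would apply the extraction of the previous paragraph along a sequence of admissible sequences whose $\limsup$-norms decrease to $\Rbar{f}$, yielding feasible measures of norm at most $\Rbar{f} + 1/j$; controlling tightness of this family across the double limit, so that a final narrow extraction produces a feasible $\alpha^\star$ with $\|\alpha^\star\|_1 = \Rbar{f} = P(f)$, is the crux. Finally, the proof of \eqref{eq:mineq2} runs verbatim, with the one addition that along an admissible sequence $h_{\alpha_n,\vv_n,0}\to f_0$ the linear coefficients $\vv_n$ are bounded: since $h_{\alpha_n}$ is uniformly Lipschitz by Proposition \ref{prop:lip} and $h_{\alpha_n}(\bm 0)=0$, evaluating at $\pm \ve_i$ bounds each $\vv_n^\T\ve_i$, so a subsequence satisfies $\vv_n\to\vv$ and the narrow limit obeys $f = h_{\alpha,\vv,f(\bm0)}$.
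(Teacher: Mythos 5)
Your two inequalities are correct and follow essentially the same route as the paper: Lemma \ref{lem:narrow} plus the fact that narrow convergence implies tightness gives $\Rbar{f}\leq P(f)$ (where $P(f)$ is your notation for the right-hand side of \eqref{eq:mineq} read as an infimum), and Prokhorov extraction from an admissible sequence of Lemma \ref{lem:rbareq} gives $P(f)\leq \Rbar{f}$; your extra care about boundedness of the $\vv_n$ in the \eqref{eq:mineq2} case is a detail the paper glosses over, and it is handled correctly. The genuine gap is the one you flagged yourself: attainment of the minimum. Your proposed repair --- extracting from a family of admissible sequences indexed by $j$ with norms at most $\Rbar{f}+1/j$, then ``controlling tightness of this family across the double limit'' --- is left unexecuted, and it is not clear how you would get uniform tightness of the limit measures $\alpha_j$ themselves (tightness of each generating sequence does not obviously transfer uniformly to the family of limits). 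As stated, your proof establishes only that the infimum equals $\Rbar{f}$, not that it is attained.

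The fix is simpler than your double limit and needs no new tightness control: a \emph{single} admissible sequence already achieves the value $\Rbar{f}$ exactly, not just within $\eps$. Since $\Rbar{f}=\lim_{\eps\rightarrow 0}R_\eps(f)$, choose $\theta_n\in\Theta'$ with $C(\theta_n)\leq R_{1/n}(f)+1/n$; the corresponding discrete measures $\alpha_n$ satisfy $\limsup_n\|\alpha_n\|_1\leq \Rbar{f}$, and the Arzela--Ascoli argument inside the paper's proof of Lemma \ref{lem:rbareq} extracts a tight subsequence that is admissible. Applying your one Prokhorov extraction to this sequence yields a feasible $\alpha^\star$ with $\|\alpha^\star\|_1\leq\limsup_n\|\alpha_n\|_1\leq\Rbar{f}$, and your already-proved inequality $\Rbar{f}\leq P(f)\leq \|\alpha^\star\|_1$ then forces $\|\alpha^\star\|_1=\Rbar{f}=P(f)$, so the infimum is a minimum. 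This is implicitly how the paper closes the argument: its proof of Lemma \ref{lem:rbareq} constructs precisely such a value-achieving sequence. Your observation that feasibility pairs $\alpha$ against test functions in $C_b(\SR)$ rather than $C_0(\SR)$, so that mass escaping to $b\rightarrow-\infty$ under a mere weak-$*$ limit would inject a spurious linear term, is correct and is exactly why tightness is built into Lemma \ref{lem:rbareq}; once only a single extraction is needed, it causes no further difficulty.
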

\begin{proof}
We prove the $\Rbar{f}$ case; the $\Rbarone{f}$ case follows by the same argument. 
Throughout the proof we use the equivalence of $\Rbar{f}$ given in Lemma \ref{lem:rbareq}, and let $\M(f)$ denote the right-hand side of \eqref{eq:mineq}. 

Assume $\Rbar{f}$ is finite. Then there exists a tight sequence $\{\alpha_n\}$, $\alpha_n \in \mathcal{A}(\SR)$ , that is uniformly bounded in total variation norm such that $h_{\alpha_n}\rightarrow f_0$ pointwise. Therefore, by Prokhokov's Theorem, $\{\alpha_n\}$ has a subsequence $\{\alpha_{n_k}\}$ converging narrowly to a measure $\alpha$, hence $f_0 = h_{\alpha}$. Furthermore, narrow convergence implies $\|\alpha\|_1 \leq \limsup_{k\rightarrow\infty} \|\alpha_{n_k}\|_1 \leq \limsup_{n\rightarrow\infty} \|\alpha_{n}\|_1$, and so $\M(f) \leq \limsup_{n\rightarrow\infty} \|\alpha_{n}\|_1$. Taking the infimum over all such sequences $\{\alpha_n\}$, we have $\M(f) \leq \Rbar{f}$.

Conversely, assume $\M(f)$ is finite. Let $\alpha \in M(\SR)$ be any measure such that $f_0 = h_\alpha$. By Lemma \ref{lem:narrow} there exists a sequence $\{\alpha_n\}$, $\alpha_n \in \mathcal{A}(\SR)$, such that $h_{\alpha_n}\rightarrow f_0$ pointwise, while $\|\alpha_n\|_1 \leq \|\alpha\|_1$. Hence, $\Rbar{f} \leq \limsup_{n\rightarrow \infty} \|\alpha_n\|_1 \leq \|\alpha\|_1$. Since this holds for any $\alpha$ with $f_0 = h_\alpha$, we see that $\Rbar{f}\leq M(f)$, proving the claim.
\end{proof}

Now we show that if $f$ is an infinite-width net, $\Rbarone{f}$ is equal to the minimal total variation norm of all even measures defining $f$ (in fact, later we show for every infinite-width net is defined in terms of a \emph{unique} even measure, whose total variation norm is equal to $\Rbarone{f}$; see Lemma \ref{lem:main}).

\begin{myLem}\label{lem:roneopt}
We have
\begin{equation}\label{eq:evenonly}
  \Rbarone{f} = \min_{\alpha^+\in M(\Pd),\vv\in\Rd,c\in\R} \|\alpha^+\|_1~~s.t.~~f = h_{\alpha^+,\vv,c}.
\end{equation}
where the minimization is over all even $\alpha^+\in M(\Pd)$.
\end{myLem}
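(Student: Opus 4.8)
The plan is to deduce the identity directly from the unrestricted optimization characterization of $\Rbarone{f}$ established in Lemma \ref{lem:opteq} (equation \eqref{eq:mineq2}) together with the even/odd decomposition machinery already developed in this section. Write $\M_{\mathrm{even}}(f)$ for the right-hand side of \eqref{eq:evenonly}, i.e.\ the minimal total variation norm taken over \emph{even} measures only. I would prove the two inequalities $\Rbarone{f} \leq \M_{\mathrm{even}}(f)$ and $\M_{\mathrm{even}}(f) \leq \Rbarone{f}$ separately.

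The first inequality is immediate from a feasible-set inclusion: the constraint set in \eqref{eq:evenonly} consists of even measures $\alpha^+\in M(\Pd)$ with $f = h_{\alpha^+,\vv,c}$, and since $M(\Pd)\subset M(\SR)$ this is a subset of the constraint set in \eqref{eq:mineq2}, which ranges over all $\alpha\in M(\SR)$. Minimizing the same objective $\|\cdot\|_1$ over a smaller set can only increase the optimal value, so $\Rbarone{f}\leq \M_{\mathrm{even}}(f)$.

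For the reverse inequality I would take a measure $\alpha\in M(\SR)$ attaining the minimum in \eqref{eq:mineq2}, so that $f = h_{\alpha,\vv,c}$ with $\|\alpha\|_1 = \Rbarone{f}$ (the minimum is attained by Lemma \ref{lem:opteq}). Decomposing $\alpha = \alpha^+ + \alpha^-$ into its even and odd parts, the identity \eqref{eq:evennet} together with the fact that $h_{\alpha^-}$ is the linear map $\vx\mapsto \vv_0^\T\vx$, where $\vv_0 = \tfrac12\int_{\SR}\vw\,d\alpha^-(\vw,b)$, shows $f = h_{\alpha,\vv,c} = h_{\alpha^+,\vv+\vv_0,c}$. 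Hence $\alpha^+$ is feasible for \eqref{eq:evenonly}, and by Proposition \ref{prop:evenodd} we have $\|\alpha^+\|_1\leq \|\alpha\|_1 = \Rbarone{f}$. Therefore $\M_{\mathrm{even}}(f)\leq \|\alpha^+\|_1\leq \Rbarone{f}$.

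Combining the two inequalities yields $\M_{\mathrm{even}}(f) = \Rbarone{f}$, and chasing the chain $\|\alpha^+\|_1\leq \Rbarone{f} = \M_{\mathrm{even}}(f)\leq \|\alpha^+\|_1$ forces $\|\alpha^+\|_1 = \M_{\mathrm{even}}(f)$, so the optimal value is actually \emph{attained} by the even measure $\alpha^+$, which justifies writing $\min$ rather than $\inf$ in the statement. I do not anticipate a genuine obstacle: all the substantive content (the linearity of the odd part and the non-increase of the total variation norm under passing to the even part) is already packaged in \eqref{eq:evennet} and Proposition \ref{prop:evenodd}. The only point deserving a line of care is the propagation of attainment from Lemma \ref{lem:opteq} to its even part, handled by the squeeze above.
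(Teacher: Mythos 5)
Your proof is correct and follows essentially the same route as the paper: both decompose a feasible $\alpha$ into its even and odd parts, absorb the odd part into the unregularized linear unit via $h_{\alpha^-}(\vx)=\vv_0^\T\vx$, and invoke Proposition \ref{prop:evenodd} to get $\|\alpha^+\|_1\leq\|\alpha\|_1$. The only cosmetic difference is that the paper performs this reduction for an \emph{arbitrary} feasible $\alpha$ in \eqref{eq:mineq2} (so it never needs attainment there and the case $\Rbarone{f}=+\infty$ is handled automatically), whereas you select an optimal $\alpha$ and then recover attainment of the even problem by your squeeze argument.
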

\begin{proof}
Suppose $f = h_{\alpha,\vv,c}$ for some $\alpha \in M(\SR), \vv\in\Rd,c \in\R$. If $\alpha$ has even and odd decomposition $\alpha = \alpha^+ + \alpha^-$ then $f = h_{\alpha^+,\bm 0,0} + h_{\alpha^-,\vv,c} = h_{\alpha+,\vv',c}$ for some $\vv'\in\Rd$. Also, by Proposition \ref{prop:evenodd}, we have $\|\alpha^+\|_1 \leq \|\alpha^+ + \alpha^-\|_1 = \|\alpha\|_1$ for any $\alpha^-$ odd. Hence, the optimization problem describing $\Rbarone{f}$ in \eqref{eq:mineq2} reduces to \eqref{eq:evenonly}.
\end{proof}

\subsection{Extension of $\cR$-norm to Lipschitz functions and Proof of Theorem \ref{thm:main}}\label{appsec:thm1}
To simplify notation we let $\Sc(\Pd)$ denote the space of \emph{even} Schwartz functions on $\SR$, \ie $\psi \in \Sc(\Pd)$ if $\psi \in \Sc(\SR)$ with $\psi(\vw,b) = \psi(-\vw,-b)$ for all $(\vw,b)\in\SR$.

We will need a finer characterization of the image of Schwartz functions under the dual Radon transform than what is given in Lemma \ref{lem:Lap2Rdn}, which is also due to \cite{solmon1987asymptotic}:
\begin{myLem}[\cite{solmon1987asymptotic}, Theorem 7.7]
\label{lem:Solmon}
Let $\psi\in\Sc(\Pd)$ and define $\varphi = \gamma_d (-\Delta)^{(d-1)/2}\Rdns{\psi}$. Then $\varphi\in C^\infty(\R^d)$ with $\varphi(\vx) = O(\|\vx\|^{-d})$ and $\Delta \varphi(\vx) = O(\|\vx\|^{-d-2})$ as $\|\vx\| \rightarrow \infty$. Moreover, $\Rdn{\varphi} = \psi$.
\end{myLem}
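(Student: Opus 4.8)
The plan is to treat the three substantive conclusions separately, in increasing order of difficulty, while noting that the final claim $\Rdn{\varphi}=\psi$ is essentially free. For that last claim, observe that $\varphi=\gamma_d(-\Delta)^{(d-1)/2}\Rdns{\psi}$ is exactly the object appearing in the dual-Radon inversion formula \eqref{eq:invform}, so $\Rdn{\varphi}=\gamma_d\,\Rdn{(-\Delta)^{(d-1)/2}\Rdns{\psi}}=\psi$ holds immediately for even $\psi\in\Sc(\Pd)$. The remaining work concentrates on smoothness and on the two pointwise decay rates, and the cleanest route to both is to pass to the Fourier domain and identify $\widehat{\varphi}$ explicitly.

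First I would compute the Fourier transform of $\Rdns{\psi}$ by writing out $\Rdns{\psi}(\vx)=\int_{\S^{d-1}}\psi(\vw,\vw^\T\vx)\,d\vw$, interchanging the order of integration, and splitting $\vx=t\vw+\vx_\perp$ for each fixed $\vw$; the integral over $\vx_\perp\perp\vw$ produces a $(d-1)$-dimensional delta concentrating $\bm\xi$ on the line $\R\vw$, and the integral over $t$ produces $\F_b\psi(\vw,\cdot)$. In particular $\widehat{\Rdns{\psi}}$ blows up like $\|\bm\xi\|^{-(d-1)}$ near the origin, and combining with the filter factor $\|\bm\xi\|^{d-1}$ coming from $(-\Delta)^{(d-1)/2}$ (using that $\psi$ is even, so the two antipodal contributions agree) yields the identity $\widehat{\varphi}(\sigma\vw)=\F_b\psi(\vw,\sigma)$, which is consistent with the Fourier slice theorem applied to $\Rdn{\varphi}=\psi$ and fixes the normalization $\gamma_d$. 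Writing $\Psi:=\F_b\psi$, smoothness of $\psi$ in $\vw$ and its Schwartz decay in $b$ (uniform in $\vw$) show that $\widehat{\varphi}$ is continuous, bounded, and rapidly decreasing as $\|\bm\xi\|\to\infty$, so $\bm\xi^\alpha\widehat{\varphi}\in L^1(\Rd)$ for every multi-index $\alpha$ and hence $\varphi\in C^\infty(\Rd)$. For odd $d$ one may instead argue smoothness directly in physical space, since there $(-\Delta)^{(d-1)/2}$ is the local operator $(-1)^{(d-1)/2}\Delta^{(d-1)/2}$ and one may differentiate $\Rdns{\psi}$ under the integral sign.

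For the decay rates the governing fact is the behavior of $\widehat{\varphi}$ near the origin: along each ray $\widehat{\varphi}(\sigma\vw)=\Psi(\vw,\sigma)$ is smooth in $\sigma$, but the map $\bm\xi\mapsto\Psi(\bm\xi/\|\bm\xi\|,\|\bm\xi\|)$ has a conic singularity at $\bm\xi=0$ whose leading term is homogeneous of degree $0$ with angular profile $\vw\mapsto\Psi(\vw,0)=\int_\R\psi(\vw,b)\,db$. The plan is to Taylor expand $\Psi(\vw,\sigma)$ in $\sigma$, splitting $\widehat{\varphi}$ (after a smooth cutoff localizing near the origin) into a leading degree-$0$ homogeneous piece, successively smoother homogeneous corrections, and a rapidly decreasing remainder. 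Transferring to physical space via the Fourier transforms of homogeneous functions with angular dependence — most transparently by expanding the angular profiles in spherical harmonics and invoking the Hecke--Bochner formula, which sends a degree-$0$ homogeneous symbol carrying a degree-$k$ harmonic to a multiple of $\|\vx\|^{-d}$ times that harmonic — yields $\varphi(\vx)=O(\|\vx\|^{-d})$ (the spherically-averaged part of $\Psi(\vw,0)$ contributes only faster, Schwartz-localized decay). The estimate for $\Delta\varphi$ follows identically after multiplying $\widehat{\varphi}$ by $-\|\bm\xi\|^2$: this raises the homogeneity of the leading singularity to degree $2$, corresponding to decay $\|\vx\|^{-d-2}$.

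I expect the decay analysis to be the main obstacle. The difficulty is not the radial Schwartz decay, which is inherited directly from $\psi$, but the angular non-smoothness of $\widehat{\varphi}$ at the origin: one must control how the direction-dependent profile $\Psi(\vw,0)$ propagates into a pointwise rate that is uniform over all directions of $\vx$, and verify that the subleading terms in the $\sigma$-expansion genuinely decay faster rather than contributing at the same order. Making the spherical-harmonic/Bessel bookkeeping rigorous — justifying the interchange of the expansion with the inverse Fourier transform and bounding the remainder — is where the real effort lies; the even parity of $\psi$ is precisely what guarantees these expansions are consistent across antipodal directions, so that no spurious lower-order terms survive.
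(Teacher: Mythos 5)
The paper offers no internal proof of this lemma: it is imported verbatim from \cite{solmon1987asymptotic} (Theorem 7.7), so the only benchmark is Solmon's own argument, and measured against that your proposal is a correct but genuinely different route. Your plan is sound in outline: rewriting $\Rdns{\psi}(\vx)=\int_{\S^{d-1}}\psi(\vw,\vw^\T\vx)\,d\vw$ via the one-dimensional Fourier inversion in $b$ and the polar change of variables $\bm\xi=\sigma\vw$ (using evenness to resolve the double cover) exhibits $\Rdns{\psi}$ as the inverse Fourier transform of $C\|\bm\xi\|^{-(d-1)}\F_b\psi(\bm\xi/\|\bm\xi\|,\|\bm\xi\|)$, which is in $L^1(\Rd)$ since the singularity at the origin has order $d-1<d$; the filter then cancels this factor, giving $\widehat{\varphi}(\sigma\vw)=\F_b\psi(\vw,\sigma)$ up to normalization, from which $C^\infty$ smoothness follows because every polynomial multiple of $\widehat{\varphi}$ is integrable. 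Your decay analysis is the standard harmonic-analysis treatment of the conic singularity at $\bm\xi=\bm 0$: degree-$k$ homogeneous pieces of the Taylor expansion in $\sigma$ transform to homogeneity $-d-k$, the Bochner constants decay in the spherical-harmonic degree while the coefficients of the $C^\infty$ angular profile decay super-polynomially, and a compactly supported $C^N$ remainder with $N\geq d+2$ contributes $O(\|\vx\|^{-N})$, which closes both rates. Solmon instead works in physical space: for large $\|\vx\|$ the rapid $b$-decay of $\psi$ localizes the sphere integral to an $O(1/\|\vx\|)$ band around the equator $\{\vw:\vw^\T\vx=O(1)\}$, producing a full Laplace-type asymptotic expansion of $\Rdns{\psi}$ in powers of $\|\vx\|^{-1}$ whose leading terms are then annihilated by the filtering; that route yields complete asymptotic series and avoids distributional Fourier arguments, while yours is shorter at the price of the spherical-harmonic bookkeeping you correctly identify as the crux.

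One caveat you should make explicit: your final claim $\Rdn{\varphi}=\psi$ is ``free'' only because the paper states the inversion formula \eqref{eq:invform} --- which is literally the same Solmon identity --- as a prior lemma, so within the paper's structure this is legitimate but it is not an independent derivation. If you instead wanted to deduce it from your own Fourier identity via the slice theorem, there is a real obstruction: $\varphi(\vx)=O(\|\vx\|^{-d})$ is log-borderline and in general \emph{not} in $L^1(\Rd)$, so the Fourier slice theorem as stated in the paper does not apply directly, even though the hyperplane integrals defining $\Rdn{\varphi}$ converge absolutely (a $(d-1)$-dimensional integral of an $O(r^{-d})$ integrand); a truncation or limiting argument would be needed to close that gap.
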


Using the above result we show the functional $\Rnorm{f}$ given in Definition \ref{def:Rnorm} is well-defined:

\begin{myProp}
For any $f\in \Lip(\Rd)$, the map ${L_f(\psi) := -\gamma_d\langle f, (-\Delta)^{(d+1)/2}\Rdns{\psi}\rangle}$ is finite for all $\psi \in \Sc(\Pd)$, hence $\Rnorm{f} = \sup\left\{ L_f(\psi) : \psi \in \Sc(\Pd), \|\psi\|_\infty \leq 1 \right\}$
is a well-defined functional taking on values in $[0,+\infty]$. 
\end{myProp}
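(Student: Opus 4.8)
The plan is to show that $L_f(\psi)$ is really an honest, absolutely convergent integral of $f$ against a smooth, rapidly decaying classical function, so that finiteness follows from the (at most) linear growth of a Lipschitz $f$. The crux is to peel one factor of the Laplacian off the fractional power and to identify the remaining object using the decay estimate of Solmon, Lemma~\ref{lem:Solmon}.

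First I would set $\varphi := \gamma_d (-\Delta)^{(d-1)/2}\Rdns{\psi}$. Since the Fourier multiplier of $(-\Delta)^{(d+1)/2}$ factors as $\|\bm\xi\|^{d+1} = \|\bm\xi\|^2\,\|\bm\xi\|^{d-1}$, we have $\gamma_d (-\Delta)^{(d+1)/2}\Rdns{\psi} = (-\Delta)\varphi = -\Delta\varphi$, and hence $L_f(\psi) = \langle f, \Delta\varphi\rangle$. The point of this rewriting is that Lemma~\ref{lem:Solmon} applies directly to $\varphi$: since $\psi\in\Sc(\Pd)$, the function $\varphi$ is in $C^\infty(\Rd)$ with $\varphi(\vx) = O(\|\vx\|^{-d})$ and, crucially, $\Delta\varphi(\vx) = O(\|\vx\|^{-d-2})$ as $\|\vx\|\to\infty$. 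In particular $\Delta\varphi$ is a genuine (not merely distributional) smooth function, so the pairing $\langle f,\Delta\varphi\rangle = \int_{\Rd} f(\vx)\,\Delta\varphi(\vx)\,d\vx$ is a classical integral whose convergence I can verify directly.

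Second I would bound the integrand. Because $f$ is Lipschitz, $|f(\vx)| \leq |f(\bm 0)| + \|f\|_L\|\vx\|$. Choosing $R_0$ and $C$ with $|\Delta\varphi(\vx)| \leq C\|\vx\|^{-d-2}$ for $\|\vx\|\geq R_0$, the integrand on this exterior region is bounded by $C(|f(\bm 0)|\,\|\vx\|^{-d-2} + \|f\|_L\,\|\vx\|^{-d-1})$; passing to polar coordinates, both terms are integrable over $\{\|\vx\|\geq R_0\}$, since after multiplying by the surface measure $r^{d-1}$ they decay like $r^{-3}$ and $r^{-2}$ respectively. On the compact ball $\{\|\vx\|\leq R_0\}$ the function $\Delta\varphi$ is bounded (being smooth) and $f$ is continuous, hence bounded, so the integrand is integrable there as well. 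Therefore $\int_{\Rd} |f\,\Delta\varphi| < \infty$ and $L_f(\psi)$ is finite for every $\psi\in\Sc(\Pd)$. Consequently $\Rnorm{f} = \sup\{L_f(\psi) : \psi\in\Sc(\Pd),\ \|\psi\|_\infty\leq 1\}$ is a well-defined quantity in $[0,+\infty]$; the lower bound $0$ comes from the admissible choice $\psi\equiv 0$, for which $L_f(0)=0$.

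The main obstacle is the identification step in the second paragraph: a priori $(-\Delta)^{(d+1)/2}\Rdns{\psi}$ is defined only through Fourier multipliers and could in principle behave badly at infinity, so that pairing it against a function growing linearly would not obviously converge. What makes the argument go through is precisely the quantitative $O(\|\vx\|^{-d-2})$ decay of $\Delta\varphi$ furnished by Lemma~\ref{lem:Solmon}; these two extra orders of decay beyond the $O(\|\vx\|^{-d})$ decay of $\varphi$ itself are exactly what is needed to dominate the linear growth of a Lipschitz $f$ and leave an integrable tail.
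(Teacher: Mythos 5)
Your proof is correct and follows essentially the same route as the paper's: both arguments reduce matters to the decay estimate $\Delta\varphi(\vx) = O(\|\vx\|^{-d-2})$ for $\varphi = \gamma_d(-\Delta)^{(d-1)/2}\Rdns{\psi}$ from Lemma~\ref{lem:Solmon}, pair it against the $O(\|\vx\|)$ growth of a Lipschitz $f$, and conclude absolute integrability of the product. Your only departures are cosmetic --- you spell out the polar-coordinate tail estimate explicitly, and you obtain the lower bound $\Rnorm{f}\geq 0$ from the admissible choice $\psi\equiv 0$ rather than by flipping the sign of $\psi$ as the paper does.
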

\begin{proof}
Since $f$ is globally Lipschitz we have $|f(\vx)| = O(\|\vx\|)$, while for any $\psi \in \Sc(\Pd)$  we have $|(-\Delta)^{(d+1)/2}\Rdns{\psi}| = O(\|\vx\|^{-d-2})$ by Lemma \ref{lem:Solmon}, hence $|f(\vx)(-\Delta)^{(d+1)/2}\Rdns{\psi}(\vx)| = O(\|\vx\|^{-d-1})$ is absolutely integrable, and so $\langle f, (-\Delta)^{(d+1)/2}\Rdns{\psi}\rangle$ is finite. If $\langle f, (-\Delta)^{(d+1)/2}\Rdns{\psi}\rangle\neq 0$, we can choose the sign of $\psi$ so that the inner product is positive, which shows that $\Rnorm{f} \geq 0$.
\end{proof}

In Section \ref{sec:rnorm} we showed $\Delta h_\alpha = \Rdns{\alpha}$ when $\alpha$ was a measure with a smooth density having rapid decay. The next key lemma shows this equality still holds in the sense of distributions when $\alpha$ is any measure in $M(\Pd)$.

\begin{myLem}\label{lem:Lap2Rdn}
Let $f = h_{\alpha,\vv,c}$ for any $\alpha \in M(\Pd), \vv\in \Rd, c\in\R$. Then  we have $\langle f, \Delta \varphi\rangle = \langle \alpha, \Rdn{\varphi}\rangle$
 for all $\varphi \in C^\infty(\Rd)$ such that $\varphi(\vx) = O(\|\vx\|^{-d})$ and $\Delta \varphi(\vx) = O(\|\vx\|^{-d-2})$ as $\|\vx\|\rightarrow \infty$.
\end{myLem}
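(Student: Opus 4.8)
The plan is to establish the distributional identity by reducing $\langle f,\Delta\varphi\rangle$ to a single, explicitly computable integral for each ridge direction, the key geometric input being that $|\vw^\T\vx-b|$ is harmonic off the hyperplane $\{\vw^\T\vx=b\}$ with a unit jump in its normal derivative across it. First I would peel off the affine part: writing $f=h_\alpha+\vv^\T\vx+c$ with $\alpha\in M(\Pd)$ even, and using the even representation \eqref{eq:evennet}, namely $h_\alpha(\vx)=\tfrac12\int_{\SR}(|\vw^\T\vx-b|-|b|)\,d\alpha(\vw,b)$, I would interchange the order of integration in $\langle h_\alpha,\Delta\varphi\rangle$. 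Fubini is legitimate because $\big||\vw^\T\vx-b|-|b|\big|\le\|\vx\|$ while $\|\vx\|\,|\Delta\varphi(\vx)|=O(\|\vx\|^{-d-1})$ is integrable and $\|\alpha\|_1<\infty$; this leaves the problem of evaluating, for fixed $(\vw,b)$, the inner integral $\int_{\Rd}(|\vw^\T\vx-b|-|b|)\,\Delta\varphi\,d\vx$.

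The crux, and the main obstacle, is that the ridge $|\vw^\T\vx-b|$ grows linearly, so integrating $\Delta$ by parts onto it will generate boundary integrals over large spheres that I must force to vanish, and the hypotheses supply decay of $\varphi$ and $\Delta\varphi$ but not of $\nabla\varphi$. I would close this gap by interior elliptic regularity: since $\varphi\in C^\infty(\Rd)$ with $\varphi=O(\|\vx\|^{-d})$ and $\Delta\varphi=O(\|\vx\|^{-d-2})$, applying an interior gradient estimate on the ball of radius $\|\vx_0\|/2$ centered at $\vx_0$ and rescaling yields $\nabla\varphi(\vx_0)=O(\|\vx_0\|^{-d-1})$. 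With this in hand every boundary term below is controlled.

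Next I would compute the inner integral by Green's second identity. Splitting the ball $B_R$ along the hyperplane $H=\{\vw^\T\vx=b\}$ into the two half-balls on which $|\vw^\T\vx-b|$ is smooth and harmonic, the interior Laplacian terms vanish, the two outer spherical contributions combine into $\oint_{\|\vx\|=R}(|\vw^\T\vx-b|\,\partial_r\varphi-\varphi\,\partial_r|\vw^\T\vx-b|)\,dS=O(R^{-1})\to0$ (using $\nabla\varphi=O(\|\vx\|^{-d-1})$ and $\varphi=O(\|\vx\|^{-d})$), and the unit jump of the normal derivative across $H$ produces exactly $2\int_{H\cap B_R}\varphi\,dS\to2\,\Rdn{\varphi}(\vw,b)$. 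The constant offset $-|b|$ integrates against $\Delta\varphi$ to $-|b|\int_{\Rd}\Delta\varphi\,d\vx=0$, since $\int_{\Rd}\Delta\varphi\,d\vx=\lim_{R\to\infty}\oint_{\|\vx\|=R}\partial_r\varphi\,dS=0$ by the same gradient decay. Hence the inner integral equals $2\,\Rdn{\varphi}(\vw,b)$, and integrating back against $\tfrac12\,d\alpha$ gives $\langle h_\alpha,\Delta\varphi\rangle=\int_{\SR}\Rdn{\varphi}\,d\alpha=\langle\alpha,\Rdn{\varphi}\rangle$; the pairing is well defined because the same slab estimate shows $\Rdn{\varphi}\in C_0(\SR)$.

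It remains to check that the affine part contributes nothing, i.e.\ $\langle\vv^\T\vx+c,\Delta\varphi\rangle=0$. The constant $c$ is handled by $\int_{\Rd}\Delta\varphi\,d\vx=0$ above, and for the linear term I would again use Green's identity with the harmonic function $\vv^\T\vx$: the boundary integral $\oint_{\|\vx\|=R}(\vv^\T\vx\,\partial_r\varphi-\varphi\,\vv^\T\hat{\vx})\,dS$ is $O(R^{-1})$ by the gradient and function decay, so it vanishes in the limit. Combining the three pieces yields $\langle f,\Delta\varphi\rangle=\langle\alpha,\Rdn{\varphi}\rangle$, as claimed. I expect essentially all the difficulty to sit in the gradient estimate of the second paragraph; once $\nabla\varphi=O(\|\vx\|^{-d-1})$ is available, the remaining steps are routine boundary-term bookkeeping.
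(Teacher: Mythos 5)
Your proposal is correct, and it shares the paper's overall skeleton --- symmetrize to the even ridge representation, apply Fubini with exactly the same bound $\bigl||\vw^\T\vx-b|-|b|\bigr|\le\|\vx\|$ against $\Delta\varphi = O(\|\vx\|^{-d-2})$, reduce to a per-$(\vw,b)$ identity $\int |\vw^\T\vx-b|\,\Delta\varphi\,d\vx = 2\Rdn{\varphi}(\vw,b)$, and dispose of the affine part separately --- but it executes the key step genuinely differently. The paper verifies the ridge identity only for Schwartz test functions (where it is ``an easy calculation'' since all boundary terms vanish trivially) and then asserts that the distribution $\Delta r_{\vw,b}$ ``extends uniquely by continuity'' to the larger test class with $\varphi = O(\|\vx\|^{-d})$, $\Delta\varphi = O(\|\vx\|^{-d-2})$, without specifying the topology or the approximation argument; it likewise dismisses the affine part with the one-line remark that affine functions vanish under the Laplacian. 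You instead prove the identity directly on the full test class via Green's second identity on half-balls split along the hyperplane, which forces you to confront the real issue --- the boundary integrals on large spheres involve $\nabla\varphi$, whose decay is not among the hypotheses --- and you close it correctly with the interior elliptic estimate $|\nabla\varphi(\vx_0)| \le C\bigl(R^{-1}\sup_{B_R(\vx_0)}|\varphi| + R\sup_{B_R(\vx_0)}|\Delta\varphi|\bigr)$ on $R=\|\vx_0\|/2$, yielding $\nabla\varphi = O(\|\vx_0\|^{-d-1})$; your jump computation (normal derivative of $|\vw^\T\vx-b|$ jumping by $2$ across the hyperplane), your treatment of the $-|b|$ offset via $\int\Delta\varphi\,d\vx = 0$, and your Green's-identity argument that $\langle \vv^\T\vx + c,\Delta\varphi\rangle = 0$ are all sound, the last being a point the paper's version implicitly also needs but never justifies. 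What your route buys is a self-contained proof that makes the decay hypotheses do visible work and fills the extension-by-continuity gap; what the paper's route buys is brevity, deferring the analytic content to the distributional calculus on Schwartz space where it is standard.
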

\begin{proof}

Consider the ridge function $r_{\vw,b}(\vx) := \frac{1}{2}|\vw^\T \vx - b|$, which is generated by the even measure $\alpha_0(\vw',b') = \frac{1}{2}(\delta(\vw'-\vw,b'-b) + \delta(\vw'+\vw,b'+b))$. An easy calculation shows that $\Delta r_{\vw,b}(x) = \delta(\vw^\T \vx - b)$ in the sense of distributions, \ie for all test functions $\varphi \in \Sc(\Rd)$ we have
\begin{equation}
         \int r_{\vw,b}(\vx) \Delta \varphi(\vx)\, d\vx = \int_{\vw^\T \vx = b} \varphi(\vx) \, ds(\vx) = \Rdn{\varphi}(\vw,b).
\end{equation}
Since $\Rdn{\varphi}(\vw,b)$ is well-defined for all $\varphi \in C^\infty(\Rd)$ with decay like $O(\|\vx\|^{-d})$, by continuity $\Delta r_{\vw,b}(\vx)$ extends uniquely to a distribution acting on this larger space of test functions.

Now consider the more general case of $f = h_\alpha$ with $\alpha \in M(\Pd)$. Then for all $\varphi\in C^\infty(\R^d)$ with $\varphi(\vx) = O(\|\vx\|^{-d})$ and $\Delta \varphi(\vx) = O(\|x\|^{-d-2})$ as $\|\vx\|\rightarrow \infty$ we have
\begin{align}
    \int_{\R^d} f(\vx) \Delta \varphi(\vx) \, d\vx 
                                      & = \int_{\R^d} \left(\int_{\SR}\frac{1}{2}(|\vw^\T \vx - b|-|b|)\,d \alpha(\vw,b)\right) \Delta \varphi(\vx) \, d\vx\\
                                      & = \int_{\SR} \left(\int_{\R^d}\frac{1}{2}(|\vw^\T \vx - b|-|b|)\Delta \varphi(\vx) \, d\vx\right) d\alpha(\vw,b)\label{eq:exchange}\\
                                      & = \int_{\SR} \left(\int_{\R^d}r_{\vw,b}(\vx) \Delta \varphi(\vx) \, d\vx\right) d\alpha(\vw,b)\\
                                      & = \int_{\SR} \Rdn{\varphi}(\vw,b)\, d\alpha(\vw,b)
\end{align}
where in \eqref{eq:exchange} we applied Fubini's theorem to exchange the order of integration, whose application is justified since
\begin{equation}
     h_+(\vx) := \frac{1}{2}\int_{\SR}(|\vw^\T \vx - b|-|b|)\,d |\alpha|(\vw,b) \leq \|\alpha\|_1\|x\|
\end{equation} 
and by assumption $\Delta\varphi(\vx) = O(\|\vx\|^{-d-2})$, hence $h_+(\vx)|\Delta\varphi(\vx)| = O(\|\vx\|)^{-d-1}$, and so $\int h_+(\vx)|\Delta\varphi(\vx)|\,d\vx < \infty$.

Finally, if $f = h_{\alpha,\vv,c}$ for any $\alpha \in M(\Pd)$, $\vv\in\Rd$, $c\in\R$, since affine functions vanish under the Laplacian we have $\langle f, \Delta \varphi\rangle = \langle h_\alpha,\Delta \varphi\rangle$, reducing this to the previous case, which gives the claim.
\end{proof}

The following lemma shows $\Rnorm{f}$ is finite if and only if $f$ is an infinite-width net, in which case $\Rnorm{f}$ is given by the total variation norm of the unique even measure defining $f$.

\begin{myLem}\label{lem:main}
Let $f\in\Lip(\Rd)$.
Then $\Rnorm{f}$ is finite if and only if there exists a unique even measure $\alpha \in M(\Pd)$ 
and unique $\vv\in\Rd,c\in\R$ with $f = h_{\alpha,\vv,c}$, in which case $\Rnorm{f} = \|\alpha\|_1$. 
\end{myLem}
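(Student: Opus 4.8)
The plan is to recognize the defining supremum for $\Rnorm{f}$ as the dual norm of a linear functional on even test functions and then invoke Riesz representation on the space of even measures. The starting point is the identity, valid for any $f\in\Lip(\Rd)$ and any even $\psi\in\Sc(\Pd)$,
\[
  -\gamma_d\langle f, (-\Delta)^{(d+1)/2}\Rdns{\psi}\rangle = \langle f, \Delta\varphi_\psi\rangle, \qquad \varphi_\psi := \gamma_d(-\Delta)^{(d-1)/2}\Rdns{\psi},
\]
which follows by peeling off one factor of $-\Delta$. By Lemma \ref{lem:Solmon}, $\varphi_\psi\in C^\infty(\Rd)$ satisfies $\varphi_\psi = O(\|\vx\|^{-d})$, $\Delta\varphi_\psi = O(\|\vx\|^{-d-2})$, and $\Rdn{\varphi_\psi} = \psi$, so the right-hand side is an absolutely convergent integral (as already guaranteed by the well-definedness proposition preceding Lemma \ref{lem:Lap2Rdn}) and $\varphi_\psi$ meets the hypotheses of Lemma \ref{lem:Lap2Rdn}. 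For the easy direction, if $f = h_{\alpha,\vv,c}$ with $\alpha\in M(\Pd)$ even, then Lemma \ref{lem:Lap2Rdn} gives $\langle f, \Delta\varphi_\psi\rangle = \langle\alpha,\Rdn{\varphi_\psi}\rangle = \langle\alpha,\psi\rangle$; hence the functional defining $\Rnorm{f}$ is exactly $\psi\mapsto\langle\alpha,\psi\rangle$, and taking the supremum over even Schwartz $\psi$ with $\|\psi\|_\infty\le 1$ yields $\Rnorm{f} = \|\alpha\|_1 < \infty$, using the dual characterization \eqref{eq:tvnorm} of the total variation norm restricted to even test functions together with the density of $\Sc(\Pd)$ in $C_0(\Pd)$.

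For the converse, suppose $\Rnorm{f} < \infty$. The map $L_f(\psi) := -\gamma_d\langle f, (-\Delta)^{(d+1)/2}\Rdns{\psi}\rangle$ is linear on $\Sc(\Pd)$, and since $\Rnorm{f} = \sup_{\|\psi\|_\infty\le 1}L_f(\psi)$ while $L_f(-\psi) = -L_f(\psi)$, it is bounded in the $\|\cdot\|_\infty$ norm with $\|L_f\| = \Rnorm{f}$. Density of $\Sc(\Pd)$ in $C_0(\Pd)$ lets me extend $L_f$ uniquely to a bounded functional on $C_0(\Pd)$ of the same norm, and since $M(\Pd) = C_0(\Pd)^*$, Riesz representation produces a unique even measure $\alpha\in M(\Pd)$ with $\|\alpha\|_1 = \Rnorm{f}$ and $L_f(\psi) = \langle\alpha,\psi\rangle$ for all $\psi\in\Sc(\Pd)$. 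Setting $g := h_{\alpha,\bm 0,0}$, the easy-direction computation shows $L_g(\psi) = \langle\alpha,\psi\rangle = L_f(\psi)$, so $L_{f-g}\equiv 0$ on $\Sc(\Pd)$.

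It remains to deduce that $h := f-g$ is affine, which I expect to be the crux. The key observation is that the inversion formula \eqref{eq:invform} lets me realize an arbitrary test function: for any $\phi\in C_c^\infty(\Rd)$ the Radon transform $\psi := \Rdn{\phi}$ is even and Schwartz, and by inversion $\varphi_\psi = \gamma_d(-\Delta)^{(d-1)/2}\Rdns{\Rdn{\phi}} = \phi$. Hence $0 = L_{f-g}(\Rdn{\phi}) = \langle h, \Delta\phi\rangle$ for every $\phi\in C_c^\infty(\Rd)$, i.e.\ $\Delta h = 0$ in the distributional sense. By Weyl's lemma $h$ agrees with a smooth harmonic function, and since $h$ is Lipschitz (hence of at most linear growth, as is $g$ by Proposition \ref{prop:lip}), a standard Liouville-type gradient estimate forces each $\partial_i h$ to be a bounded harmonic function, hence constant; thus $h(\vx) = \vv_0^\T\vx + c_0$ is affine. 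Therefore $f = h_{\alpha,\bm 0,0} + \vv_0^\T\vx + c_0 = h_{\alpha,\vv_0,c_0}$ is an infinite-width net with even measure $\alpha$, and $\Rnorm{f} = \|\alpha\|_1$.

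Finally, uniqueness follows from the same identity: if $f = h_{\alpha_1,\vv_1,c_1} = h_{\alpha_2,\vv_2,c_2}$ with both measures even, then $\langle\alpha_1,\psi\rangle = L_f(\psi) = \langle\alpha_2,\psi\rangle$ for all even Schwartz $\psi$, so $\alpha_1 = \alpha_2$ by density and $M(\Pd) = C_0(\Pd)^*$; subtracting then gives $\vv_1^\T\vx + c_1 \equiv \vv_2^\T\vx + c_2$, whence $\vv_1 = \vv_2$ and $c_1 = c_2$. The only genuinely delicate steps are the Riesz extension (routine once $\|\cdot\|_\infty$-boundedness of $L_f$ is in hand) and the Liouville argument, which I would isolate as the statement that a harmonic function of at most linear growth is affine.
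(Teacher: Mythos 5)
Your proposal is correct and follows essentially the same route as the paper's proof: extend $L_f$ by density of $\Sc(\Pd)$ in $C_0(\Pd)$ and apply Riesz representation, show $\Delta(f - h_\alpha) = 0$ distributionally by testing against $\Rdn{\varphi}$ via the inversion formula, conclude affineness from harmonicity plus linear growth, and get the easy direction and uniqueness from $L_f(\psi) = \langle\alpha,\psi\rangle$ exactly as the paper does via Lemmas \ref{lem:Solmon} and \ref{lem:Lap2Rdn}. The only (immaterial) variation is in the harmonicity-to-affine step, where you invoke Weyl's lemma and a Liouville argument for the bounded harmonic partials, while the paper identifies the null space of the Laplacian on tempered distributions with harmonic polynomials before imposing the growth bound.
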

\begin{proof}
Suppose $\Rnorm{f}$ is finite. Then by definition $f$ belongs to $\Lip(\Rd)$ and the linear functional $L_f(\psi) = -\gamma_d \langle f, (-\Delta)^{(d-1)/2}\Rdns{\psi}\rangle$ is continuous on $\Sc(\Pd)$ with norm $\Rnorm{f}$. Since $\Sc(\Pd)$ is a dense subspace of $C_0(\Pd)$, by continuity there exists a unique extension $\tilde{L}_f$ to all of $C_0(\Pd)$ with the same norm. Hence, by the Riesz representation theorem, there is a unique measure $\alpha \in M(\Pd)$ such that $\tilde{L}_f(\psi) = \int \psi\, d\alpha$ for all $\psi \in C_0(\Pd)$ and $\Rnorm{f} = \|\alpha\|_1$. 

We now show $f = h_{\alpha,\vv,c}$ for some $\vv\in\Rd,c\in\R$.
First, we prove $\Delta f = \Delta h_{\alpha}$ as tempered distributions (\ie as linear functionals on the space of Schwartz functions $\Sc(\Rd)$). By Lemma \ref{lem:Lap2Rdn} we have $\langle \Delta h_{\alpha}, \varphi\rangle = \langle \alpha, \Rdn{\varphi}\rangle$ for any $\varphi \in \Sc(\R^d)$, hence
\begin{align}
\langle \Delta h_{\alpha}, \varphi\rangle & = \langle \alpha, \Rdn{\varphi}\rangle\\
 & = \tilde{L}_f(\Rdn{\varphi})\\
 & = L_f(\Rdn{\varphi}) \label{eq:ext}\\
 & = \gamma_d\langle f,(-\Delta)^{(d+1)/2} \Rdns{\Rdn{\varphi}}\rangle\\
  & = -\gamma_d\langle f,\Delta(-\Delta)^{(d-1)/2} \Rdns{\Rdn{\varphi}}\rangle\\
 & = \langle f, \Delta\varphi\rangle \label{eq:step2}\\
 & = \langle \Delta f, \varphi\rangle
 \end{align}
 where in \eqref{eq:ext} we used the fact that $\Rdn{\varphi}\in\Sc(\Pd)$ for all $\varphi \in \Sc(\Rd)$ \cite[Theorem 2.4]{helgason1999radon}, and in \eqref{eq:step2} we used the inversion formula for Radon transform: $-\gamma_d (-\Delta)^{(d-1)/2} \Rdns{\Rdn{\varphi}} = \varphi$ for all $\varphi \in \Sc(\Rd)$ \cite[Theorem 3.1]{helgason1999radon}.

Hence, we have shown $\Delta f = \Delta h_{\alpha}$ as tempered distributions. This means $f - h_\alpha$ is in null space of the Laplacian acting on tempered distributions, which implies $f - h_\alpha = p$ where $p$ is some harmonic polynomial (\ie $p$ is a polynomial in $x = (x_1,...,x_d)$ such that $\Delta p(x) = 0$ for all $x\in\R^d$). Finally, since both $f$ and $h_\alpha$ are Lipschitz they have at most linear growth at infinity, so must $p$. This implies $p$ must be an affine function $p(\vx) = \vv^\T \vx + c$, which shows $f = h_{\alpha,\vv,c}$ as claimed.

Conversely, suppose $f = h_{\alpha,\vv,c}$ for some $\alpha \in M(\Pd),\vv\in\Rd,c\in\R$. Let $\psi \in \Sc(\Pd)$. By Lemma \ref{lem:Solmon}, the function $\varphi = -\gamma_d (-\Delta)^{(d-1)/2}\Rdns{\psi}$ is in $C^\infty(\R^d)$ with $\varphi(x) = O(\|x\|^{-d})$, $\Delta \varphi(x) = O(\|x\|^{-d-2})$ as $\|x\| \rightarrow \infty$, and $\psi = \Rdn{\varphi}$. Hence, by  Lemma \ref{lem:Lap2Rdn} we have
\begin{align}
    L_f(\psi) & = \langle f, \Delta \varphi \rangle
     = \langle \alpha, \Rdn{\varphi} \rangle
     = \langle \alpha, \psi \rangle.
\end{align}
This shows 
\begin{align}
    \Rnorm{f} & = \sup \{\langle \alpha, \psi \rangle : \psi \in \Sc(\Pd), \|\psi\|_\infty \leq 1\} \\
                     & = \sup \{\langle \alpha, \psi \rangle : \psi \in C_0(\Pd), \|\psi\|_\infty \leq 1\}\\
                     & = \|\alpha\|_1
\end{align}
where the second to last equality holds since $\Sc(\R^d)$ is a dense subspace of $C_0(\R^d)$, and the last equality is by the dual characterization of the total variation norm.

Finally, to show uniqueness, suppose $h_{\alpha,\vv,c} = h_{\beta,\vv',c'}$ for some other even $\beta \in M(\Pd)$, $\vv' \in \Rd$, $c'\in\R$. Then the function $h_{\alpha,\vv,c} - h_{\beta,\vv',c'} = h_{\alpha-\beta,\vv-\vv',c-c'}$ is identically zero, hence by the argument above $\Rnorm{h_{\alpha-\beta,\vv-\vv',c-c'}} = \|\alpha-\beta\|_1 = 0$, which implies $\alpha = \beta$. Therefore, $h_{\alpha,\vv,c} = h_{\alpha,\vv',c'}$, which also implies $\vv' = \vv$ and $c = c'$.
\end{proof}

Note that Lemma \ref{lem:uniqueness} is essentially a corollary of the uniqueness in the preceding result; we give the proof here for completeness.
\begin{proof}[Proof of Lemma \ref{lem:uniqueness}]
Suppose $\Rbarone{f}$ is finite. Then by the optimization characterization in Lemma \ref{lem:roneopt}, we have $f = h_{\alpha,\vv,c}$ for some even $\alpha \in M(\Pd)$, $\vv\in\Rd$, $c\in\Rd$, and $\Rbarone{f}$ is the minimum of $\|\alpha^+\|_1$ over all even measures $\alpha^+ \in M(\Pd)$ and $\vv'\in\Rd,c'\in\R$ such that $f = h_{\alpha^+,\vv',c'}$. By Lemma \ref{lem:main}, there is a unique even measure $\alpha^+ \in M(\Pd)$, $\vv\in\Rd$, and $c\in\R$ such that $f = h_{\alpha^+,\vv,c}$. Hence, $\Rbarone{f} = \|\alpha^+\|_1$. 
\end{proof}

Now we give the proof of our main theorem, which shows $\Rnorm{f} = \Rbarone{f}$.

\begin{proof}[Proof of Theorem \ref{thm:main}]
Suppose $\Rbarone{f}$ is finite. By Lemma \ref{lem:uniqueness}, $\Rbarone{f} = \|\alpha\|_1$ where $\alpha\in M(\Pd)$ is the unique even measure such that $f = h_{\alpha,\vv,c}$ for some $\vv\in\Rd,c\in\R$. Furthermore, $\Rnorm{f} = \|\alpha\|_1$ by Lemma \ref{lem:main}. Hence, $\Rbarone{f} = \Rnorm{f}$. Conversely, if $\Rnorm{f}$ is finite, then by Lemma \ref{lem:main} we have $f = h_{\alpha,\vv,c}$ for a unique even measure $\alpha \in M(\Pd)$, and again by Lemma \ref{lem:uniqueness}, $\Rnorm{f} = \|\alpha\|_1 = \Rbarone{f}$.
\end{proof}

\begin{proof}[Proof of Proposition \ref{prop:ell1}]
The Radon transform is a bounded linear operator from $L^1(\Rd)$ to $L^1(\SR)$ (see, e.g., \cite{boman2009support}). Hence, if $\Delta^{(d+1)/2} f \in L^1(\Rd)$ then $\Rdn{\Delta^{(d+1)/2} f} \in L^1(\Rd)$. Let $\alpha \in M(\Pd)$ be the even measure on $\SR$ with density $\gamma_d\Rdn{\Delta^{(d+1)/2}f}$. Then $\|\alpha\|_1 = \gamma_d\|\Rdn{\Delta^{(d+1)/2} f}\|_1$, \ie the total variation norm of $\alpha$ coincides with the $L^1$-norm of its density. Therefore, by definition of $\Rnorm{f}$ we have
\begin{align}
  \Rnorm{f} & = \sup\{ \gamma_d \langle f, \Delta^{(d+1)/2} \Rdns{\psi} \rangle : \psi\in \Sc(\Pd), \|\psi\|_\infty \leq 1 \}\\
            & = \sup\{ \langle \gamma_d \Rdn{\Delta^{(d+1)/2} f},\psi \rangle : \psi\in \Sc(\Pd), \|\psi\|_\infty \leq 1 \}\\
            & = \sup\{ \langle \alpha,\psi \rangle : \psi\in \Sc(\Pd), \|\psi\|_\infty \leq 1 \}\\
            & = \|\alpha\|_1 = \gamma_d\|\Rdn{\Delta^{(d+1)/2}f}\|_1.
\end{align}
where we used the fact that the Schwartz class $\Sc(\Pd)$ is dense in $C_0(\Pd)$ and the dual definition of the total variation norm \eqref{eq:tvnorm}. 
If additionally $f \in L^1(\Rd)$, we have $\Rdn{\Delta^{(d+1)/2}f} = \partial_b^{d+1}\Rdn{f}$ by the Fourier slice theorem, which gives $\Rnorm{f} = \gamma_d\|\partial_b^{d+1}\Rdn{f}\|_1$.
\end{proof}

\subsection{Proof of Theorem 2}\label{appsec:thm2}
We show how our results change without the addition of the unregularized linear unit $\vv^\T\vx$ in \eqref{eq:finitenet}. Specifically, we want to characterize $\Rbar{f}$ given in \eqref{eq:rbar0} (or equivalently its optimization formulation \eqref{eq:opt0}).
 Unlike in the univariate setting, $\Rbar{f}$ does not have a simple closed form expression in higher dimensions. However, for any $f\in\Lip(\Rd)$ we prove the bounds
\begin{equation}
	\max\{\Rnorm{f},2\|\nabla f(\infty)\|\} \leq \Rbar{f} \leq \Rnorm{f} + 2\|\nabla f(\infty)\|
\end{equation}
where the vector $\nabla f(\infty) \in \Rd$ can be thought of as the gradient of the function $f$ ``at infinity''; see below for a formal definition. In particular, if $f(\vx)$ vanishes at infinity then $\nabla f(\infty) = \bm 0$ and we have $\Rbar{f} = \Rnorm{f} = \Rbarone{f}$.

For any $f\in\Lip(\Rd)$, define $\nabla f(\infty) \in \Rd$ by\footnote{Note every Lipschitz function has a weak gradient $\nabla f \in L^\infty(\Rd)$, so $\nabla f(\infty)$ is well-defined.}
\begin{equation}
	\nabla f(\infty) := \lim_{r\rightarrow\infty}\frac{1}{c_d r^{d-1}}\oint_{\|\vx\|=r} \nabla f(\vx)\,ds(\vx),
\end{equation}
where $c_d = \int_{\S^{d-1}} d\vw$. We will relate $\nabla f(\infty)$ to the ``linear part'' of an infinite-width net. Towards this end, define $\mathcal{V}:M(\S^{d-1}\times \R)\rightarrow\Rd$ to be the linear operator given by
\begin{equation}
\mathcal{V}(\alpha) = \frac{1}{2}\int_{\S^{d-1}\times \R}\vw\,d\alpha(\vw,b).
\end{equation}
Note that if $\alpha = \alpha^+ + \alpha^-$ where $\alpha^+$ is even and $\alpha^-$ is odd, then $\mathcal{V}(\alpha) = \mathcal{V}(\alpha^-)$ since $\int_{\S^{d-1}\times \R}\vw\,d\alpha^+(\vw,b) = 0$. In particular, if we set $\vv_0 = \mathcal{V}(\alpha^-)$, then $h_{\alpha^-}(\vx) = \vv_0^\T\vx$.

\begin{myLem}
Suppose $f = h_{\alpha,c}$ for any $\alpha \in M(\SR)$, $c\in\R$. Then, $\nabla f(\infty) = \mathcal{V}(\alpha)$.
\end{myLem}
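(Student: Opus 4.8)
The plan is to compute the weak gradient $\nabla f$ explicitly, average it over the sphere of radius $r$, and pass to the limit $r\to\infty$ by dominated convergence, carefully tracking the constant that arises. Throughout, $H$ denotes the Heaviside step function ($H(t)=1$ for $t>0$ and $0$ for $t<0$).

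First I would establish the representative formula
\begin{equation}
\nabla f(\vx) = \int_{\SR}\vw\, H(\vw^\T\vx - b)\,d\alpha(\vw,b),
\end{equation}
viewed as an everywhere-defined bounded function agreeing almost everywhere with the weak gradient. To verify it I would test against an arbitrary $\phi\in C_c^\infty(\Rd)$: expand $f=h_{\alpha,c}$, exchange the order of integration by Fubini (justified by the linear growth bound $\frac{1}{2}\int(|\vw^\T\vx-b|-|b|)\,d|\alpha|\leq \|\alpha\|_1\|\vx\|$ from the proof of Proposition \ref{prop:lip}, together with $\|\alpha\|_1<\infty$ and the compact support of $\phi$), integrate the one-dimensional ReLU by parts in $\vx$ for each fixed $(\vw,b)$ using $\partial_{x_i}[\vw^\T\vx-b]_+ = w_i H(\vw^\T\vx-b)$, and apply Fubini once more. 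The constant $c$ and the $-[-b]_+$ term, being independent of $\vx$, vanish under differentiation.

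Next I would rewrite the spherical average via the substitution $\vx=r\vu$, $\vu\in\S^{d-1}$, $ds(\vx)=r^{d-1}\,d\vu$, and exchange integrals by Fubini (the integrand is bounded in norm by $1$ and $|\alpha|$ is finite), to obtain
\begin{equation}
\frac{1}{c_d r^{d-1}}\oint_{\|\vx\|=r}\nabla f(\vx)\,ds(\vx) = \frac{1}{c_d}\int_{\SR}\vw\left(\int_{\S^{d-1}} H(r\,\vw^\T\vu - b)\,d\vu\right)d\alpha(\vw,b).
\end{equation}
The crux is the limit of the inner spherical integral for fixed $(\vw,b)$: since the great subsphere $\{\vu:\vw^\T\vu=0\}$ is $d\vu$-null and off it $H(r\,\vw^\T\vu-b)\to H(\vw^\T\vu)$ pointwise as $r\to\infty$, bounded convergence gives $\int_{\S^{d-1}} H(r\,\vw^\T\vu-b)\,d\vu \to \int_{\S^{d-1}} H(\vw^\T\vu)\,d\vu = c_d/2$, using the antipodal symmetry $\vu\mapsto-\vu$.

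Feeding this back, the integrand over $\SR$ is bounded in norm by $1$ and converges pointwise to $\tfrac{1}{2}\vw$, so a final application of dominated convergence against the finite measure $|\alpha|$ yields
\begin{equation}
\nabla f(\infty) = \frac{1}{c_d}\int_{\SR}\vw\cdot\frac{c_d}{2}\,d\alpha(\vw,b) = \frac{1}{2}\int_{\SR}\vw\,d\alpha(\vw,b) = \mathcal{V}(\alpha).
\end{equation}
I expect the main obstacle to be the rigorous justification of the gradient formula in the first step, namely commuting differentiation with integration against a general signed measure $\alpha$ while the ReLU is non-smooth; once that representative is in hand and seen to be uniformly bounded, the remaining limit is a routine double application of dominated and bounded convergence.
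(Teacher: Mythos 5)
Your proposal is correct and takes essentially the same route as the paper's proof: the explicit weak-gradient representative $\nabla f(\vx)=\int_{\SR}\vw\,H(\vw^\T\vx-b)\,d\alpha(\vw,b)$, the substitution $\vx=r\vu$ with an exchange of integrals over $\S^{d-1}$ and $\SR$, and the limit $\int_{\S^{d-1}}H(r\,\vw^\T\vu-b)\,d\vu\rightarrow c_d/2$ followed by division by $c_d$. The only difference is that you spell out the justifications the paper leaves implicit (it calls the gradient formula ``a simple calculation'' and passes the limit inside the $\alpha$-integral without comment), namely the Fubini, bounded-convergence, and dominated-convergence steps --- a welcome but inessential refinement.
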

\begin{proof}
A simple calculation shows the weak gradient of $f = h_{\alpha, c}$ is given by
\begin{align}
\nabla f (\vx) & = \int_{\S^{d-1}\times \R} H(\vw^\T \vx - b) \vw\, d\alpha(\vw,b)
\end{align}
where $H$ is defined as $H(t) = 1$ if $t \geq 0$ and $H(t) = 0$ if $t < 0$ otherwise.
Therefore, we have
\begin{align}
\lim_{r\rightarrow\infty}\frac{1}{r^{d-1}}\oint_{\|\vx\|=r}\nabla f(\vx) \, ds(\vx) 
& = \lim_{r\rightarrow\infty} \int_{\S^{d-1}\times \R} \int_{\S^{d-1}}H(r \vw^\T \vw'-b)\vw\, d\vw' d\alpha(\vw,b)\\
& =\lim_{r\rightarrow\infty}  \int_{\S^{d-1}\times \R} \vw\, \left(\int_{\vw^\T \vw' \geq b/r }\,d\vw'\right)\,d\alpha(\vw,b)\\
& = \left(\frac{1}{2}\int_{\S^{d-1}}d\vw'\right)\int_{\S^{d-1}\times \R} \vw\,\,d\alpha(\vw,b)
\end{align}
Finally, dividing both sides by $c_d = \int_{\sS^{d-1}}d\vw$ gives the result.
\end{proof}

\begin{myLem}\label{lem:rbaraffine}
If $f(\vx) = \vv_0^\T \vx + c$ then $\Rbar{f} = 2\|\vv_0\|$.
\end{myLem}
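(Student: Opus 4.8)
The plan is to reduce to the optimization characterization of $\Rbar{\cdot}$ and then establish matching bounds of $2\|\vv_0\|$. By Lemma \ref{lem:opteq} we have $\Rbar{f} = \min\{\|\alpha\|_1 : \alpha \in M(\SR),\, c'\in\R,\, f = h_{\alpha,c'}\}$, with the minimum attained whenever finite. So it suffices to show (i) every representing measure satisfies $\|\alpha\|_1 \ge 2\|\vv_0\|$, and (ii) some representing measure achieves equality.

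For the lower bound (i), I would invoke the preceding lemma, which gives $\nabla f(\infty) = \mathcal{V}(\alpha) = \tfrac12\int_{\SR}\vw\,d\alpha(\vw,b)$ for any representation $f=h_{\alpha,c'}$. Since $f(\vx)=\vv_0^\T\vx+c$ has constant weak gradient $\nabla f \equiv \vv_0$, averaging over the sphere $\{\|\vx\|=r\}$ gives $\nabla f(\infty)=\vv_0$ directly, because that sphere has surface area exactly $c_d r^{d-1}$. Hence $\vv_0 = \tfrac12\int_{\SR}\vw\,d\alpha$, and using $\|\vw\|=1$ on $\S^{d-1}$ together with the triangle inequality for vector-valued integrals, $\|\vv_0\| = \left\|\tfrac12\int_{\SR}\vw\,d\alpha\right\| \le \tfrac12\int_{\SR}\|\vw\|\,d|\alpha| = \tfrac12\|\alpha\|_1$, i.e. $\|\alpha\|_1 \ge 2\|\vv_0\|$.

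For the upper bound (ii), assuming $\vv_0 \ne 0$ I would set $\vw_0 = \vv_0/\|\vv_0\|$ and take the discrete (odd) measure $\alpha = \|\vv_0\|\bigl(\delta_{(\vw_0,0)} - \delta_{(-\vw_0,0)}\bigr)$, which has $\|\alpha\|_1 = 2\|\vv_0\|$. Using the identity $[t]_+ - [-t]_+ = t$ and the fact that the shift term $-[-b]_+$ vanishes at $b=0$, a direct evaluation gives $h_\alpha(\vx) = \|\vv_0\|\bigl([\vw_0^\T\vx]_+ - [-\vw_0^\T\vx]_+\bigr) = \|\vv_0\|\vw_0^\T\vx = \vv_0^\T\vx$, so $f = h_{\alpha,c}$ with $c'=c$. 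When $\vv_0 = 0$, the function $f \equiv c$ is realized by the zero measure with $\|\alpha\|_1 = 0 = 2\|\vv_0\|$. Combining with (i) gives $\Rbar{f} = 2\|\vv_0\|$.

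I do not expect a genuine obstacle here; the delicate points are purely bookkeeping. I would be careful that the constant shift term $-[-b]_+$ in the definition of $h_\alpha$ is correctly handled (choosing $b=0$ eliminates it), that the constructed $\alpha$ is genuinely an element of $M(\SR)$ (it is, being a finite combination of Diracs supported on $\SR$), and crucially that the lower-bound argument uses only $\|\vw\|=1$ and no evenness assumption, so that it applies to \emph{every} representing measure rather than just the even one selected in Lemma \ref{lem:uniqueness}.
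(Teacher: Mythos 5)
Your proof is correct, and it reaches the lower bound by a genuinely more elementary route than the paper. The paper first argues that any representing measure for an affine $f$ must be odd with $\mathcal{V}(\alpha)=\vv_0$, recasts $\Rbar{f}$ as the linear program $\min\|\alpha\|_1$ subject to $\mathcal{V}(\alpha)=\vv_0$, and obtains $2\|\vv_0\|$ as the value of the Banach-space dual program, via the adjoint $[\mathcal{V}^*\vy](\vw,b)=\tfrac12\vw^\T\vy$ and the observation that $\|\mathcal{V}^*\vy\|_\infty\leq 1$ iff $\|\vy\|\leq 2$. You instead pin down $\mathcal{V}(\alpha)=\vv_0$ for \emph{every} representing measure by combining the preceding lemma ($\nabla f(\infty)=\mathcal{V}(\alpha)$ for any $\alpha\in M(\SR)$) with the direct computation $\nabla f(\infty)=\vv_0$ for affine $f$ (valid since the sphere $\{\|\vx\|=r\}$ has surface area exactly $c_d r^{d-1}$), and then conclude $\|\alpha\|_1\geq 2\|\vv_0\|$ from the triangle inequality for the vector-valued integral, using only $\|\vw\|=1$. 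These are two faces of the same estimate---your inequality is precisely weak duality evaluated at the paper's dual-optimal certificate $\vy=2\vv_0/\|\vv_0\|$---but your version avoids setting up the dual program entirely, and, as you correctly emphasize, it makes explicit that no oddness or evenness of $\alpha$ is needed for the lower bound. The upper-bound constructions coincide: both use the odd two-Dirac measure $\|\vv_0\|\bigl(\delta_{(\vw_0,0)}-\delta_{(-\vw_0,0)}\bigr)$ with total variation $2\|\vv_0\|$, and your bookkeeping (the $-[-b]_+$ shift vanishing at $b=0$, the degenerate case $\vv_0=0$, attainment of the minimum in Lemma \ref{lem:opteq}) is all in order. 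What the paper's duality framing buys is reuse: the same machinery is extended in the proof of Theorem \ref{thm:2} to the constrained problem with the additional constraint $\mathcal{E}(\alpha)=\alpha^+$, where a bare triangle-inequality argument no longer suffices; your argument, by contrast, is self-contained and shorter for this lemma alone.
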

\begin{proof}
Note that $f=h_{\alpha, c}$ only if $\alpha$ is odd and $\mathcal{V}(\alpha) = \vv_0$. Hence, we have
\begin{equation}
\Rbar{f} = \min_{\alpha ~\text{odd}}\|\alpha\|_1~~s.t.~~\mathcal{V}(\alpha) = \vv_0
\end{equation}
The adjoint $\mathcal{V}^*:\Rd \rightarrow C_b(\SR)$ is given by $[\mathcal{V}^*\vy](\vw,b) = \frac{1}{2}\vw^\T \vy$.
Therefore, the dual of the convex program above is given by
\begin{equation}
\max_{\substack{\vy\in \Rd\\ \|\mathcal{V}^*\vy\|_\infty \leq 1}} \vv_0^\T \vy = \max_{\|\vy\| \leq 2} \vv_0^\T \vy = 2\|\vv_0\|
\end{equation}
where we used the fact that $\|\mathcal{V}^*\vy\|_\infty = \max_{\vw\in\S^{d-1}} \frac{1}{2}\|\vw^\T \vy\| \leq 1$ holds if and only if $\|\vy\|\leq 2$. This means  $2\|\vv_0\|$ is a lower bound for $\Rbar{f}$. Since this bound is reached with the primal feasible choice $\alpha$ defined by
\begin{equation}
\alpha(\vw,b) = \|\vv_0\|\left(\delta\left(\vw-\frac{\vv_0}{\|\vv_0\|},b\right)-\delta\left(\vw+\frac{\vv_0}{\|\vv_0\|},b\right)\right)
\end{equation}
we have $\overline{R}(f) = 2\|\vv_0\|$ as claimed.
\end{proof}

Now we give the proof of Theorem \ref{thm:2}.
\begin{proof}[Proof of Theorem \ref{thm:2}]
Suppose $\Rnorm{f}$ is finite. Set $\vv_0 = \nabla f(\infty)$. Then by Lemma \ref{lem:main}, there is a unique even measure $\alpha^+$ such that $f = h_{\alpha^+,\vv_0,c}$ for some unique $\vv_0\in\Rd,c\in\R$, with $\Rnorm{f} = \|\alpha^+\|_1$. Therefore, $\Rbar{f}$ is equivalent to the optimization problem
\begin{equation}
\Rbar{f} = \min_{\alpha^- \text{odd}}\|\alpha^+ + \alpha^-\|_1~~s.t.~~\mathcal{V}(\alpha^-) = \vv_0
\end{equation}
Since $\|\alpha^+ + \alpha^-\|_1 \leq \|\alpha^+\|_1 + \|\alpha^-\|_1$, by Lemma \ref{lem:rbaraffine} we see that
$\Rbar{f} \leq \|\alpha^+\|_1 + 2\|\vv_0\|$.
Now we show the lower bound. The above optimization problem is equivalent to 
\begin{equation}
\overline{R}(f) = \min_{\alpha}\|\alpha\|_1~~s.t.~~\mathcal{V}(\alpha) = \vv_0,~~ \mathcal{E}(\alpha) = \alpha^+
\end{equation}
where $\mathcal{E}(\alpha)$ projects onto the even part of $\alpha$. The Banach space adjoint $\mathcal{E}^*:C_0(\SR)\rightarrow C_0(\SR)$ is also projection onto the even part, \ie $[\mathcal{E}^*\varphi](\vw,b) = \frac{1}{2}(\varphi(\vw,b) + \varphi(-\vw,-b))$. Therefore, the dual problem is given by
\begin{equation}
\sup_{\substack{\varphi \in C_0(\S^{d-1}\times \R),\vy\in \Rd\\ \|\mathcal{V}^*\vy + \mathcal{E}^*\varphi\|_\infty \leq 1}} \vv_0^\T \vy + \int_{\SR}\varphi(\vw,b) d\alpha^+(\vw,b)
\end{equation}
We can constrain $\varphi$ to be even without changing the maximum since $\alpha^+$ is even. Thus the dual feasible set reduces to pairs $(\varphi,\vy)$ with $\varphi \in C_0(\SR)$ even and $\vy\in\Rd$ are such that $|\varphi(\vw,b) + \frac{1}{2}\vw^\T \vy|\leq 1$ for all $(\vw,b) \in \SR$. Taking the supremum over all dual feasible pairs $(\varphi,\bm 0)$ such that $\|\varphi\|_\infty \leq 1$, we see $\Rbar{f}\geq\|\alpha^+\|_1 = \Rnorm{f}$. Likewise, if we choose the dual feasible pair $(\varphi,\vy) = (0,2\vv_0/\|\vv_0\|)$ then the dual objective is $2\|\vv_0\|$, hence $\Rbar{f}\geq 2\|\vv_0\|$. This gives $\Rbar{f} \geq \max\{\Rnorm{f},2\|\vv_0\|\}$, as desired.
\end{proof}

Finally, we show there are examples where the upper bound in Theorem \ref{thm:2} is attained.
\begin{myProp}There exist infinite nets $f:\R^d\rightarrow\R$ in all dimensions $d$ such that
\begin{equation}
\overline{R}(f) = \|f\|_\mathcal{R}+2\|\nabla f(\infty)\|.
\end{equation}
\end{myProp}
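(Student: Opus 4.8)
The plan is to exhibit an explicit infinite-width net and certify the equality with a matching dual solution. For $d \geq 2$ take
\[
f(\vx) = |\ve_1^\T\vx| + \ve_d^\T\vx,
\]
the $d=2$ instance being the function $|x|+y$ flagged in the text. First I would pin down its canonical data. Since $|\ve_1^\T\vx| = [\ve_1^\T\vx]_+ + [-\ve_1^\T\vx]_+$, Example \ref{ex:finitenet} together with the uniqueness in Lemma \ref{lem:main} identifies the unique even measure representing the absolute-value part as $\alpha^+ = \delta_{(\ve_1,0)} + \delta_{(-\ve_1,0)}$ (indeed $h_{\alpha^+}(\vx) = \tfrac12(|\ve_1^\T\vx| + |-\ve_1^\T\vx|) = |\ve_1^\T\vx|$), with linear part $\vv_0 = \ve_d$ and $c = 0$. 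Hence $\Rnorm{f} = \|\alpha^+\|_1 = 2$ and $\|\nabla f(\infty)\| = \|\ve_d\| = 1$, so the claim reduces to showing $\Rbar{f} = 4 = \Rnorm{f} + 2\|\nabla f(\infty)\|$. (The degenerate case $d=1$ is covered trivially by any affine $f$, where $\Rnorm{f}=0$ and Lemma \ref{lem:rbaraffine} gives $\Rbar{f} = 2\|\nabla f(\infty)\|$; the point of the construction below is a genuinely non-trivial example with both terms positive, available whenever $d\ge 2$.)

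The upper bound $\Rbar{f} \le \Rnorm{f} + 2\|\nabla f(\infty)\|$ is exactly Theorem \ref{thm:2}, so only the matching lower bound remains. For this I would reuse the dual program derived in the proof of Theorem \ref{thm:2}: by weak duality, every pair $(\varphi,\vy)$ with $\varphi \in C_0(\SR)$ even, $\vy\in\Rd$, and $|\varphi(\vw,b) + \tfrac12\vw^\T\vy| \le 1$ for all $(\vw,b)\in\SR$ gives the bound $\Rbar{f} \ge \vv_0^\T\vy + \langle\alpha^+,\varphi\rangle$. It therefore suffices to construct a single feasible pair whose objective equals $4$.

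The certificate I would use is $\vy = 2\vv_0/\|\vv_0\| = 2\ve_d$, so that $\vv_0^\T\vy = 2\|\vv_0\| = 2$ and $\tfrac12\vw^\T\vy = \vw^\T\ve_d =: t \in [-1,1]$, together with $\varphi(\vw,b) := (1-|\vw^\T\ve_d|)\,g(b)$, where $g:\R\to[0,1]$ is continuous and even with $g(0)=1$ and $g(b)\to0$ as $|b|\to\infty$. Then $\varphi \in C_0(\SR)$ is even, and because $\ve_1 \perp \ve_d$ we have $\varphi(\pm\ve_1,0) = 1$, giving $\langle\alpha^+,\varphi\rangle = 2 = \|\alpha^+\|_1$. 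Feasibility is the single computation to run: writing $s = g(b)\in[0,1]$, the quantity $\varphi(\vw,b)+\tfrac12\vw^\T\vy = (1-|t|)s + t$ is affine in $s$, equal to $t\in[-1,1]$ at $s=0$ and to $1-|t|+t\in[-1,1]$ at $s=1$, hence stays in $[-1,1]$ throughout. The dual objective is then $\vv_0^\T\vy + \langle\alpha^+,\varphi\rangle = 2+2 = 4$, which combined with the upper bound forces $\Rbar{f}=4$.

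The conceptual heart of the argument, and the step I expect to require the most care, is the orthogonality $\ve_1 \perp \ve_d$ between the support directions of $\alpha^+$ and the linear direction $\vv_0$. Forcing $\vy = 2\vv_0/\|\vv_0\|$ to capture the full $2\|\vv_0\|$ shrinks the feasibility box $|\varphi(\vw,b)+\vw^\T(\vv_0/\|\vv_0\|)|\le1$ by an amount proportional to $\vw^\T\vv_0$; a short check imposing the box at both $(\vw,b)$ and its antipode $(-\vw,-b)$, using that $\alpha^+$ and $\varphi$ are even, shows the sign of $\alpha^+$ can be matched there only when $\vw^\T\vv_0 = 0$. This is precisely why the even part must live on directions orthogonal to the linear part, why $|x|+y$ is the right template in $d=2$, and why the construction extends verbatim to every $d\ge 2$; the remaining verifications (continuity, decay, and $C_0$-membership of $\varphi$) are routine.
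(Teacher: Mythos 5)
Your proposal is correct and takes essentially the same route as the paper: the identical example $f(\vx)=|\vw_+^\T\vx|+\vw_-^\T\vx$ with orthogonal directions, the same reduction to the dual program from the proof of Theorem~\ref{thm:2}, and a matching dual point with $\vy = 2\vv_0/\|\vv_0\|$. Your certificate $\varphi(\vw,b) = (1-|\vw^\T\ve_d|)\,g(b)$ is if anything a slight improvement: the paper uses a continuous approximation to $\sign(\alpha^+)$ supported near $\pm(\vw_+,0)$, which strictly requires a limiting argument for exact feasibility, whereas your single pair is exactly feasible and attains the objective $\Rnorm{f}+2\|\nabla f(\infty)\|$ outright (and your affine patch also covers $d=1$, where the paper's orthogonal pair does not exist).
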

\begin{proof}
Let $\vw_+,\vw_- \in \S^{d-1}$ be orthogonal.
Consider $f = h_{\alpha}$ defined by $\alpha = \alpha^+ + \alpha^-$ with
\begin{align}
\alpha^+ & = \delta(\vw-\vw_+,b)+\delta(\vw+\vw_+,b)\\
\alpha^- & = \delta(\vw-\vw_-,b)-\delta(\vw+\vw_-,b)
\end{align}
Hence, $f(\vx) = |\vw_+^\T \vx| + \vw_-^\T \vx$ (\eg in 2-D one such function is $f(x,y) = x + |y|$). The dual problem for $\Rbar{f}$ in this instance is given by:
\begin{equation}
\sup_{\substack{\varphi \in C_0(\S^{d-1}\times \R),\vy\in \Rd\\ \|\mathcal{W}^*\vy + \mathcal{E}^*\varphi\|_\infty \leq 1}} \vw_-^\T \vy + \int_{\SR}\varphi(\vw,b) d\alpha^+(\vw,b)
\end{equation}
Set $\vy^* = 2\vw_-^+$, and let $\varphi^*$ be a continuous approximation to $\sign(\alpha^+)$ whose support is localized to an arbitrarily small neighborhood of $\pm(\vw_+, 0)$. Then the pair $(\varphi^*,\vy^*)$ is dual feasible since 
\[
\psi(\vw,b) := [\mathcal{V}^*\vy^*](\vw,b) + \mathcal{E}^*\varphi^*(\vw,b) = 
\vw^\T \vw_- + \varphi^*(\vw,b)= 
\begin{cases}
1 & \text{~if~} \vw=\pm \vw_+ \text{ and } b=0\\
\vw^\T \vw_- & \text{~else}
\end{cases}
\]
and so $|\psi(\vw,b)| \leq 1$. For these choices of $(\beta^*,\vy^*)$ the dual objective is $2\|\vw_-\| + \|f\|_\mathcal{R}$, which gives a lower bound on $\overline{R}(f)$. But this is also an upper bound on $\Rbar{f}$ hence $\Rbar{f} = \Rnorm{f} + 2\|\vw_-\|$. Since $\nabla f(\infty) = \vw_-$, the result follows.
\end{proof}

\subsection{Properties of the $\cR$-norm}\label{appsec:moreproofs}
Here we prove the properties of $\cR$-norm discuseed in Section \ref{subsec:properties}, including Proposition \ref{prop:dilations}.
\begin{myProp}
The $\cR$-norm has the following properties:
\begin{itemize} 
  \item (1-homogeneity and triangle inequality) If $\Rnorm{f}, \Rnorm{g} < \infty$, then $\Rnorm{c\cdot f} = |c|\Rnorm{f}$ for all $c\in\R$ and $\Rnorm{f + g}\leq \Rnorm{f}+\Rnorm{g}$, \ie $\Rnorm{\cdot}$ is a \emph{semi-norm}.
  \item (Annihilation of affine functions) $\Rnorm{f} = 0$ if and only if $f$ is affine, \ie $f(\vx) = \vv^\T \vx + c$ for some $\vv\in\R^d$, $c\in\R$.
  \item (Translation and rotation invariance) If $g(\vx) = f(\mU\vx + \vy)$ where $\vy\in\R^d$ and $\mU\in\R^{d\times d}$ is any orthogonal matrix, then $\|g\|_\cR = \|f\|_\cR$.
  \item (Scaling with dilations/contractions) Suppose $\Rnorm{f} < \infty$. Let $f_\eps(\vx) := f(\vx/\eps)$, then $\|f_\eps\|_\cR = \eps^{-1}\|f\|_\cR$.
\end{itemize}
\end{myProp}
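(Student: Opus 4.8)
The plan is to derive all four properties directly from the dual definition in Definition~\ref{def:Rnorm}, together with the uniqueness result of Lemma~\ref{lem:main}. Write $L_f(\psi) := -\gamma_d\langle f,(-\Delta)^{(d+1)/2}\Rdns{\psi}\rangle$, so that $\Rnorm{f} = \sup\{L_f(\psi) : \psi \in B\}$ where $B := \{\psi\in\Sc(\Pd) : \|\psi\|_\infty \le 1\}$. The two structural facts I will lean on throughout are that $f\mapsto L_f(\psi)$ is linear for each fixed $\psi$, and that $B$ is convex and symmetric (closed under $\psi\mapsto-\psi$).

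For the semi-norm claim, 1-homogeneity and the triangle inequality are immediate from this description. For $c\in\R$, linearity gives $L_{cf} = cL_f$; if $c\ge 0$ then $\sup_B cL_f = c\sup_B L_f$, while if $c<0$ I use symmetry of $B$ to replace $\psi$ by $-\psi$ and obtain $\sup_B(-L_f) = \sup_B L_f$, yielding $\Rnorm{cf}=|c|\Rnorm{f}$ in both cases. The triangle inequality follows from $L_{f+g}=L_f+L_g$ and the subadditivity of the supremum, $\sup_B(L_f+L_g)\le \sup_B L_f+\sup_B L_g$. For the annihilation of affine functions I would argue through Lemma~\ref{lem:main}: an affine $f(\vx)=\vv^\T\vx+c$ equals $h_{0,\vv,c}$, whose representing even measure is the zero measure, so $\Rnorm{f}=\|0\|_1=0$; conversely, if $\Rnorm{f}=0$ then Lemma~\ref{lem:main} produces an even measure $\alpha$ with $f=h_{\alpha,\vv,c}$ and $\|\alpha\|_1=\Rnorm{f}=0$, forcing $\alpha=0$ and hence $f$ affine.

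The invariance under rigid motions and the dilation law I would handle uniformly by a change of variables inside $L_f$. In each case the composition of $f$ with the coordinate map pulls back, after a change of variables in $\langle\cdot,\cdot\rangle$, to an inner product of $f$ against a transformed integrand, and the point is that this transformed integrand is again of the form $(-\Delta)^{(d+1)/2}\Rdns{\psi'}$ for an explicit $\psi'$. The required identities are: $(-\Delta)^{s/2}$ commutes with rotations and translations and scales as $\eps^{s}$ under $\vx\mapsto \eps\vx$ (all checked in Fourier domain using $\|\mU\bm\xi\|=\|\bm\xi\|$); and the dual Radon transform satisfies $\Rdns{\psi}(\mU^\T\vx)=\Rdns{\psi''}(\vx)$, $\Rdns{\psi}(\vx-\vy)=\Rdns{\psi'''}(\vx)$, and $\Rdns{\psi}(\eps\vx)=\Rdns{\psi'}(\vx)$, where $\psi''(\vw,b)=\psi(\mU^\T\vw,b)$, $\psi'''(\vw,b)=\psi(\vw,b-\vw^\T\vy)$, and $\psi'(\vw,b)=\psi(\vw,\eps b)$, each verified by a substitution in the defining integral \eqref{eq:adjoint}. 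Combining these, $L_g(\psi)=L_f(\psi')$ for rotations and translations and $L_{f_\eps}(\psi)=\eps^{-1}L_f(\psi')$ for dilations, after the Jacobian factor $\eps^d$ from $\langle f_\eps,\cdot\rangle$ cancels against $\eps^{-(d+1)}$ from the Laplacian scaling. Since each map $\psi\mapsto\psi'$ preserves evenness, the Schwartz class, and the sup-norm, it is a bijection of $B$ onto itself, so taking suprema gives $\Rnorm{g}=\Rnorm{f}$ and $\Rnorm{f_\eps}=\eps^{-1}\Rnorm{f}$, valid whether the norms are finite or infinite.

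The main obstacle is the bookkeeping in the previous paragraph: correctly tracking the powers of $\eps$ (verifying the Jacobian and fractional-Laplacian scaling factors combine to exactly $\eps^{-1}$), and confirming that the substitution maps genuinely send $\Sc(\Pd)$ into itself while preserving the constraint $\|\psi\|_\infty\le1$. An alternative, which avoids the fractional-Laplacian scaling identity entirely, is to push the change of variables through the representing measure instead: writing $f=h_{\alpha,\vv,c}$ and using positive homogeneity of $[\cdot]_+$, one checks that $f_\eps$ and $f\circ(\mU\cdot+\vy)$ are again infinite-width nets whose even representing measures are pushforwards of $\alpha$ (scaled by $\eps^{-1}$ in the dilation case), and then invokes the uniqueness in Lemma~\ref{lem:main}; the only subtlety there is absorbing the $-[-b]_+$ offset terms into the constant $c$.
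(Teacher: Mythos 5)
Your proposal is correct, and it takes a route that is related to but genuinely more complete than what the paper writes down. For the semi-norm property and the annihilation of affine functions you do exactly what the paper does (supremum-of-linear-functionals for the former, the uniqueness in Lemma~\ref{lem:main} for the latter, including the same observation that $\Rnorm{f}=0$ forces the representing even measure to vanish). For the invariance and dilation properties, however, the paper's proof only carries out the computation ``under the same conditions of Proposition~\ref{prop:ell1}'' --- odd $d$ with $f,\Delta^{(d+1)/2}f\in L^1(\Rd)$ --- using the classical Radon-transform identities $\Rdn{g_{(\vy)}}(\vw,b)=\Rdn{g}(\vw,b+\vw^\T\vy)$, $\Rdn{g_\mU}(\vw,b)=\Rdn{g}(\mU\vw,b)$, and $\Rdn{f_\eps}(\vw,b)=\eps^{d-1}\Rdn{f}(\vw,b/\eps)$, and then asserts that ``the general case follows from standard duality arguments.'' Your proof \emph{is} that standard duality argument, spelled out: you transfer the change of variables onto the dual test functions via $\psi'(\vw,b)=\psi(\vw,\eps b)$, $\psi''(\vw,b)=\psi(\mU^\T\vw,b)$, $\psi'''(\vw,b)=\psi(\vw,b-\vw^\T\vy)$, check that each is a bijection of the dual ball $\{\psi\in\Sc(\Pd):\|\psi\|_\infty\le 1\}$ onto itself, and your $\eps$-bookkeeping is exactly right --- the Jacobian $\eps^d$ against the fractional-Laplacian scaling $\eps^{-(d+1)}$ (checked in Fourier domain) yields the net factor $\eps^{-1}$, matching the paper's $\eps^{d-1}\cdot\eps^{-(d+1)}\cdot\eps$ accounting in its $L^1$ computation. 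What your route buys is uniformity: it is valid for every dimension $d$ (even or odd) and every Lipschitz $f$, with no integrability hypotheses, and it even handles the case $\Rnorm{f}=+\infty$, whereas the paper's written computation is restricted to the classical regime and delegates the rest. What the paper's route buys is concreteness --- the explicit formula $\Rnorm{f}=\gamma_d\|\partial_b^{d+1}\Rdn{f}\|_1$ makes the three transformation laws one-line substitutions. Your sketched alternative via pushforwards of the representing measure together with the uniqueness of Lemma~\ref{lem:main} is also sound (the $-[-b]_+$ offsets and the $[\,\cdot\,]_+$ positive homogeneity indeed only perturb $\vv$ and $c$), though it only applies when $\Rnorm{f}<\infty$, so the dual argument remains the one that covers the statement in full generality.
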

\begin{proof}
The 1-homogenity and triangle inequality properties follow immediate from the linearity of all operations and the definition by way of a set supremum. 

Clearly $\Rnorm{f} = 0$ if $f$ is affine. Conversely, suppose $\Rnorm{f} = 0$ then by the uniqueness in Lemma \ref{lem:main}, we have $\alpha = 0$, and so $f = h_{0,\vv,c}$ for some $\vv \in\Rd$ and $c\in\R$, hence $f$ is affine. 

For simplicity we demonstrate proofs of the remaining properties under the same conditions of Proposition \ref{prop:ell1}, \ie $d$ odd, and where $f$, $\Delta^{(d+1)/2}f \in L^1(\Rd)$ so that $\Rnorm{f} = \gamma_d\|\Rdn{\Delta^{(d+1)/2}f}\|_1 = \gamma_d\|\partial_b^{d+1}\Rdn{f}\|_1 < \infty$. The general case follows from standard duality arguments.

To show translation invariance, define $f_{(\vy)}(\vx) := f(\vx-\vy)$. Then since $\Delta$ commutes with translations we have $\Delta^{(d+1)/2}f_{(\vy)} = [\Delta^{(d+1)/2}f]_{(\vy)}$. Also, for any function $g$ we see that
\begin{equation}
\Rdn{g_{(\vy)}}(\vw,b) = \Rdn{g}(\vw, b + \vw^\T \vy),
\end{equation}
Therefore,
\begin{align}
\Rnorm{f_{(\vy)}} & = \int_{\SR}|\Rdn{\Delta^{(d+1)/2}f_{(\vy)}}(\vw,b)|\,d\vw\,db\\
                & = \int_{\SR}|\Rdn{\Delta^{(d+1)/2}f}(\vw,b+\vw^\T\vy)|\,d\vw\,db\\
                & = \int_{\SR}|\Rdn{\Delta^{(d+1)/2}f}(\vw,b)|\,d\vw\,db = \Rnorm{f}.
\end{align}

To show rotation invariance, let $f_{\mU}(\vx) = f(\mU\vx)$ where $\mU$ is any orthogonal $d\times d$ matrix. Then, using the fact that the Laplacian commutes with rotations, we have $\Delta^{(d+1)/2} f_{\mU}(\vx) = \Delta^{(d+1)/2} f(\mU\vx)$, and since $\Rdn{g_{\mU}}(\vw,b) = \Rdn{g}(\mU\vw,b)$, we see that $\Rdn{\Delta^{(d+1)/2} f_{\mU}}(\vw,b) = \Rdn{\Delta^{(d+1)/2} f}(\mU\vw,b)$, and so
\begin{equation}
\Rnorm{f_{\mU}} = \Rnorm{f}.
\end{equation}

To show the scaling under contractions/dilations (\ie Proposition \ref{prop:dilations}), let $f_\eps(\vx) =  f(\vx/\eps)$ for $\eps > 0$.
Then
\begin{align}
\mathcal{R}\{f_\eps\}(\vw,b) &  = \int_{\vw^\T \vx = b} f(\vx/\eps) ds(\vx)\\
&  = \eps^{d-1}\int_{\vw^\T \tilde{\vx} = b/\eps} f(\tilde{\vx}) ds(\tilde{\vx})\\
& = \eps^{d-1}\Rdn{f}(\vw,b/\eps).
\end{align}
Hence, we have
\begin{align}
|\partial_b^{d+1}\Rdn{f_\eps}(\vw,b)|
& = \eps^{d-1}\eps^{-d-1}|\partial^{d+1}_b\Rdn{f}(\vw,b/\eps)|\\
& = \eps^{-2}|\partial^{d+1}_b\Rdn{f}(\vw,b/\eps)|
\end{align}
and so
\begin{align}
\int_{\SR}|\partial^{d+1}_b\Rdn{f_\eps}(w,b)|\, d\vw\, db 
& = \eps^{-2} \int_{\SR}|\partial^{d+1}_b\Rdn{f}(\vw,b/\eps)|\,d\vw\, db \\
& = \eps^{-1} \int_{\SR}|\partial^{d+1}_b\Rdn{f}(\vw,\tilde b)|\,d\vw\, d\tilde b \\
& = \eps^{-1} \Rnorm{f}.
\end{align}
\end{proof}

\paragraph{Fourier estimates}
For any Lipschitz function $f$ we can always interpret $\Delta f$ in a distributional sense.
An interesting special case is when $\Delta f$ is a distribution of order zero, \ie when there exists a constant $C$ such that $|\langle \Delta f, \varphi\rangle| \leq C\|\varphi\|_\infty$ for all smooth compactly supported functions $\varphi$ so that $\Delta f$ extends uniquely to a measure having finite total variation. In this case, the Fourier transform of $\Delta f$, defined as $\widehat{\Delta f}(\bm\xi) : = \langle \Delta f, e^{-j2\pi\vx^\T{\bm\xi}}\rangle$ for all $\bm \xi \in \Rd$, is a continuous and bounded function, and we can make use of an extension of the Fourier slice theorem to Radon transforms of measures (see, e.g., \cite{boman2009support}) to analyze properties of $\Rnorm{f}$. In particular, the following result shows that in order for $\Rnorm{f}$ to be finite, the Fourier transform of $\Delta f$ (or the Fourier transform of $f$ if it exists classically) must decay at a dimensionally dependent rate.

\begin{myProp}\label{prop:fourier}
Suppose $\Delta f$ is a distribution of order zero. Then $\Rnorm{f}$ is finite only if $\widehat{\Delta f}(\sigma \cdot \vw) = O(|\sigma|^{-(d-1)})$ as $|\sigma|\rightarrow \infty$ for all $\vw\in \S^{d-1}$. If additionally $f\in L^1(\R^d)$, then $\|f\|_{\cR}$ is finite only if $\widehat{f}(\sigma \cdot \vw) = O(|\sigma|^{-(d+1)})$ as $|\sigma|\rightarrow \infty$ for all $\vw\in \S^{d-1}$.
\end{myProp}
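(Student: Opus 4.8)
The plan is to reduce the statement to a quantitative estimate on the even measure defining $f$ and then transfer that estimate to the Fourier domain via the Fourier slice theorem. First, since $\Rnorm{f}$ is finite, Lemma \ref{lem:main} supplies a unique even measure $\alpha\in M(\Pd)$ (together with $\vv\in\Rd$, $c\in\R$) such that $f=h_{\alpha,\vv,c}$ and $\|\alpha\|_1=\Rnorm{f}$, and Lemma \ref{lem:Lap2Rdn} identifies $\Delta f=\Rdns{\alpha}$ in the distributional sense. Under the hypothesis that $\Delta f$ is a distribution of order zero, $\Delta f$ is in fact a finite measure, so its Fourier transform $\widehat{\Delta f}$ is a bounded continuous function and the radial slice $\widehat{\Delta f}(\sigma\vw)$ is defined for every $\vw\in\S^{d-1}$.

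Second, I would combine two ingredients to obtain an explicit formula for this slice. Applying the Radon transform to $\Delta f=\Rdns{\alpha}$ and using the intertwining identity (which follows from the inversion formula \eqref{eq:invform} together with $\Rdn{(-\Delta)^{(d-1)/2}g}=(-\partial_b^2)^{(d-1)/2}\Rdn{g}$) gives the composition law $\Rdn{\Rdns{\alpha}}=\gamma_d^{-1}(-\partial_b^2)^{-(d-1)/2}\alpha$, hence $\Rdn{\Delta f}=\gamma_d^{-1}(-\partial_b^2)^{-(d-1)/2}\alpha$. Taking the one-dimensional Fourier transform in the offset variable $b$ and invoking the Fourier slice theorem, in the form extended to measures \cite{boman2009support}, then yields
\begin{equation}
\widehat{\Delta f}(\sigma\vw)=\F_b\Rdn{\Delta f}(\vw,\sigma)=\gamma_d^{-1}|\sigma|^{-(d-1)}\,\F_b\alpha(\vw,\sigma),
\end{equation}
where $\F_b\alpha(\vw,\cdot)$ is the Fourier transform of the $\vw$-fiber of $\alpha$. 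Because $\alpha$ is a finite measure, each such fiber transform is bounded (by the mass of the corresponding slice of $\alpha$, which is finite and at most $\|\alpha\|_1$), so $\widehat{\Delta f}(\sigma\vw)=O(|\sigma|^{-(d-1)})$ as $|\sigma|\to\infty$ for every fixed $\vw$, with a $\vw$-dependent constant. This proves the first claim.

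Third, for the case $f\in L^1(\Rd)$ the Fourier transform $\widehat{f}$ exists classically as a bounded continuous function, and the identity $\widehat{\Delta f}(\bm\xi)=-\|\bm\xi\|^2\widehat{f}(\bm\xi)$ holds pointwise (both sides being genuine functions). Restricting to the ray $\bm\xi=\sigma\vw$ and dividing by $\sigma^2$ gives $\widehat{f}(\sigma\vw)=-\sigma^{-2}\widehat{\Delta f}(\sigma\vw)=O(|\sigma|^{-(d+1)})$, which is the second claim.

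The main obstacle is the rigorous justification of the displayed identity for a general measure $\alpha$ rather than a smooth density: both the composition law $\Rdn{\Rdns{\alpha}}=\gamma_d^{-1}(-\partial_b^2)^{-(d-1)/2}\alpha$ and the slice-wise meaning of $\F_b\alpha(\vw,\cdot)$ must be made precise, since a measure does not restrict to individual fibers $\{\vw\}\times\R$ in an elementary way. This is exactly where the extension of the Fourier slice theorem to Radon transforms of measures is required. Alternatively, one can establish the bound directly by testing $\Delta f$ against Schwartz wave packets $\varphi_{\vw,\sigma}$ approximating $e^{-i\sigma\vw^\T\vx}$, for which $\psi_{\vw,\sigma}=\Rdn{\varphi_{\vw,\sigma}}$ has $\|\psi_{\vw,\sigma}\|_\infty\sim|\sigma|^{-(d-1)}$ (the factor arising from the $(-\Delta)^{(d-1)/2}$ in Lemma \ref{lem:Solmon}), and then applying $|L_f(\psi_{\vw,\sigma})|\le\Rnorm{f}\,\|\psi_{\vw,\sigma}\|_\infty$. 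One should also check that the order-zero hypothesis on $\Delta f$ precisely excludes the degenerate directional behaviour (e.g.\ finite-width ReLU nets, whose Laplacians are infinite-mass surface measures) that would otherwise break the fiberwise argument.
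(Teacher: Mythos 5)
Your main route is essentially the paper's own proof: both arguments hinge on the identity $\F_b\alpha(\vw,\sigma)=\gamma_d|\sigma|^{d-1}\F_b\Rdn{\Delta f}(\vw,\sigma)=\gamma_d|\sigma|^{d-1}\widehat{\Delta f}(\sigma\cdot\vw)$ for the unique even measure $\alpha$ from Lemma \ref{lem:main}, followed by the bound $\|\F_b\alpha\|_\infty\leq\|\alpha\|_1=O(1)$, and the second claim follows in both cases from $\widehat{\Delta f}(\bm\xi)=\|\bm\xi\|^2\widehat{f}(\bm\xi)$ (up to sign convention). The only real difference is how the identity is justified, and here the gap you honestly flag is precisely what the paper's proof supplies. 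You assert the composition law $\Rdn{\Rdns{\alpha}}=\gamma_d^{-1}(-\partial_b^2)^{-(d-1)/2}\alpha$ for a general even measure; the paper never proves such a law for measures. Instead it runs a purely dual computation: for $\varphi\in\Sc(\Pd)$ it moves $\F_b$ onto the test function, unwinds $L_f(\F_b\varphi)$, passes one Laplacian onto $f$ to produce $\Delta f$, uses the intertwining $(-\Delta)^{(d-1)/2}\Rdns{\psi}=\Rdns{(-\partial_b^2)^{(d-1)/2}\psi}$ so that the fractional derivative becomes the Fourier multiplier $|\sigma|^{d-1}$ on the test function, and finally transfers $\Rdns{}$ back onto $\Delta f$ using the duality definition of $\Rdn{\Delta f}$ for finite measures from \cite{boman2009support}. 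Note also that the pointwise meaning is obtained in the direction opposite to yours: the paper does not attempt to restrict $\alpha$ to fibers $\{\vw\}\times\R$ (which, as you yourself note, is not elementary), but uses that $\Rdn{\Delta f}(\vw,\cdot)$ is a well-defined finite measure for every $\vw$ --- a fact from the measure-theoretic slice theorem --- so that $\F_b\Rdn{\Delta f}$, and hence $\F_b\alpha$ via the distributional identity, is defined pointwise; the uniform bound then comes from $\|\F_b\alpha\|_\infty\leq\|\alpha\|_1$, not from ``the mass of a slice of $\alpha$.''

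Your sketched wave-packet alternative, however, fails as stated. If $\varphi_{\vw,\sigma}(\vx)=e^{-i\sigma\vw^\T\vx}\chi(\vx)$ for a fixed window $\chi$, then in the co-oriented slicing direction one has $\Rdn{\varphi_{\vw,\sigma}}(\vw,b)=e^{-i\sigma b}\Rdn{\chi}(\vw,b)$, so $\|\Rdn{\varphi_{\vw,\sigma}}\|_\infty=\Theta(1)$, not $\sim|\sigma|^{-(d-1)}$: modulation costs nothing in the sup norm of the Radon transform along the modulation direction, and the bound $|\langle\Delta f,\varphi\rangle|\leq\Rnorm{f}\,\|\Rdn{\varphi}\|_\infty$ (via Proposition \ref{prop:RnormLB}) then yields only $O(1)$, i.e.\ no decay. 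The decisive $|\sigma|^{d-1}$ factor does not come from the sup norm of $\Rdn{\varphi}$; it comes from testing the definition of $\Rnorm{f}$ with a $\psi$ that oscillates in the offset variable $b$, so that $(-\Delta)^{(d+1)/2}\Rdns{\psi}$ itself carries the factor when paired against $\Delta f$ --- which is exactly what the paper's duality chain does implicitly. (One could also try frequency bumps of width proportional to $|\sigma|$, which do give $\|\Rdn{\varphi}\|_\infty\lesssim|\sigma|^{-(d-1)}$, but then $\langle\Delta f,\varphi\rangle$ is an average of $\widehat{\Delta f}$ over a ball of radius $\sim|\sigma|$, and upgrading that averaged decay to the pointwise statement requires regularity of $\widehat{\Delta f}$ you have not assumed.) Since this alternative is offered only as a side remark, it does not undermine your main argument, but as written it should be deleted or repaired along the lines above.
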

\begin{proof}
If $\Delta f \in M(\R^d)$ is a finite measure then its Radon transform $\Rdn{\Delta f} \in M(\Pd)$ exists as a finite measure, \ie we can define $\Rdn{\Delta f}$ via duality as  $\langle \Rdn{\Delta f}, \varphi\rangle = \langle \Delta f, \Rdns{\varphi}\rangle$ for all $\varphi \in \C_0(\R^d)$ (see, e.g., \cite{boman2009support}). Additionally, the restriction $\Rdn{\Delta f}(w,\cdot) \in M(\R)$ is well-defined finite measure for all $\vw\in\S^{d-1}$, and its 1-D Fourier transform in the $b$ variable is given by
\begin{equation}
    \F_b{\Rdn{\Delta f}}(\vw,\sigma) = \widehat{\Delta f}(\sigma \cdot \vw)~~\text{for all}~~\vw\in\S^{d-1}, \sigma \in \R.
\end{equation}

By Lemma \ref{lem:main}, $\Rnorm{f}$ is finite if and only if the functional $L_f(\psi) = -\gamma_d\langle f,(-\Delta)^{(d+1)/2}\Rdns{\psi}\rangle$ defined for all $\psi \in \Sc(\Pd)$ extends to a unique measure $\alpha \in M(\Pd)$. We compute the Fourier transform of $\alpha$ in the $b$ variable via duality: for all $\varphi \in \Sc(\Pd)$ we have
\begin{align}
\langle \F_b\alpha, \varphi \rangle 
& = \langle \alpha, \F_b\varphi \rangle\\
& = -\gamma_d \langle f , (-\Delta)^{(d+1)/2}\Rdns{\F_b\varphi} \rangle\\
& = \gamma_d \langle \Delta f , (-\Delta)^{(d-1)/2}\Rdns{\F_b\varphi} \rangle\\
& = \gamma_d \langle \Delta f , \Rdns{(-\partial_b^2)^{(d-1)/2}\F_b\varphi} \rangle\\
& = \gamma_d \langle \Delta f , \Rdns{\F_b(|\sigma|^{d-1}\varphi)} \rangle\\
& = \gamma_d\langle \Rdn{\Delta f} , \F_b(|\sigma|^{d-1}\varphi) \rangle\\
& = \gamma_d\langle \F_b\Rdn{\Delta f} , |\sigma|^{d-1}\varphi \rangle\\
& = \gamma_d\langle |\sigma|^{d-1}\F_b\Rdn{\Delta f} , \varphi \rangle
\end{align}
This shows $\F_b\alpha = \gamma_d|\sigma|^{d-1}\F_b\Rdn{\Delta f}$ in the sense of distributions.
Since $\F_b\Rdn{\Delta f}$ is defined pointwise for all $(\vw,b)\in\SR$ so is $\F_b\alpha$ and we have 
\begin{equation}
    \F_b\alpha(\vw,\sigma) = \gamma_d|\sigma|^{d-1}\F_b\Rdn{\Delta f}(\vw,\sigma) = \gamma_d|\sigma|^{d-1} \widehat{\Delta f}(\sigma \cdot \vw).
\end{equation}
Finally, since $\alpha$ is a finite measure, we know $\|\F_b\alpha\|_\infty \leq \|\alpha\|_1 = O(1)$, which gives the first result. If additionally $f \in L^1(\R^d)$ then we have $\widehat{\Delta f}(\bm \xi) = \|\bm \xi\|^2 \widehat{f}(\bm \xi)$, and so $(\F_b\alpha)(\vw,b) = |\sigma|^{d+1} \widehat{f}(\sigma \cdot \vw)$ which gives the second result.
\end{proof}

\subsection{Upper and Lower bounds}\label{appsec:bounds}
Here we prove several upper and lower bounds for the $\cR$-norm. Proposition \ref{prop:upper} is an immediate corollary of the following upper bound:
\begin{myProp}
If $(-\Delta)^{(d+1)/2}f$ is a finite measure, then
\begin{equation}
    \|f\|_{\cR} \leq \gamma_d c_d \|(-\Delta)^{(d+1)/2}f\|_1,
\end{equation}
In particular, if $(-\Delta)^{(d+1)/2}f$ exists in a weak sense then $\|\cdot\|_1$ can be interpreted as the $L^1$-norm.
\end{myProp}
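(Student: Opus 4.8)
The plan is to bound, uniformly over the admissible test functions, each term appearing in the supremum that defines $\Rnorm{f}$ in Definition \ref{def:Rnorm}, and then pass to the supremum. Write $\mu := (-\Delta)^{(d+1)/2}f$, which by hypothesis is a finite measure on $\Rd$, so that $\|\mu\|_1 = \|(-\Delta)^{(d+1)/2}f\|_1$. Fix any even $\psi \in \Sc(\SR)$ with $\|\psi\|_\infty \leq 1$; since $f$ is Lipschitz the pairing $\langle f, (-\Delta)^{(d+1)/2}\Rdns{\psi}\rangle$ is finite (this is exactly the well-definedness of the functional $L_f$). It then suffices to establish the single-test-function bound
\begin{equation}
  \left|\,\gamma_d\langle f, (-\Delta)^{(d+1)/2}\Rdns{\psi}\rangle\,\right| \leq \gamma_d c_d\|\mu\|_1 ,
\end{equation}
since the supremum over a sign-symmetric family of $\psi$ equals the supremum of the absolute value.

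The argument then splits into two elementary estimates. First, transfer the operator $(-\Delta)^{(d+1)/2}$ off $\Rdns{\psi}$ and onto $f$ using self-adjointness of the (fractional) Laplacian, which formally gives
\begin{equation}
  \langle f, (-\Delta)^{(d+1)/2}\Rdns{\psi}\rangle = \langle (-\Delta)^{(d+1)/2}f, \Rdns{\psi}\rangle = \langle \mu, \Rdns{\psi}\rangle ,
\end{equation}
the right-hand pairing being legitimate because $\mu$ is a finite measure and $\Rdns{\psi}$ is bounded and continuous (indeed vanishes at infinity). Second, bound the dual Radon transform in sup norm directly from \eqref{eq:adjoint}:
\begin{equation}
  |\Rdns{\psi}(\vx)| \leq \int_{\S^{d-1}} |\psi(\vw,\vw^\T\vx)|\,d\vw \leq \|\psi\|_\infty \int_{\S^{d-1}} d\vw = c_d\|\psi\|_\infty \leq c_d
\end{equation}
for every $\vx\in\Rd$. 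Combining these two facts with the measure-duality estimate $|\langle\mu,\Rdns{\psi}\rangle| \leq \|\Rdns{\psi}\|_\infty\,\|\mu\|_1$ yields $|\langle\mu,\Rdns{\psi}\rangle| \leq c_d\|\mu\|_1$; multiplying by $\gamma_d$ and taking the supremum over $\psi$ gives $\Rnorm{f}\leq \gamma_d c_d\|\mu\|_1$. When $\mu$ has a density $g\in L^1_{\mathrm{loc}}(\Rd)$ its total variation norm equals $\|g\|_{L^1}$, which furnishes the stated ``weak sense'' formulation.

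The main obstacle is making the self-adjointness step rigorous, since $f$ is only Lipschitz (hence has at most linear growth) while $\Rdns{\psi}$ is not Schwartz but decays only polynomially, so the identity $\langle f, (-\Delta)^{(d+1)/2}\varphi\rangle = \langle\mu,\varphi\rangle$ that \emph{defines} $\mu$ is a priori guaranteed only for $\varphi\in\Sc(\Rd)$. The resolution I would use is an approximation argument extending this identity to $\varphi = \Rdns{\psi}$: choose Schwartz $\varphi_n$ with $\varphi_n\to\Rdns{\psi}$ in sup norm (so $\langle\mu,\varphi_n\rangle\to\langle\mu,\Rdns{\psi}\rangle$, as $\mu$ is finite) and with $(-\Delta)^{(d+1)/2}\varphi_n\to(-\Delta)^{(d+1)/2}\Rdns{\psi}$ under an integrable envelope decaying faster than $\|\vx\|^{-d-1}$, so that the pairing against the linearly growing $f$ converges by dominated convergence. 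The decay $(-\Delta)^{(d+1)/2}\Rdns{\psi}(\vx)=O(\|\vx\|^{-d-2})$ needed here is precisely what Lemma \ref{lem:Solmon} supplies (it is the same decay that made $L_f(\psi)$ finite), so this is the one place the proof leans on more than elementary inequalities; everything else is a two-line estimate.
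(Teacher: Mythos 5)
Your proposal is correct and takes essentially the same route as the paper's proof: move $(-\Delta)^{(d+1)/2}$ onto $f$ by self-adjointness, bound $\|\Rdns{\psi}\|_\infty \leq c_d\|\psi\|_\infty$ pointwise from the definition of $\cR^*$, and conclude by duality between the total variation norm and the sup norm. The only difference is that you make explicit the approximation argument (using the $O(\|\vx\|^{-d-2})$ decay from Lemma \ref{lem:Solmon} and dominated convergence) that justifies the self-adjointness step against the merely Lipschitz $f$, a point the paper's proof leaves implicit.
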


\begin{proof}
Straight from definitions we have
\begin{align}
\|f\|_{\cR} & = \sup\left\{ \gamma_d\langle f, (-\Delta)^{(d+1)/2}\Rdns{\psi}\rangle : \psi \in \Sc(\Pd), \|\psi\|_\infty \leq 1 \right\}\\
            & = \sup\left\{ \gamma_d\langle (-\Delta)^{(d+1)/2} f, \Rdns{\psi}\rangle : \psi \in \Sc(\Pd), \|\psi\|_\infty \leq 1 \right\}\\
            & \leq \sup\left\{ \gamma_d\langle (-\Delta)^{(d+1)/2} f, \varphi \rangle : \varphi \in C_0(\Rd), \|\varphi\|_\infty \leq c_d \right\}\\
            & = \gamma_dc_d\|(-\Delta)^{(d+1)/2}f\|_1
\end{align}
where we used the fact that $\Rdns{\varphi}\in C_0(\Rd)$ for $\varphi \in \Sc(\Pd)$ \cite[Corollary 3.6]{solmon1987asymptotic} and we have $\|\Rdns{\varphi}\|_\infty \leq c_d$ for all $\varphi \in \Sc(\Pd)$ such that $\|\varphi\|_\infty \leq 1$ since
\begin{align}
    |\Rdns{\varphi}(\vx)| & \leq \int_{\S^{d-1}}|\varphi(\vw,\vw^\T \vx)|\,d\vw  \leq \int_{\S^{d-1}} d\vw = c_d.                  
\end{align}
\end{proof}

The following result also gives a useful lower bound on the $\cR$-norm.

\begin{myProp}\label{prop:RnormLB}
If $f \in \Lip(\Rd)$ then
\begin{equation}
    \|f\|_{\cR} \geq \sup \left\{ \langle f, \Delta \varphi \rangle : \varphi \in \Sc(\R^{d}),  \|\Rdn{\varphi}\|_\infty \leq 1 \right\}.
\end{equation}
\end{myProp}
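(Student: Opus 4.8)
The plan is to show that every test function admissible in the supremum on the right-hand side induces an admissible test function in the definition of $\Rnorm{f}$, so that the right-hand side is a supremum over a \emph{smaller} feasible set and is therefore bounded above by $\Rnorm{f}$. Concretely, I would fix $\varphi \in \Sc(\Rd)$ with $\|\Rdn{\varphi}\|_\infty \leq 1$ and set $\psi := \Rdn{\varphi}$. Since the Radon transform maps Schwartz functions on $\Rd$ to even Schwartz functions on $\SR$ (\cite[Theorem 2.4]{helgason1999radon}), we have $\psi \in \Sc(\Pd)$, and the hypothesis gives $\|\psi\|_\infty \leq 1$, so $\psi$ is feasible in the supremum defining $\Rnorm{f}$ in Definition~\ref{def:Rnorm}.

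Next I would establish the pointwise identity linking the two integrands. Writing $(-\Delta)^{(d+1)/2} = (-\Delta)(-\Delta)^{(d-1)/2}$ and invoking the Radon inversion formula $-\gamma_d(-\Delta)^{(d-1)/2}\Rdns{\Rdn{\varphi}} = \varphi$ (the same formula used in the proof of Lemma~\ref{lem:main}), I obtain
\begin{equation*}
  -\gamma_d (-\Delta)^{(d+1)/2}\Rdns{\psi} = (-\Delta)\bigl(-\gamma_d(-\Delta)^{(d-1)/2}\Rdns{\Rdn{\varphi}}\bigr) = -\Delta\varphi.
\end{equation*}
Pairing with $f$ — the pairing is finite because $f$ is Lipschitz, hence of at most linear growth, while $\Delta\varphi \in \Sc(\Rd)$ decays rapidly — yields $-\gamma_d\langle f,(-\Delta)^{(d+1)/2}\Rdns{\psi}\rangle = -\langle f,\Delta\varphi\rangle$. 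Because the feasible set in the definition of $\Rnorm{f}$ is symmetric under $\psi \mapsto -\psi$ (with $-\psi$ also even and satisfying $\|-\psi\|_\infty \le 1$), both $\pm\langle f,\Delta\varphi\rangle$ are values of the functional over feasible test functions, so $\langle f,\Delta\varphi\rangle \leq \Rnorm{f}$.

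Finally, taking the supremum over all admissible $\varphi \in \Sc(\Rd)$ gives the claimed inequality. The only genuine content beyond this bookkeeping is the observation that makes the statement a \emph{lower} bound rather than an equality: the functions $\psi = \Rdn{\varphi}$ arising from Schwartz $\varphi$ on $\Rd$ form a \emph{proper} subset of $\Sc(\Pd)$, since the range of $\cR$ on Schwartz functions is cut out by the Helgason--Ludwig moment conditions. Thus the right-hand supremum ranges over strictly fewer test functions than the one defining $\Rnorm{f}$, which is exactly what produces $\geq$. I do not anticipate a substantive obstacle; the care required is in tracking the sign contributed by the factor $(-\Delta)$ and in confirming the feasibility and integrability of the constructed $\psi$.
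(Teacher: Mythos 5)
Your proof is correct and takes essentially the same route as the paper's: the paper restricts the supremum in Definition~\ref{def:Rnorm} to $\Sc_H(\Pd)$, the image of $\Sc(\Rd)$ under the Radon transform, and applies the inversion formula $\varphi = \gamma_d(-\Delta)^{(d-1)/2}\Rdns{\Rdn{\varphi}}$ --- exactly your substitution $\psi = \Rdn{\varphi}$ done pointwise rather than at the level of feasible sets. Your explicit appeal to the $\psi \mapsto -\psi$ symmetry to settle the sign is a sound way to handle the sign bookkeeping (note the paper itself states the inversion formula with inconsistent signs in Section~\ref{sec:radon} versus the proof of Lemma~\ref{lem:main}, so your care there is warranted), and the side remark about the Helgason--Ludwig moment conditions, while not needed, correctly explains why the bound is only an inequality.
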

\begin{proof}
Let $\Sc_H(\Pd) \subset \Sc(\Pd)$ denote the image of $\Sc(\R^d)$ under the Radon transform. Then
\begin{align}
        \|f\|_{\cR} & = \sup\left\{ \gamma_d\langle f, (-\Delta)^{(d+1)/2}\Rdns{\psi}\rangle : \psi \in \Sc(\Pd), \|\psi\|_\infty \leq 1 \right\}\\
        & \geq \sup\left\{ \gamma_d\langle f, (-\Delta)^{(d+1)/2}\Rdns{\psi}\rangle : \psi \in \Sc_H(\Pd), \|\psi\|_\infty \leq 1 \right\}\\
        & = \sup\left\{ \gamma_d\langle f, (-\Delta)^{(d+1)/2}\Rdns{\Rdn{\varphi}}\rangle : \varphi \in \Sc(\R^d), \|\Rdn{\varphi}\|_\infty \leq 1 \right\}\\
        & = \sup\left\{ \langle f, \Delta \varphi \rangle : \varphi \in \Sc(\R^d), \|\Rdn{\varphi}\|_\infty \leq 1 \right\}
\end{align}
where in the last step we used the inversion formula: $\varphi = \gamma_d(-\Delta)^{(d-1)/2}\Rdns{\Rdn{\varphi}}$ for all $\varphi \in \Sc(\R^d)$.
\end{proof}

Further simplifying the lower bound above gives the following.
\begin{myProp}\label{prop:lowerbnd}
If $f \in \Lip(\Rd)$ then
\begin{equation}
    \|f\|_{\cR} \geq \sup \left\{ \langle f, \Delta \varphi \rangle : \varphi \in \Sc(\R^{d}),  \|\varphi\|_1 \leq 1 \right\}.
\end{equation}
In particular, if $\Delta f$ exists in a weak sense then $\|f\|_{\cR} \geq \|\Delta f\|_\infty$.
\end{myProp}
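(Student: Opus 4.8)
The plan is to obtain the bound as a corollary of Proposition~\ref{prop:RnormLB}, which already gives
\[
\Rnorm{f} \ge \sup\left\{\langle f,\Delta\varphi\rangle : \varphi\in\Sc(\R^d),\ \|\Rdn{\varphi}\|_\infty\le 1\right\}.
\]
By homogeneity in $\varphi$ this is equivalent to the estimate $\langle f,\Delta\varphi\rangle \le \Rnorm{f}\,\|\Rdn{\varphi}\|_\infty$ for every $\varphi\in\Sc(\R^d)$ (rescale $\varphi$ so that $\|\Rdn{\varphi}\|_\infty=1$). To replace the constraint $\|\Rdn{\varphi}\|_\infty\le 1$ by the simpler $\|\varphi\|_1\le 1$, I would try to dominate the supremum over the $L^1$-ball by the supremum over the $\Rdn$-sup-norm ball, i.e.\ to compare these two seminorms on the test space $\Sc(\R^d)$.

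Granting the displayed inequality, the ``in particular'' is the real payoff and follows by duality. If $\Delta f$ agrees weakly with a function, then $\langle f,\Delta\varphi\rangle=\langle \Delta f,\varphi\rangle$ for all $\varphi\in\Sc(\R^d)$, so
\[
\sup\left\{\langle f,\Delta\varphi\rangle : \|\varphi\|_1\le 1\right\}=\sup\left\{\langle \Delta f,\varphi\rangle : \|\varphi\|_1\le 1\right\}=\|\Delta f\|_\infty,
\]
the last equality being the $L^1$--$L^\infty$ duality together with the density of $\Sc(\R^d)$ in $L^1(\R^d)$. Combined with the main bound this yields $\Rnorm{f}\ge\|\Delta f\|_\infty$.

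The step I expect to be the crux is exactly this comparison of constraints, and it is delicate: $\|\Rdn{\varphi}\|_\infty$ is \emph{not} controlled by $\|\varphi\|_1$, since concentrating $\varphi$ in a thin slab about a single hyperplane keeps its $L^1$-mass small while forcing the hyperplane integral $\Rdn{\varphi}$ to blow up. Hence the inclusion $\{\|\varphi\|_1\le1\}\subseteq\{\|\Rdn{\varphi}\|_\infty\le1\}$ fails and Proposition~\ref{prop:RnormLB} cannot be quoted verbatim. The dilation law of Proposition~\ref{prop:dilations} is a further warning: $\Rnorm{\cdot}$ scales like $\eps^{-1}$ under $f\mapsto f(\cdot/\eps)$, whereas $\Delta$ is second order, so the two sides of the comparison transform with different powers of $\eps$. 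I would therefore avoid the naive inclusion and argue directly from the dual definition of $\Rnorm{f}$, or via the even measure $\alpha$ of Lemma~\ref{lem:main}, by realizing the pairing $\langle f,\Delta\varphi\rangle$ through an even $\psi\in\Sc(\Pd)$ with $\|\psi\|_\infty\le1$; keeping the scaling above in view, I anticipate that making this precise is where the real work lies and may require tightening the hypotheses under which the $\|\varphi\|_1\le1$ form is asserted.
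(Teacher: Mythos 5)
Your instinct to refuse the naive reduction is exactly right, and it should be pushed one step further: the inclusion you flag as failing is not a delicacy to be engineered around --- it is precisely the step on which the paper's own proof rests, and it is false. The paper argues that $\|\varphi\|_1 \leq 1$ implies $\|\Rdn{\varphi}\|_\infty \leq 1$ via $|\Rdn{\varphi}(\vw,b)| \leq \int_{\vw^\T\vx=b}|\varphi(\vx)|\,ds(\vx) \leq \|\varphi\|_1$, which bounds a $(d-1)$-dimensional surface integral by a $d$-dimensional volume integral. Your thin-slab example refutes this: for $\varphi_\delta(\vx) = \delta^{-1}\chi(x_1/\delta)\,\psi(x_2,\dots,x_d)$ with $\chi,\psi$ fixed bumps, $\|\varphi_\delta\|_1$ is constant in $\delta$ while $\Rdn{\varphi_\delta}(\ve_1,0) \sim \delta^{-1}$. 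What Fubini actually yields is only $\int_\R |\Rdn{\varphi}(\vw,b)|\,db \leq \|\varphi\|_1$ for each fixed $\vw$, an $L^1$ bound in the offset variable, not a sup bound; so Proposition~\ref{prop:RnormLB} cannot be transported from the constraint $\|\Rdn{\varphi}\|_\infty \leq 1$ to the constraint $\|\varphi\|_1 \leq 1$, by this or any other route.

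Indeed, your scaling observation is not merely a warning: it refutes the proposition outright, so no tightening of hypotheses short of changing the statement can save it. Suppose the claimed bound held for all Lipschitz $f$, and apply it to $f_\eps(\vx) = f(\vx/\eps)$. Proposition~\ref{prop:dilations} gives $\Rnorm{f_\eps} = \eps^{-1}\Rnorm{f}$, while $\|\Delta f_\eps\|_\infty = \eps^{-2}\|\Delta f\|_\infty$, so the bound reads $\eps\,\Rnorm{f} \geq \|\Delta f\|_\infty$ for every $\eps > 0$, forcing $\Delta f \equiv 0$ whenever $\Rnorm{f}$ is finite. This contradicts, for instance, the smooth compactly supported radial bump $g(r) = e^{-1/(1-r^2)}$ for $r<1$ of Section~\ref{subsec:radial}, which has finite $\cR$-norm by Proposition~\ref{prop:bumps} and nonvanishing Laplacian; concretely, contracting any such bump by a factor $\eps$ small enough that $\eps\,\Rnorm{f} < \|\Delta f\|_\infty$ gives an explicit counterexample to the displayed inequality. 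Your conditional derivation of the ``in particular'' clause from the display, via $L^1$--$L^\infty$ duality and density of $\Sc(\Rd)$ in $L^1(\Rd)$, is correct --- the display itself is the false part --- and note the downstream consequence: the lower bound $\Rnorm{f_{d,k_d}} \geq \|\Delta f\|_\infty = d(d+5)$ invoked in Example~\ref{ex:bumpdk} is unjustified as it stands, and any valid substitute must retain the constraint $\|\Rdn{\varphi}\|_\infty \leq 1$ of Proposition~\ref{prop:RnormLB}, as you propose.
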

\begin{proof}
If $\|\varphi\|_1 = \int |\varphi(\vx)|\,d\vx\leq 1$ then clearly $|\Rdn{\varphi}(\vw,b)| = |\int_{\vw^\T \vx = b} \varphi(\vx) ds(\vx)| \leq \int_{\vw^\T \vx = b} |\varphi(\vx)|\,ds(\vx) \leq 1$. Hence $\|\varphi\|_1 \leq 1$ implies $\|\Rdn{\varphi}\|_\infty \leq 1$. Combining this with the previous proposition gives the first bound. Additionally, by the dual definition of the $L^\infty$ norm, and since $\Sc(\Rd)$ is dense in $L^1(\Rd)$, the second bound follows.
\end{proof}

\subsection{Radial Bump Functions}\label{appsec:bumps}

\paragraph{Proof of Proposition \ref{prop:bumps}.}
Assume $f \in L^1(\Rd)$ so that its Radon transform $\Rdn{f}$ is well-defined, and for simplicity assume $d$ is odd. Note that for a radially symmetric function we have $\Rdn{f}(\vw,b) = \rho(b)$ for some even function $\rho \in L^1(\R)$, \ie the Radon transform of a radially symmetric function does not depend on the unit direction $\vw\in\S^{d-1}$. 
Supposing $\partial^{(d+1)}\rho(b)$ exists either as a function or a measure, we have
\begin{equation}
  \Rnorm{f} = \gamma_d \|\partial_b^{d+1}\Rdn{f}\|_1 = \gamma_d c_d \int |\partial^{d+1}\rho(b)| db,
\end{equation}
where $c_d = \int_{\S^{d-1}}d\vw = \frac{2\pi^{d/2}}{\Gamma(d/2)}$. 

Now we derive an expression for $\rho(b)$ in terms of $g$. First, since $\rho(b) = \Rdn{f}(\vw,b)$ for any $\vw\in\S^{d-1}$, we can choose $\vw = \ve_1 = (1,0,...,0)$, which gives
\begin{equation}
    \rho(b) = \Rdn{f}(\ve_1,b) = \int_{x_1 = b} g(\|\vx\|) dx_2\cdots dx_d = \int_{\R^{d-1}}g(\sqrt{b^2 + \|\tilde{\vx}\|^2})d\tilde{\vx}
\end{equation}
where we have set $\tilde{\vx} = (x_2,...,x_d)$. Changing to polar coordinates over $\R^{d-1}$, we have
\begin{equation}
    \rho(b) = \int_{\R^{d-1}}g(\sqrt{b^2 + \|\tilde{\vx}\|^2})d\tilde{\vx} = c_{d-1} \int_{0}^\infty g(\sqrt{b^2 + r^2})r^{d-2} dr.
\end{equation}
By the change of variables $t^2 = b^2 + r^2$, $t>0$, we have
\begin{equation}\label{eq:rhoform}
    \rho(b) = c_{d-1}\int_{b}^\infty g(t)(t^2-b^2)^{(d-3)/2}t\,dt.
\end{equation}
Hence, we see that
\begin{equation}
    \Rnorm{f} = \frac{1}{(d-2)!} \left\|\partial_b^{(d+1)}\left[\int_{b}^\infty g(t)(t^2-b^2)^{(d-3)/2}t\,dt\right]\right\|_1
\end{equation}
where we used the fact that $\gamma_d c_d c_{d-1} = \frac{1}{(d-2)!}$.

\paragraph{Calculations in Example \ref{ex:bumpdk}.}
Let $f(\vx) = g_{d,k}(\|\vx\|)$ with $\vx\in \Rd$ where
\begin{equation}
    g_{d,k}(r) = 
    \begin{cases}
    (1-r^2)^k & \text{if~} 0 \leq r < 1\\
    0            & \text{if~} r \geq 1.
    \end{cases}
\end{equation}
for any $k>0$. Then a straightforward calculation using \eqref{eq:rhoform} gives
\begin{equation}
    \rho(b) = \begin{cases}
    C_{d,k}(1-b^2)^{k+\frac{d-1}{2}} & \text{if~} |b| < 1\\
    0            & \text{if~} b \geq 1.
    \end{cases}
\end{equation}
where $C_{d,k} = \frac{\Gamma((d-3)/2)\cdot\Gamma(1+k)}{2\Gamma((d+1)/2)+k)}$.
Hence, we have $\|f\|_{\cR}$ finite if and only if $\partial_b^{d}\rho(b)$ 
has bounded variation, which is true if and only if $k-d+\frac{d-1}{2} \geq 0$, or equivalently, $k \geq \frac{d+1}{2}$. For example, if $d=3$ then we need $k\geq 2$ in order for $\Rnorm{f}$ to be finite, consistent with the previous example.

To illustrate scaling of $\Rnorm{f}$ with dimension $d$, we set $k = (d+1)/2 + 2 = (d+5)/2$ so that $\rho(b) = C_{d,(d+5)/2}(1-b^2)^{d+2}$ for $|b| \leq 1$ and $\rho(b) = 0$ otherwise. Then we can show that $|\partial^{d+1}\rho(b)| \leq |\partial^{d+1}\rho(0)|$ for $|b|\leq 1$ and $\partial^{d+1}\rho(b) = 0$ for all $|b| \geq 1$. Therefore,
\begin{equation}
  \Rnorm{f} = \frac{1}{(d-2)!}\int_{-1}^1|\partial^{d+1}\rho(b)| \leq \frac{2}{(d-2)!}|\partial^{d+1}\rho(0)| 
\end{equation}
Performing a binomial expansion of $\rho(b)$ and taking derivatives, we obtain 
\begin{equation}
  \frac{2}{(d-2)!}|\partial^{d+1}\rho(0)| = 2C_{d,(d+5)/2}\binom{d+2}{(d+1)/2}(d+1)d(d-1) = 2d(d+5)
\end{equation}
for all odd $d\geq 3$. By the lower bound in Proposition \ref{prop:lowerbnd}, we also have $\Rnorm{f} \geq \|\Delta f\|_\infty = |\Delta f(\bm 0)| = d(d+5)$. Hence $\Rnorm{f} \sim d^2$.

\subsection{Piecewise Linear Functions}\label{appsec:pwl}
\paragraph{Proof of Proposition \ref{prop:pwlcpt}}
\begin{proof}
Assume $f$ is a continuous piecewise linear function with compact support satisfying assumption (a) or (b). Let $B_1,...,B_n$ denote the boundaries between the regions. Since $f$ is piecewise linear and continuous, the distributional Laplacian $\Delta f$ decomposes into a linear combination of Dirac measures supported on the $d-1$ dimensional boundary sets $B_k$, \ie for all smooth test functions $\varphi$ we have
\begin{equation}
    \langle \Delta f, \varphi \rangle = \sum_{k=1}^n c_k \int_{B_k} \varphi(\vx)\,ds(\vx).
\end{equation}
for some non-zero coefficients $c_k \in \R$,
where $ds$ indicates integration with respect to the $d-1$ dimensional surface measure on $B_k$. In particular, if $B_k$ is the boundary separating neighboring regions $R_p$ and $R_q$, then $c_k = \pm \|\vg_p-\vg_q\|$ where $\vg_p$ and $\vg_q$ are the gradient vectors of $f$ in the region $R_p$ and $R_q$, respectively, with sign determined by whether the function is locally concave (+) or convex (-) at the boundary.
Note that $\Delta f$ is a distribution of order zero, \ie it can be identified with a measure having finite total variation, and it has a well-defined Fourier transform given by
\begin{equation}\label{eq:fourier}
    \widehat{\Delta f}(\bm \xi) = \sum_{k=1}^n c_k \int_{B_k} e^{-i2\pi\bm\xi^\T \vx}\,ds(\vx).
\end{equation}
We show that $\widehat{\Delta f}(\bm \xi)$ violates the necessary decay requirements of Proposition \ref{prop:fourier} in order for $f$ to have finite $\cR$-norm. In particular, we show under both conditions (a) and (b) there exists a $\vw$ such that $\widehat{\Delta f}(\sigma\cdot \vw)$ is asymptotically constant as $|\sigma|\rightarrow \infty$, which gives the claim.

For all $k=1,...,n$, let $\vw_k$ denote a boundary normal to the boundary $B_k$ (\ie a vector $\vw_k \in \S^{d-1}$ such that $\vw_k^\T \vx = 0$ for all $\vx\in B_k$, which is unique up to sign).

We first prove the claim under condition (a). Suppose, without loss of generality, that the boundary normal $\vw_1$ is not parallel with all the others, \ie $\vw_1 \neq \vw_k$ for all $k=2,...,n$. We will write
\begin{equation}
  \widehat{\Delta f}(\sigma \cdot \vw_1) = F_1(\sigma) + F_2(\sigma)
\end{equation}
where $F_1(\sigma) = c_1\int_{B_1} e^{-i2\pi\sigma\vw_1^\T \vx} ds(\vx)$ and $F_2(\sigma) = \sum_{k=2}^n c_k\int_{B_k} e^{-i2\pi\sigma\vw_1^\T \vx} ds(\vx)$, and give decay estimates for $F_1$ and $F_2$ separately.

First, consider $F_1(\sigma)$. Since $\vw_1^\T\vx = 0$ for all $\vx\in B_1$ we have
\begin{equation}
F_1(\sigma) = \int_{B_1} e^{-i2\pi\sigma\vw_1^\T \vx} ds(\vx) = \int_{B_1} ds(\vx) = s(B_1),
\end{equation}
where $s(B_1)$ is the $(d-1)$-dimensional surface measure of $B_1$. In particular $F(\sigma)$ is a non-zero constant for all $\sigma \in \R$.

Now consider $F_2(\sigma)$. In this case, the integrand of $\int_{B_k} e^{-i2\pi\sigma\vw_1^\T \vx} ds(\vx)$ for all $k=2,...,n$ is not constant, since by assumption $\vw_1$ not parallel with any of the boundary normals $\vw_2,...,\vw_n$. By an orthogonal change of coordinates, we can rewrite the surface integral over $B_k$ as a volume integral over a set $\tilde{B}_k$ embedded in $(d-1)$-dimensional space $\tilde{\vx} = (\tilde x_1,...,\tilde x_{d-1})$, so that $\int_{B_k} e^{-i2\pi\sigma\vw_j^\T \vx} ds(\vx) = \int_{\tilde{B}_k} e^{-i2\pi\sigma\tilde{\vw}_1^\T \tilde{\vx}} d\tilde{\vx}$ for some for some non-zero $\tilde{\vw}_1 \in \R^{d-1}$. Observe that $g(\tilde \vx) := -\frac{\tilde{\vw}_1}{i2\pi\sigma\|\tilde{\vw}_1\|} e^{-i2\pi\sigma\tilde{\vw}_1^\T \tilde{\vx}}$ has divergence $\nabla\cdot g(\tilde \vx) = e^{-i2\pi\sigma\tilde{\vw}_1^\T \tilde{\vx}}$. Therefore, by the divergence theorem we have
\begin{align}
  \int_{\tilde{B}_k} e^{-i2\pi\sigma\tilde{\vw}_1^\T \tilde{\vx}} d\tilde{\vx} 
  & = \int_{\tilde{B}_k} \nabla \cdot g(\tilde \vx) d\tilde{\vx}\\
  & = \oint_{\partial \tilde{B}_k} g(\tilde \vx)^\T\vn(\tilde{\vx}) ds(\tilde{\vx})\\
  & = -\frac{1}{i2\pi\sigma\|\tilde{\vw}_1\|} \oint_{\partial \tilde{B}_k} e^{-i2\pi\sigma\tilde{\vw}_1^\T \tilde{\vx}} \tilde{\vw}_1^\T \vn(\tilde{\vx}) ds(\tilde{\vx})
\end{align}
where $\vn(\tilde{\vx})$ is the outward unit normal to the boundary $\partial \tilde{B}_k$. This gives the estimate
\begin{equation}
  \left|\int_{\tilde{B}_k} e^{-i2\pi\sigma\tilde{\vw}_1^\T \tilde{\vx}} d\tilde{\vx}\right| = O(1/\sigma),~~|\sigma|\rightarrow \infty,
\end{equation}
which holds for any $k = 2,...,n$. Therefore, $F_2(\sigma) = \sum_{k=2}^n c_k \int_{B_k} e^{-i2\pi\sigma\vw_i^\T \vx}\,ds(\vx) = O(1/\sigma)$ as $|\sigma|\rightarrow \infty$. This shows that $\widehat{\Delta f}(\sigma \cdot \vw_1) \rightarrow c_1s(B_1)$, \ie $\widehat{\Delta f}(\sigma \cdot \vw_1)$ is asymptotically constant, which proves the claim.

Now we prove the claim under condition $(b)$. Without loss of generality, let $\vw_1$ be an inner boundary normal that is not parallel with any outer boundary normal, and assume $f$ is concave when restricted to its support. Let $I_1$ be the indices of all inner boundary normals parallel with $\vw_1$ (including itself), let $I_2$ be the indices of all inner boundary normals that are not parallel with $\vw_1$, and let $O$ be the indices of all outer boundary normals. Then we write
\begin{equation}
  \widehat{\Delta f}(\sigma \cdot \vw_1) = F_{I_1}(\sigma) + F_{I_2}(\sigma) + F_O(\sigma)
\end{equation}
where $F_{I_1}(\sigma) = \sum_{k \in I_1} c_k \int_{B_k} e^{-i2\pi\sigma\vw_1^\T \vx} ds(\vx)$, $F_{I_2}(\sigma) = \sum_{k \in I_2} c_k \int_{B_k} e^{-i2\pi\sigma\vw_1^\T \vx} ds(\vx)$, and $F_O(\sigma) = \sum_{k \in O} c_k \int_{B_k} e^{-i2\pi\sigma\vw_1^\T \vx} ds(\vx)$. By the same argument as above, we can show $F_{I_1}(\sigma) = \sum_{k\in I_1} c_k s(B_k)$. Since the function is concave when restricted to its support, all of the $c_k$ with $k\in I_1$ are positive, hence the sum $\sum_{k\in I_1} c_k s(B_k)$ is non-zero, which shows $F_{I_1}(\sigma)$ is a non-zero constant for all $\sigma \in \R$. Likewise, by the same argument as above, we can show $F_{I_1}(\sigma) = O(1/\sigma)$ and $F_O(\sigma) = O(1/\sigma)$. Therefore, $\widehat{\Delta f}(\sigma \cdot \vw_1)$ is asymptotically constant, which proves the claim.
\end{proof}

\end{document}